%% file: main.tex
\documentclass{article}
\usepackage{iclr2027_conference,times}
\usepackage{amsmath,amssymb,amsthm,mathtools}
\usepackage{microtype,graphicx,booktabs,tabularx,enumitem,needspace}
\usepackage{placeins}
\usepackage{tikz}
\usetikzlibrary{arrows.meta,positioning}
\usepackage{hyperref,url}
\usepackage[nameinlink,capitalise,noabbrev]{cleveref}
\newtheorem{theorem}{Theorem}
\newtheorem{lemma}[theorem]{Lemma}
\newtheorem{proposition}[theorem]{Proposition}

\theoremstyle{definition}
\newtheorem{definition}[theorem]{Definition}
\theoremstyle{remark}

\newcommand{\R}{\mathbb{R}}
\newcommand{\N}{\mathbb{N}}
\newcommand{\supp}{\operatorname{supp}}
\newcommand{\spann}{\operatorname{span}}
\newcommand{\conv}{\operatorname{conv}}
\newcommand{\relint}{\operatorname{relint}}
\newcommand{\diag}{\operatorname{diag}}
\newcommand{\dist}{\operatorname{dist}}
\newcommand{\argmin}{\operatorname*{arg\,min}}
\newcommand{\wstar}{w^\star}
\newcommand{\LD}{L_D^\star}
\newcommand{\LDS}[1]{L_D^\star(#1;A^\star)}
\newcommand{\GammaDN}[1]{\Gamma_d(#1)}

\title{Weighted Data Selection:\\Sharp Upper-Half and Five-Dimensional Laws}
\author{Zhongxuan Liu \qquad Hongzhi Wang\thanks{Corresponding author.}\\
Faculty of Computing, Harbin Institute of Technology\\
\texttt{lazrix@163.com} \quad \texttt{wangzh@hit.edu.cn}}
\iclrfinalcopy
\hypersetup{pdftitle={Weighted Data Selection: Sharp Upper-Half and Five-Dimensional Laws},pdfauthor={Zhongxuan Liu, Hongzhi Wang},pdfsubject={}}
\begin{document}
\maketitle
\lhead{Preprint}
\begin{abstract}
How much risk does a small reweighted training support retain? For finite
weighted least squares with the minimum-norm learner, we prove the exact law
$\Gamma_d(n)=3-n/d$ throughout $\lceil3d/2\rceil\leq n\leq2d-1$.
The guarantee covers every observed feature rank and uses selections that
preserve the full feature span. Balanced simplex anchors reduce dimension;
positive-weight lifting and independent-line compression close the risk
bound. Shifted coordinate pairs attain the matching lower bound. The
complete dataset-level upper bound and sharpness construction are verified
in Lean~4. At the smaller budget $(d,n)=(5,6)$, we also prove
$\Gamma_5(6)=11/5$, matching the simplex-block prediction from $5=3+2$
over arbitrary interacting configurations. Circuit covers, comparison
second moments, and circuit-plane probabilities give the sharp excess
$6/5$, while polar-face geometry resolves shared rank-three circuits.
The general simplex-block frontier connects these laws within the
intermediate-budget selection problem.
\end{abstract}
\input{sections/main_text}
\subsection*{Reproducibility statement}
Appendices~\ref{app:proofs} and~\ref{app:five-six} contain complete proofs,
including zero residuals, deficient ranks, and positive-weight limits.
Appendix~\ref{app:verification} maps the supplied symbolic checker to the
algebra it tests and gives reproduction commands. The checker verifies the stated finite algebraic identities.
Appendix~\ref{app:upper-half} gives the upper-half proof and its Lean~4
theorem map; the supplied formal project checks the full statement and
its foundational dependencies. Appendix~\ref{app:landscape} proves the block
frontier and gives the matrix formulation for further intermediate budgets.
\label{end:main}
\bibliographystyle{iclr2027_conference}
\bibliography{references}
\clearpage
\appendix
\numberwithin{equation}{section}
\section{Full certificate statements and boundary proofs}
\label{app:proofs}
\subsection{Exact certificates and intrinsic-rank rigidity}
\label{sec:geometry}
\input{appendix/geometry}
\subsection{Exact one-dimensional compression}
\label{sec:one-dimensional}
\input{appendix/one_dimensional}
\subsection{Instance-dependent and sharp boundary laws}
\label{sec:boundary}
\input{appendix/sharp_boundary}
\input{appendix/proofs}
\clearpage
\section{Complete proof of the five-dimensional six-point law}
\label{app:five-six}
\input{appendix/five_dimensional_proof}
\section{Complete proof of the upper-half law}
\label{app:upper-half}
\input{appendix/upper_half_verified}
\section{The block frontier and the matrix formulation}
\label{app:landscape}
\input{appendix/landscape}
\section{Exact verification and proof index}
\label{app:verification}
\input{appendix/verification}
\end{document}

%% file: sections/main_text.tex
\section{Introduction}
\label{sec:introduction}
Data selection changes the training data while keeping the learner fixed.
For least squares, a selector can retain a few original examples and assign
them new positive weights. The question is how well the resulting predictor
performs on the full dataset. This separates the information carried by a
small training support from the computational method used to find it.
In the very small-support regime, even the rule for choosing among multiple
least-squares solutions matters: balancing the selected gradients need
not recover the full solution if the selected feature span loses rank.

We study exact worst-case risk inflation for the minimum-norm learner.
The evaluation weights $p$ specify the target finite distribution; they
may encode multiplicities or importance weights. The training weights
$q$ are the selector's decision variables. This distinction lets a small
support represent a nonuniform target without changing the evaluation
criterion. The guarantee concerns this finite-distribution risk, rather
than population generalization or the running time of a selection algorithm.

We establish two sharp laws for arbitrary weighted configurations.
Throughout the upper half of the intermediate-budget interval,
$\Gamma_d(n)=3-n/d$ for $\lceil3d/2\rceil\leq n\leq2d-1$.
Balanced anchors and full-span lifting give a dimension induction whose
complete dataset-level statement is verified in Lean~4. At the smaller
budget $(d,n)=(5,6)$, the unrestricted bound $\Gamma_5(6)\leq11/5$
matches a seven-point example. This second result handles intersecting
positive circuits through more detailed five-dimensional geometry.

\begin{figure}[t]
\centering
\includegraphics[width=\textwidth]{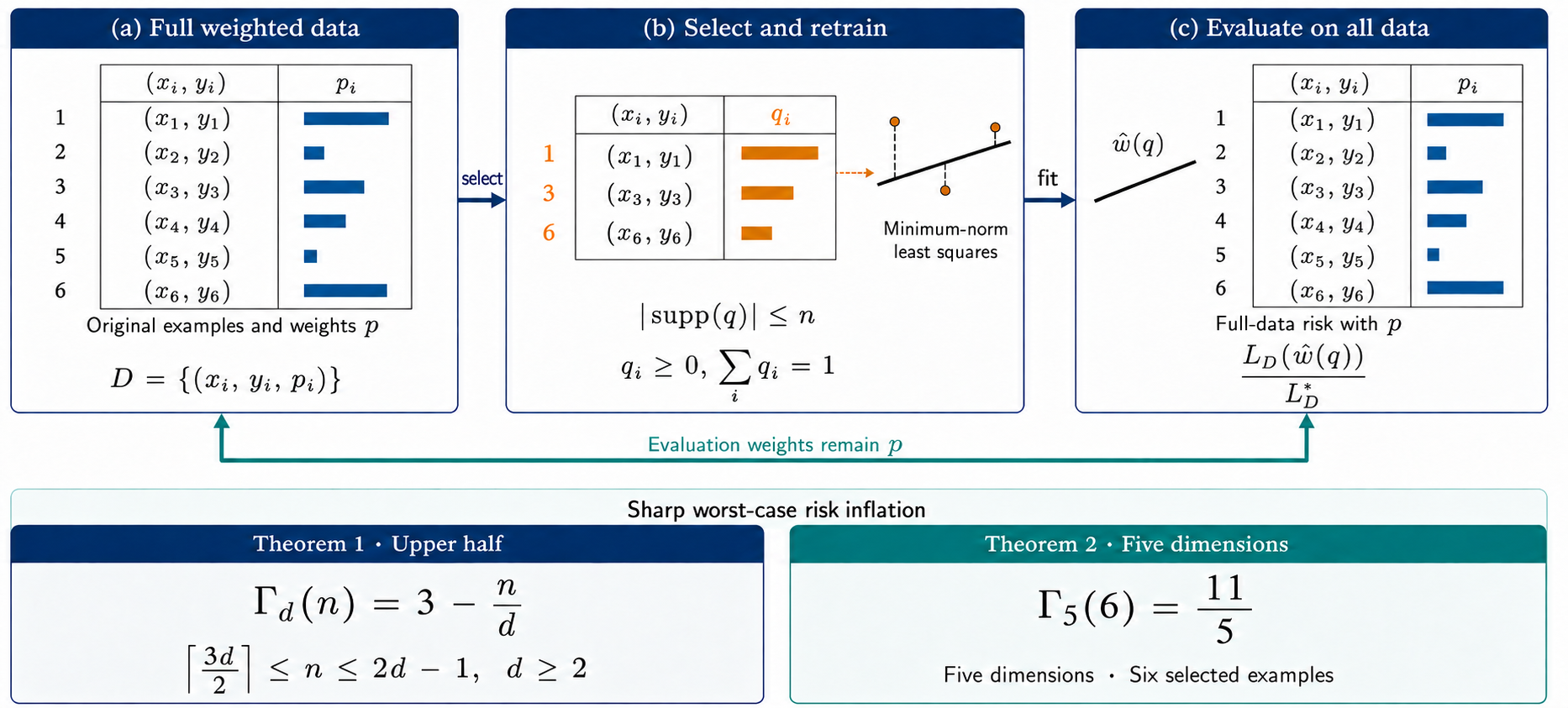}
\caption{Weighted selection with a fixed learner. A support of at most $n$
original examples is reweighted for minimum-norm least-squares training;
evaluation uses the complete dataset and its original weights $p$.
The two sharp laws cover the upper-half budget interval and the
five-dimensional six-example problem, respectively. Tables and fitted
lines are schematic; selected rows retain their original features and responses.}
\label{fig:selection-problem}
\end{figure}

\paragraph{Prior results and the precise addition.}
\citet{hanneke2025data} established the uniform-evaluation endpoints
$\Gamma_d(d)=d+1$ and exact recovery at $2d$; the companion open problem
asked about intermediate budgets \citep{hanneke2025openproblem}.
Table~\ref{tab:positioning} separates the results used here from the
five-dimensional upper bound. In particular, the value $11/5$ as a lower bound is already
present in \citet{zhang2026exact}, version~1, via its simplex-block
construction. That work also proves the boundary law and the four-dimensional
five-point law. We use its latter theorem as an input. Our contribution
at $(5,6)$ is the universal matching upper bound, together with a
self-contained weighted, minimum-norm-safe matching example.

\begin{table}[t]
\centering
\caption{Result-level positioning relative to \citet{zhang2026exact},
version~1. Its original evaluation model is uniform. The weighted
transfer of the four-dimensional input is proved in
Appendix~\ref{app:five-transfer}.}
\label{tab:positioning}
\small
\begin{tabularx}{\linewidth}{@{}lXX@{}}
\toprule
Result & Zhang (2026), v1 & Role in this paper \\
\midrule
$\Gamma_d(2d-1)=1+1/d$ & Proved & Weighted certificate proof and rigidity \\
$\Gamma_4(5)=2$ & Proved (Theorem 47) & Lower-dimensional input \\
$\Gamma_5(6)\geq11/5$ & Known lower bound & Explicit attained weighted example \\
$1+\Phi_{d,s}$ & Block bound and structured frontier & Weighted formulation and connection to $(5,6)$ \\
$\Gamma_5(6)\leq11/5$ & Left open & Universal upper bound \\
\bottomrule
\end{tabularx}
\end{table}

\Needspace{4\baselineskip}
\paragraph{A training-realization viewpoint.}
The proof asks which low-risk geometric points can actually be returned
by the stipulated learner. A positive circuit of rank $r$ has $r+1$
rows whose positive weighted sum is zero. Combining it with $5-r$
independent interpolating rows gives a six-row full-rank training
problem. This interface allows polar geometry, comparison second moments,
and circuit-plane graphs to control the risk of actual fits.
The main text derives both sources of the sharp excess $6/5$ and states
the exhaustive classification connecting them. The appendix retains the
full shared-circuit analysis rather than assuming that all instances split
into independent blocks.

The same concern with realizability motivates our complementary
certificate theory. Full-span certificate complexity identifies when
exact recovery is available and when maximal complexity forces independent
one-dimensional blocks. Exact one-point penalties on those blocks yield
the weighted boundary law and its extremal conditions. Thus the boundary
and interior results share an interface between positive dependence and
the output of a fixed learner, while requiring different geometry.

\paragraph{Relation to approximation and sampling.}
Regression coresets approximate a full objective or its solution
\citep{feldman2011unified,drineas2006sampling,dasgupta2009sampling,
boutsidis2013nearoptimal}. Sketching and leverage-based methods provide
computational and statistical guarantees
\citep{clarkson2009streaming,clarkson2017input,pilanci2015randomized,
ma2015leveraging,raskutti2016sketching}; volume sampling gives randomized
subset guarantees \citep{derezinski2018volume}. Here we instead determine
a deterministic infimum over reweighted original supports at a fixed
small budget, including deficient-rank designs. Positive-basis theory
\citep{davis1954positive,naevdal2019positive} supplies geometric inputs;
the additional requirement is to implement the resulting certificate
by positive-weight training with the correct minimum-norm output.

\section{Problem and sharp intermediate-budget laws}
\label{sec:formulation}
Let $D=\{(x_i,y_i,p_i)\}_{i=1}^N$, with $x_i\in\R^d$, $y_i\in\R$,
$p_i>0$, and $\sum_i p_i=1$. Write $[N]=\{1,\ldots,N\}$ and define
\[
 L_D(w)=\sum_i p_i(x_i^\top w-y_i)^2,\qquad
 \wstar=\argmin_{\text{minimum norm},\,w}L_D(w),\qquad \LD=L_D(\wstar).
\]
Minimum norm is the generalized-inverse least-squares convention
\citep{penrose1955inverse,penrose1956best}. Set
$U=\spann\{x_i\}$; then $\wstar\in U$.
For $q\in\Delta_N=\{q\geq0:\sum_iq_i=1\}$, let $\widehat w(q)$ be the
minimum-norm minimizer of $\sum_iq_i(x_i^\top w-y_i)^2$. Only positive
support entries count toward the budget. Define
\begin{equation}
 \LDS{n}=\inf_{\substack{q\in\Delta_N\\|\supp(q)|\leq n}}
 L_D(\widehat w(q)),
 \label{eq:selected-risk}
\end{equation}
\begin{equation}
 \Gamma_d(n)=\sup_{D:\LD>0}\frac{\LDS{n}}{\LD}.
 \label{eq:gamma}
\end{equation}
The evaluation weights stay equal to $p$, even though training uses $q$.
We keep an infimum because minimum-norm solutions can jump at a rank
change. When $\LD=0$, selecting a basis of $U$ with arbitrary positive
weights recovers $\wstar$: the selected solutions are
$\wstar+U^\perp$, whose minimum-norm point is $\wstar$. Thus zero optimal
risk is recovered at budgets $n\geq d$; the zero-rank case is immediate.

\begin{theorem}[Upper-half intermediate-budget law]
\label{thm:upper-half}
For $d\geq2$ and $1\leq t\leq\lfloor d/2\rfloor$,
\[
 \Gamma_d(2d-t)=1+\frac td,
 \qquad\text{equivalently}\qquad
 \Gamma_d(n)=3-\frac nd\quad(\lceil3d/2\rceil\leq n\leq2d-1).
\]
For every finite weighted instance with $\LD>0$ and every $\varepsilon>0$,
there is $q\in\Delta_N$ with $|\supp(q)|\leq2d-t$ and $U_q=U$ such that
$L_D(\widehat w(q))\leq(1+t/d+\varepsilon)\LD$.
Uniform pairs $(e_j,1),(e_j,3)$ attain the matching lower bound.
\end{theorem}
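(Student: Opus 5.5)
The plan splits the statement into the matching lower bound, which is a direct computation, and the universal upper bound, which I would obtain by an averaging argument built on the one-point penalties from the boundary analysis (Appendix~\ref{sec:one-dimensional} and Appendix~\ref{sec:boundary}).

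\textbf{Lower bound.} Take the uniform instance with the $2d$ rows $(e_j,1),(e_j,3)$, each of weight $1/(2d)$. Then $\wstar=2\sum_j e_j$ and $\LD=1$.

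A support of size at most $2d-t$ is analysed coordinate by coordinate, using the disjoint coordinate structure.
\begin{itemize}[nosep]
\item If coordinate $j$ keeps both of its rows, the fit is $w_j=2$, and the coordinate contributes $1/d$.
\item If it keeps exactly one row, the fit is $w_j\in\{1,3\}$, and the coordinate contributes $(0+4)/(2d)=2/d$.
\item If it keeps no row, minimum norm forces $w_j=0$, and the coordinate contributes $(1+9)/(2d)=5/d$.
\end{itemize}
Suppose $a$ coordinates keep one row and $b$ keep none. The budget gives $a+2b\geq t$, so the risk is at least
\[
(d-a-b)/d+2a/d+5b/d \;\geq\; 1+(a+2b)/d \;\geq\; 1+t/d .
\]

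\textbf{Upper bound.} I would reduce to the case $\LD>0$ with $U$ of full rank $r=\dim U$. If $r<d$, one works inside $U$, where the claimed bound $1+t/d\geq 1+t/r\cdot(r/d)$ becomes weaker. The argument must still use selections with $U_q=U$, so that minimum norm returns a point of $U$.

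The core step is to produce $d$ (or $r$) nonnegative \emph{direction penalties} $\pi_1,\dots,\pi_r$ with $\sum_k\pi_k\leq\LD$. For each set $S$ of $t$ directions, there should be a support of size at most $2r-|S|$, spanning $U$, whose minimum-norm fit has risk at most $\LD+\sum_{k\in S}\pi_k+\varepsilon$. Choosing $S$ to consist of the $t$ smallest penalties then gives excess at most $(t/r)\LD\leq(t/d)\LD$, which is the claim.

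To build the penalties, I would use balanced simplex anchors. I would pick an exact $2r$-point full-span certificate that recovers $\wstar$, for instance via the exact-recovery theory at budget $2d$ that underlies Appendix~\ref{sec:geometry}. I would then organise its rows into $r$ one-dimensional "lines", each consisting of a pair of rows whose residuals balance along one direction.

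Dropping one row from a line is exactly the one-dimensional compression of Appendix~\ref{sec:one-dimensional}. It yields an exact one-point penalty $\pi_k$, bounded by the risk carried by that line. Because the lines are independent after anchoring, the excesses from compressing several lines simultaneously add.

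Positive-weight lifting handles two issues. First, the limiting weights may vanish or blow up, which costs the $\varepsilon$ in the statement. Second, it guarantees that the selected rows still span $U$, so that the minimum-norm learner outputs the intended point rather than jumping at a rank drop.

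The constraint $t\leq\lfloor d/2\rfloor$ enters through anchoring. Each compressed line must retain one row, and the anchors removed by dimension reduction must not collide. Concretely, the $t$ compressed lines should be attachable to distinct anchor pairs, and this requires $2t\leq d$. I would organise this as an induction on $d$ with base case the boundary law $\Gamma_d(2d-1)=1+1/d$. At each step, one balanced simplex anchor removes a dimension while the budget deficit $t$ is kept.

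\textbf{Main obstacle.} The hard step is showing that the line penalties exist with $\sum_k\pi_k\leq\LD$ for \emph{arbitrary} interacting configurations, not only block-diagonal ones. The anchor decomposition must be balanced so that every direction's compression penalty is charged to a disjoint share of the optimal residual.

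I would first try to obtain this from the weighted certificate and the rigidity statement of Appendix~\ref{sec:geometry}. Whenever the certificate has maximal complexity, it forces independent one-dimensional blocks. Otherwise exact recovery is available with fewer points, and that saved budget absorbs the deficit $t$.
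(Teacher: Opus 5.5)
Your lower bound is correct. One small fix: when both rows of a coordinate are kept, $w_j$ can lie anywhere in $(1,3)$, so that coordinate contributes \emph{at least} $1/d$, which is all you use. Your maximal-complexity branch also matches the paper's line compression. The upper bound, however, has a genuine gap in the intermediate regime $2d-t<\kappa(D)<2d$. Your closing dichotomy is false for $t\geq2$: the claim that maximal complexity forces independent lines, and that otherwise exact recovery uses fewer points and ``the saved budget absorbs $t$.'' A minimum certificate may have $2d-j$ rows with $1\leq j<t$. For instance, take $d=4$, $t=2$, and the dataset consisting of a positively balanced triangle spanning a plane plus balanced pairs on two further independent lines. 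It has $\kappa(D)=7>6=2d-t$ and no $8$-point certificate, and its rows cannot be organised into four balanced pairs. Outside the rigid branch your direction penalties with $\sum_k\pi_k\leq\LD$ are therefore never constructed. Asking for them, with $S$ the $t$ smallest, is essentially a restatement of the bound to be proved.

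The missing idea is the anchor reduction with the correct bookkeeping. By positive-basis structure, a minimum certificate of size $2d-j$ with $1\leq j<t$ contains a positively balanced simplex $C$ with $a+1$ rows and dimension $2\leq a\leq j+1\leq t$. A nonzero feature with zero gradient serves instead as a singleton anchor. Weight $C$ by $1-\epsilon$, and weight by $\epsilon$ a selection for the projected instance on $(\spann C)^\perp$. That instance has features $Qx_i$, responses $-r_i$, the same evaluation weights, and optimal risk still $\LD$. The resulting full-span outputs converge to $\wstar+z$, so the anchor costs budget but no risk.

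The induction must then pass to $(d-a,\,t-a+1)$, or to $(d-1,(t-1)_+)$ for a singleton anchor; either uses exactly $2d-t$ rows. The ratio closes because $(t-a+1)/(d-a)\leq t/d$ is equivalent to $d(a-1)\geq ta$. This is where $d\geq2t$ and $a\geq2$ enter, not in attaching compressed lines to distinct anchor pairs. Keeping the deficit $t$ after removing dimensions, as you propose, gives ratio $t/(d-a)>t/d$, and the induction does not close.

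Finally, your rank reduction is backwards. For $r<d$ one has $(t/r)\LD>(t/d)\LD$, so the inequality you claim fails. You must instead use the smaller deficit $k=(2r-2d+t)_+$ in rank $r$ and check $2k\leq r$ and $k/r\leq t/d$.
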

Appendix~\ref{app:upper-half} proves the theorem by dimension induction.
A balanced $a$-dimensional simplex fixes an anchor, and a full-rank
selection in its orthogonal complement lifts to the original problem.
The budget and risk ratios close when $d\geq2t$. Maximal certificates
reduce to independent lines, where compressing the $t$ least costly lines
gives excess at most $(t/d)\LD$. The Lean~4 proof includes zero residuals,
deficient ranks, normalized weights, and the stipulated minimum-norm rule.

\Needspace{8\baselineskip}
\begin{theorem}[Five-dimensional six-point law]
\label{thm:five-six}
For the weighted minimum-norm selection problem in \cref{eq:gamma},
\[
 \Gamma_5(6)=\frac{11}{5}.
\]
More precisely, for every finite instance in $\R^5$ and every
$\varepsilon>0$, a selection of at most six points, with strictly positive
weights on its support, satisfies
$L_D(\widehat w)\leq(11/5+\varepsilon)L_D^\star$.
For positive observed feature rank, the selection can span that feature
space. A seven-point instance attains the ratio $11/5$.
\end{theorem}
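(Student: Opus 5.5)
We prove the two directions separately. The lower bound is a weighted seven-point simplex-block example; the upper bound is a case analysis on the positive dependence structure among the rows that matter.

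\textbf{Lower bound.} We realize the simplex-block prediction for $5=3+2$. Take a balanced three-dimensional simplex block of four rows in $\R^3$, together with two shifted coordinate pairs $(e_j,1),(e_j,3)$ in the remaining two coordinates, for seven points in total. Choose the evaluation weights $p$ so that the full optimum and its residuals are explicit. With six selected points, the budget forces at least one of two things. Either one row is dropped from the simplex block, and then the interpolation penalty on that block is exact. Or one coordinate pair is compressed to a single point, which incurs the exact one-point penalty of the boundary law. We compute both penalties exactly, optimize $p$ so that they coincide, and obtain the minimum excess $6/5\,\LD$, i.e. the ratio $11/5$. We also check that every six-point selection, including rank-deficient ones under the minimum-norm rule, does no better. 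This is a finite enumeration of support types.

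\textbf{Upper bound.} We first reduce to positive rank, since zero risk and zero rank are handled as in the formulation. We then apply the full-span certificate theory of Appendix~\ref{sec:geometry}.

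If the certificate complexity is maximal, the instance reduces to independent one-dimensional blocks. The bound of Theorem~\ref{thm:upper-half} then applies with $t=1$ at budget $2d-1=9$, which is too large, so we need the sharper argument at $n=6$.

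The main route is to write the normal-equation certificate $\sum_i p_i r_i x_i=0$ as a conic combination of positive circuits of ranks $r\le5$. For each circuit $C$ of rank $r$, we form a training set from its $r+1$ rows together with $5-r$ independent interpolating rows, which gives six rows of full rank. Its minimum-norm fit is $\wstar$ restricted appropriately. We bound the risk of this fit by averaging over circuit covers. Comparison second moments express the expected excess over a random choice of circuit-plane completion as a quadratic form in the residuals, and circuit-plane probabilities ensure this average is at most $(6/5)\LD$.

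\textbf{Reduction to lower dimension.} When a rank-$r$ circuit with $r\le3$ splits off as an independent block, we use an anchor/lifting argument in the style of Appendix~\ref{app:upper-half}. The block is fixed, and we recurse on its orthogonal complement. For the complement of dimension $2$ or $1$, the boundary law applies; for dimension $4$ we use the weighted transfer of $\Gamma_4(5)=2$ from Appendix~\ref{app:five-transfer}. The budgets and risk ratios combine to give $1+6/5$.

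\textbf{Main obstacle.} The hardest step is the case where circuits share rows, so the instance does not split into independent blocks. The critical subcase is two or more rank-three circuits sharing a row or an edge. Here averaging over circuit planes must be done with weights adapted to the overlap, and the worst case must be located. We would first use polar-face geometry: the dual cone of the shared circuit's positive hull partitions the remaining rows by which face they lie on. A finite classification of face incidences then lets us choose, in each cell, either a circuit completion or a lifted four-dimensional selection. We verify each cell's quadratic inequality symbolically. Throughout, positive-weight limits (the $\varepsilon$) handle rank changes, so the selected span equals the full feature span $U$ and the minimum-norm output is the intended fit.
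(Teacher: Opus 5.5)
Your lower bound fails as stated. Four simplex rows plus two coordinate pairs is eight rows, not seven, and the instance is an orthogonal sum of blocks of dimensions $3,1,1$, not $3+2$. A full-rank six-row support must drop two rows. Either it drops one tetrahedron row and one row of a pair, with excess at most $3\ell_A+\ell_j$ by the basis-facet bound. Or it drops one row of each pair and keeps the tetrahedron complete, with excess $\ell_1+\ell_2$. Maximizing the smaller option over the masses gives $2/(\tfrac13+1+1)=6/7$. So the ratio is at most $13/7<11/5$ whatever $p$ you choose; this is the frontier value for the partition $(3,1,1)$ computed in Appendix~\ref{app:landscape}. You must replace the two pairs by a two-dimensional simplex block $e_4,e_5,-e_4-e_5$, with block masses $2/5$ and $3/5$, so that $3\cdot\tfrac25=2\cdot\tfrac35=\tfrac65$. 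You also need the generic translation $y_i=x_i^\top h+1$. With $h=0$, the full tetrahedron plus the single row $e_4$ is a rank-deficient five-row support whose minimum-norm output has excess only $2/5$. Your enumeration of deficient supports would then refute the example rather than confirm it.

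The upper-bound part names the right ingredients but supplies none of the arguments that produce $6/5$, and several concrete steps are incorrect.
\begin{enumerate}[leftmargin=*,label=(\roman*)]
\item A circuit plus $5-r$ interpolating rows does not return ``$\wstar$ restricted appropriately''. The point of Lemma~\ref{lem:five-realization} is that weights $c_i/(1-v_i^\top z_\epsilon)$ on the circuit realize, in the limit, any $z$ with $v_j^\top z=1$ on the interpolating rows and $v_i^\top z\le1$ on the circuit. The task is therefore to find a short such $z$.
\item Your reduction miscounts the budget. A rank-$r$ anchor uses $r+1$ rows and leaves budget $d'=5-r$ for a $d'$-dimensional complement. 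That is the endpoint regime, whose worst case is $\Gamma_{d'}(d')=d'+1$ (ratio $3$ for $d'=2$), not the boundary law. A rank-one anchor leaves only four rows, so $\Gamma_4(5)=2$ cannot be invoked there. The paper uses that input after a single zero-residual row held with weight $M\to\infty$, in the four-plus-one decomposition, and for observed rank four. The upper-half induction also requires $2t\le d$, which fails at $t=4$.
\item Even for orthogonally split $3+2$ blocks, the bound comes from choosing which block to complete, $\min\{2L_B,3L_A\}\le\frac65\LD$, not from fixing one block and recursing.
\item For interacting circuits, the constant comes from two distinct mechanisms that you do not supply. First, once good roots on $\partial K$ are discharged, the noncentered comparison moment $C\succeq I_5$ on exceptional roots and determinant-weighted interpolation give $\delta_6\le(5-r)\mu(E\setminus P)$. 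For a $(3,2)$ cover this yields $\min\{2\alpha,3(1-\alpha)\}\le6/5$. Second, when all circuits have rank at most two, the circuit-plane graph has no triangles or four-cycles once the four-plus-one case is removed, and $\Delta(G)\le3$. It therefore carries a probability with $\sum_{A\sim B}\theta_A-\theta_B\le1/5$, which is converted to actual risk by mixing mean-zero vertex distributions on $\mathcal P_A$.
\item Every other configuration must be shown to give $\delta_6\le1$: rank-four or rank-five circuits, shared-root rank-three pairs (via the classification of crossing exceptional roots), and two quotient lines. ``Verify each cell symbolically'' does not identify the cells.
\item The transfer cases are missing: observed rank three or four, zero-residual rows, and pure-residual rows.
\end{enumerate}
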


The upper bound makes no separation assumption on the features. Full span
is essential to the proof: every approximating training objective has a
unique minimizer on $U$, so it returns the required minimum-norm vector.
The statement allows a limiting sequence; it does not assert that every
instance attains its selected infimum. By contrast, the lower-bound
example below attains the exact ratio with a finite selection.

\subsection{Intermediate-budget landscape}
\label{sec:landscape}
Write $n=d+s$, where $0\leq s\leq d-1$. A block of dimension $a$
can be represented exactly by its $a+1$ circuit rows. Spending only $a$
rows incurs excess at most $a$ times its optimal block loss. The budget
$d+s$ therefore allows $s$ blocks to receive the extra row.
This allocation problem leads to
\begin{equation}
 \Phi_{d,s}=\max_{\substack{s<k\leq d\\a_1+\cdots+a_k=d\\a_j\in\N_{>0}}}
 \frac{k-s}{\sum_{j=1}^k a_j^{-1}}.
 \label{eq:landscape-phi}
\end{equation}
Let $\Gamma_d^\oplus(n)$ restrict \eqref{eq:gamma} to orthogonal sums of
simplex circuits: block $j$ consists of $a_j+1$ feature rows, spans an
$a_j$-dimensional subspace, and every $a_j$ rows form a basis; its
nonzero residual-gradient atoms form a positive circuit at the block optimum.
Evaluation weights and labels may vary within this class.
\begin{proposition}[General lower bound and exact block frontier]
\label{prop:landscape}
For $d\geq1$ and $0\leq s\leq d-1$,
\begin{equation}
 \Gamma_d(d+s)\geq1+\Phi_{d,s},\qquad
 \Gamma_d^\oplus(d+s)=1+\Phi_{d,s}.
 \label{eq:landscape-frontier}
\end{equation}
\end{proposition}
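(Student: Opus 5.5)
The proof has three parts. First, the lower bound for $\Gamma_d^\oplus$ comes from an explicit block instance. Second, the upper bound for $\Gamma_d^\oplus$ is an allocation argument. Third, the general lower bound follows because every block instance is an admissible dataset, so $\Gamma_d\geq\Gamma_d^\oplus$. Throughout we use that on an orthogonal sum the risk, the training objective, and the minimum-norm solution all decompose blockwise. For a selection $q$, the fitted $\widehat w(q)$ restricted to block $j$ is therefore the minimum-norm fit on the rows selected from that block, rescaled by their within-block weights. The excess risk is then a sum of per-block excesses.

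\textbf{Single-block analysis.} Consider one simplex block of dimension $a$ with $a+1$ rows. By an orthogonal change of coordinates it has a balanced simplex geometry with optimum loss $\ell$.

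If all $a+1$ rows are selected with the weights $p$ restricted to the block, the block optimum is recovered exactly. Any $a$ rows form a basis, so selecting them interpolates those rows. I will compute the resulting excess by writing the residual-gradient circuit relation $\sum_i p_i r_i x_i=0$ with $r_i=x_i^\top w^\star-y_i$. Dropping row $i$ and interpolating the other $a$ rows moves $w$ by the unique $\delta$ with $x_k^\top\delta=-r_k$ for $k\neq i$. Using the circuit relation and the optimality identity, the excess equals $p_i r_i^2/c$ times a factor. I would tune the instance so that this excess is exactly $a\ell$ for every choice of dropped row. In the balanced regular-simplex case with equal $p_i$ and residuals alternating appropriately, the dropped row contributes $p_ir_i^2\cdot(a+1)$, which equals $a\ell$ plus $\ell$. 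Any selection of fewer than $a$ rows leaves a direction of the block unconstrained; the minimum-norm rule sets it to $0$. We choose labels so this costs at least as much, by scaling a translation component.

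\textbf{Allocation (lower bound).} Take $k$ blocks with weights $\pi_j$ and losses $\ell_j\propto \pi_j$. Choose $\ell_j\propto a_j^{-1}$ normalized so that $L^\star=\sum_j\ell_j$ and the excess of a reduced block is $a_j\ell_j=$ const. With budget $d+s$, a selector can complete at most $s$ blocks, since completing block $j$ costs one extra row beyond $a_j$. At least $k-s$ blocks then each pay the same constant excess $c$. With $\ell_j=c/a_j$, the ratio becomes
\[
 1+\frac{(k-s)c}{c\sum_j a_j^{-1}}=1+\frac{k-s}{\sum_j a_j^{-1}}.
\]
Maximizing over $k>s$ and compositions of $d$ gives $1+\Phi_{d,s}$. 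The main obstacle is showing that no mixed strategy does better, for instance underfilling one block to overfill another. A block given $m<a_j$ rows pays at least $a_j\ell_j$ (arranged via the label choice above), so those row savings are wasted. A clean exchange argument then shows that the optimal selector gives each block $a_j$ or $a_j+1$ rows. The weight normalization across blocks is free because only the support matters in each orthogonal block.

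\textbf{Upper bound on $\Gamma_d^\oplus$.} Fix any block instance with dimensions $a_1,\dots,a_k$ and losses $\ell_j$. The selector completes the $s$ blocks with the largest $a_j\ell_j$ and gives the rest $a_j$ rows. It chooses the dropped row to minimize the excess, which is at most $a_j\ell_j$ by the single-block averaging bound: the excesses over the $a_j+1$ drop choices sum to $a_j(a_j+1)\ell_j/(a_j+1)\cdot$ const. This needs the certificate identity from Appendix~\ref{app:proofs}.

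If $k\leq s$, the excess is zero. Otherwise the excess is at most the sum of the $k-s$ smallest values $a_j\ell_j$, and we must bound this by $\Phi_{d,s}\sum_j\ell_j$. Set $u_j=a_j\ell_j$, so that $\sum_j\ell_j=\sum_j u_j/a_j$. The ratio (sum of the $k-s$ smallest $u$) divided by $\sum_j u_j/a_j$ is maximized when all $u_j$ are equal. Raising the large $u_j$ down toward the threshold only decreases the denominator, and after that scaling the ratio is linear. This gives $(k-s)/\sum_j a_j^{-1}\leq\Phi_{d,s}$. Positive weights on the selection approximate the infimum, so the $\varepsilon$-slack is harmless.
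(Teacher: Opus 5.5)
Your lower bound follows the paper's translated-simplex construction and is sound in outline. There are two slips. First, with equal weights on $e_{j,1},\dots,e_{j,a_j},-\sum_re_{j,r}$, the residuals must all be equal, not alternating, for the block to balance. Second, the dropped row then has loss $p_i(a_j+1)^2=(a_j+1)\ell_j$, i.e.\ excess $a_j\ell_j$. The upper bound for $\Gamma_d^\oplus$, however, has two genuine gaps.

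\textbf{The basis-facet bound.} You never derive the drop-one excess, and the averaging you indicate does not give it. Whiten block $j$ (write $a=a_j$) to $B^\top B=I_a$, $B^\top e=0$, $\|e\|=1$. Since $B$ has $a+1$ rows, $BB^\top=I-ee^\top$, so $\|b_i\|^2=1-e_i^2$. Sherman--Morrison then gives excess exactly $\ell_j(1-e_i^2)/e_i^2$ when row $i$ is dropped, where $e_i^2=p_ir_i^2/\ell_j$ is that row's share of the block loss. The \emph{uniform} average of this over the $a_j+1$ drop choices is at least $a_j\ell_j$, by convexity of $1/x$ with $\sum_ie_i^2=1$, with equality only for equal shares. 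So a uniform average cannot supply the upper bound. What works is dropping the row of largest share, for which $e_i^2\geq1/(a_j+1)$. Equivalently, average with weights $e_i^2$, which gives exactly $\sum_i(1-e_i^2)\ell_j=a_j\ell_j$. Zero-loss blocks are exact with any facet.

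\textbf{The allocation step.} You claim that, for a fixed partition, the ratio of the sum of the $k-s$ smallest $u_j$ to $\sum_ju_j/a_j$ is maximized at equal $u_j$. This is false. Take $s=1$ and $a=(1,10,10)$, so $d=21$. Equal $u$ gives $2/(1+\frac{1}{5})=\frac{5}{3}$, but $u=(0,1,1)$ gives $5$. Your own block instance with $\ell\propto(\epsilon,\frac{1}{10},\frac{1}{10})$ has selected-risk ratio tending to $6>1+\frac{5}{3}$, so the per-instance bound $1+(k-s)/\sum_ja_j^{-1}$ fails.

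After capping the top $s$ values at the threshold, you have a linear-fractional function on a box, which is maximized at a $0/1$ vertex. The correct per-partition value is therefore $\max_{R\subseteq[k],\,|R|>s}(|R|-s)/\sum_{j\in R}a_j^{-1}$; the paper obtains this as $\Psi_s(a)$ by a minimax argument. One more step is then needed to reach $\Phi_{d,s}$. Since $\sum_{j\in R}a_j\leq d$, enlarging the blocks in $R$ until their dimensions sum to $d$ only decreases $\sum_{j\in R}a_j^{-1}$. Hence each candidate is dominated by a term in \eqref{eq:landscape-phi}. With these two repairs your argument coincides with the paper's.
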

Appendix~\ref{app:landscape} gives the block allocation proof and a translated
construction that also excludes deficient-rank selections. These recover
the block lower bound and structured frontier of
\citet[Proposition~12, Theorem~15, Corollary~16]{zhang2026exact}
in the present weighted model. Their role here is to identify the extremal
prediction tested by the new unrestricted upper bound.

For $s=1$, $\Phi_{d,1}=\lfloor d^2/4\rfloor/d$ for $d\geq2$.
In particular, at $d=5$ the partition $3+2$ gives
\begin{equation}
 c=(1/3+1/2)^{-1}=6/5,\qquad
 (\ell_3,\ell_2)=(c/3,c/2)=(2/5,3/5).
 \label{eq:landscape-five}
\end{equation}
The extra row completes one block, while the other contributes $6/5$.
Theorem~\ref{thm:five-six} proves that arbitrary interacting circuits obey
the same worst-case value. The three-plus-two upper-bound cover below
recovers precisely this balancing mechanism.

The frontier also gives $\Phi_{d,0}=d$ and $\Phi_{d,d-1}=1/d$.
These match the endpoint and the penultimate boundary, respectively;
full-span certificates give exact recovery at $n\geq2d$.
Theorem~\ref{thm:upper-half} settles the whole upper-half interval;
Theorem~\ref{thm:five-six} settles $(5,6)$ below it. The general equality
$\Gamma_d(d+s)=1+\Phi_{d,s}$ remains the target for the other intermediate
budgets. Appendix~\ref{app:matrix-target} gives its matrix formulation.

\subsection{Proof architecture}
Figure~\ref{fig:proof-routes} organizes the two sharp laws into separate
proof routes. The upper route combines full-span certificates, balanced
anchors, and independent-line compression through dimension induction.
The lower route classifies normalized five-dimensional systems and
transfers realizable fits back to regression. Each universal upper bound
is paired with its matching lower-bound construction.

\begin{figure}[t]
\centering
\includegraphics[width=\textwidth]{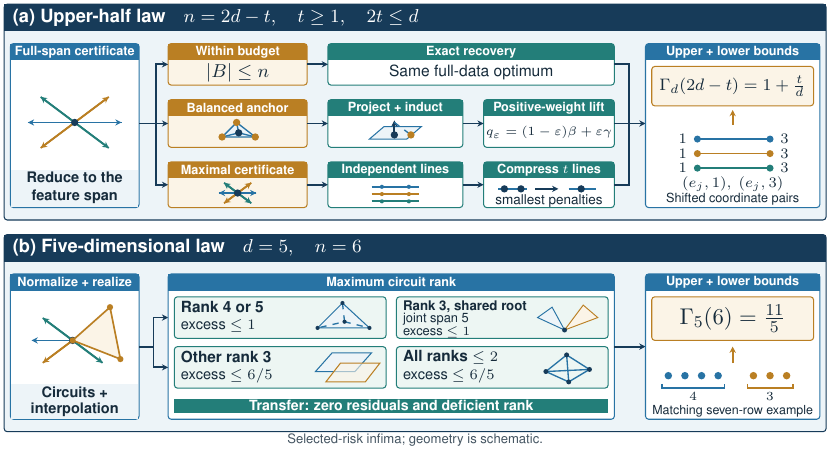}
\caption{Proof routes for the two main theorems. Top: full-span
certificate analysis gives exact recovery within budget, a balanced anchor
for projection and positive-weight lifting, or independent lines whose
$t$ cheapest coordinates can be compressed. Shifted coordinate pairs
give the matching lower bound. Bottom: four exhaustive circuit-rank
configurations bound the five-dimensional normalized excess by $6/5$;
transfer and a seven-row example yield $11/5$. The bounds concern
selected-risk infima; $B$ denotes a certificate support and the geometry is schematic.}
\label{fig:proof-routes}
\label{fig:case-map}
\label{fig:certificate-boundary}
\end{figure}
\FloatBarrier

\subsection{Normalization and the training interface}
\label{sec:realization}
For full observed rank and $\LD>0$, let $H=\sum_i p_ix_ix_i^\top$ and set
\[
 \xi_i=H^{-1/2}x_i,\quad
 \eta_i=(y_i-x_i^\top\wstar)/\sqrt{\LD},\quad
 z=H^{1/2}(w-\wstar)/\sqrt{\LD}.
\]
The normal equations give $\sum_i p_i\eta_i\xi_i=0$ and
$\sum_i p_i\eta_i^2=1$, while $L_D(w)/\LD=1+\|z\|^2$.
In the case $\eta_i\ne0$, $\xi_i\ne0$ for every row, the roots
$v_i=\xi_i/\eta_i$ and probabilities $\lambda_i=p_i\eta_i^2$ satisfy
\begin{equation}
 \lambda_i>0,\quad \sum_i\lambda_i=1,\quad
 \sum_i\lambda_iv_i=0,\quad
 \sum_i\lambda_iv_iv_i^\top=I_5.
 \label{eq:main-normal}
\end{equation}
The normalized training responses are all one. A root weight $w_i$
corresponds to original training weight proportional to $w_i/\eta_i^2$.
Write $\delta_6(V)$ for the infimum of $\|z\|^2$ over positive-weight
full-rank fits using at most six roots. Proving $\delta_6(V)\leq6/5$
gives the desired ratio in this normalized case.

A \emph{positive circuit} is a support-minimal positive zero relation
among roots. A circuit $A$ of rank $r$ has $r+1$ roots. Every root belongs
to a circuit, since the strictly positive zero relation in
\eqref{eq:main-normal} decomposes into extreme rays of its nonnegative
relation cone.

\begin{lemma}[From a circuit to an actual fit]
\label{lem:main-realization}
Let $A$ be a rank-$r$ positive circuit with span $P$, and let $J$ be
$5-r$ roots independent modulo $P$. If $v_j^\top z=1$ for $j\in J$
and $v_i^\top z\leq1$ for $i\in A$, then $z$ is a limit of
positive-weight full-rank training outputs on $A\cup J$.
\end{lemma}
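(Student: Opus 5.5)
We need the normalized training problem. In normalized coordinates, training on root $i$ with weight $w_i>0$ minimizes $\sum_i w_i(v_i^\top z-1)^2$; all normalized responses equal one. If the rows of $A\cup J$ span $\R^5$, then for every strictly positive weight vector the minimizer is unique. It is therefore the actual minimum-norm output in the original coordinates, because full rank on $U$ is preserved by the invertible change of variables. The plan is to exhibit, for a target $z$ with $v_j^\top z=1$ on $J$ and $v_i^\top z\le1$ on $A$, a family of positive weights whose unique minimizers converge to $z$. By closedness of the limit set, it is enough to handle $z$ in a dense subset, or to pass to the limit directly.

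\textbf{Step 1: full rank.} $A$ spans $P$ with $\dim P=r$, and $J$ has $5-r$ roots independent modulo $P$. Hence $A\cup J$ spans $\R^5$, and every positive-weight fit on $A\cup J$ has a unique solution.

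\textbf{Step 2: decoupling.} Put weight $M\to\infty$ on the roots of $J$. In the limit these rows impose the hard constraints $v_j^\top z=1$. These constraints define an affine subspace $z_0+P'$, where $P'=\{u:v_j^\top u=0,\ j\in J\}$ has dimension $r$. Because $J$ is independent modulo $P$, the restriction map $P'\to P^*$, $u\mapsto(v_i^\top u)_{i\in A}$ (equivalently, the identification of $P'$ with the dual of $P$), is an isomorphism. The problem thus reduces to an $r$-dimensional fit on the circuit: minimize $\sum_{i\in A}w_i(v_i^\top z-1)^2$ over this affine space. Write $s_i=1-v_i^\top z\ge0$ for the target residuals.

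\textbf{Step 3: the circuit step (main obstacle).} We must show that every residual vector $(s_i)_{i\in A}$ with $s_i\ge0$ that is realizable by some $z$ in the affine space is a limit of weighted least-squares residuals. The circuit relation $\sum_{i\in A}\mu_iv_i=0$ with $\mu_i>0$ has exactly $r+1$ roots, any $r$ of which form a basis of $P$. The realizable residual vectors therefore form an affine hyperplane in $\R^{A}$ determined by $\sum_i\mu_i(1-s_i)=c$ for a constant $c$ fixed by $z_0$. The first-order condition for weights $w$ says the residual vector $s$ satisfies $w\odot s\perp$ the $r$-dimensional space of realizable differences, i.e.\ $w_is_i=\kappa\mu_i$ for a scalar $\kappa$. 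When all $s_i>0$, choose $w_i=\mu_i/s_i>0$; this $w$ reproduces $s$ exactly, so such targets are attained. When some $s_i=0$, approximate $s$ by strictly positive $s^{(k)}$ in the same hyperplane, and take the corresponding weights $w^{(k)}$. The main care is needed in two places. First, in the degenerate case where $c$ forces every $s_i=0$, the target is an interpolant, and any positive weights give it exactly. Second, the approximating $s^{(k)}$ must remain in the realizable hyperplane; this holds because the hyperplane meets the open positive orthant whenever it meets the closed orthant away from the origin. Finally, letting $M\to\infty$ jointly with $k$ along a diagonal sequence, the unique minimizers converge to $z$, since the solution of a weighted least-squares problem with full-rank design is continuous in the weights, including the penalty limit for fixed finite circuit weights. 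This limit is the required realization of $z$.
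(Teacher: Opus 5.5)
Your argument is correct and is essentially the paper's. Your weights $w_i=\mu_i/s_i$ on the circuit rows, applied after perturbing to strictly positive circuit residuals, are exactly the paper's $w_i=c_i/(1-v_i^\top z_\epsilon)$ with $z_\epsilon=(1-\epsilon)z+\epsilon z_0$ and $z_0\in P^\perp$ interpolating $J$ (so $s^{(k)}=(1-\epsilon)s+\epsilon\mathbf 1$). Two simplifications are available: the penalty $M\to\infty$ and diagonal limit are unnecessary, because the rows of $J$ are interpolated and contribute zero gradient, so each perturbed point is already the exact unique minimizer for arbitrary positive weights on $J$; and your constant $c$ is always $0$, since $\sum_i\mu_iv_i^\top z=0$ for every $z$, so the all-$s_i=0$ case you treat separately cannot occur.
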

\begin{proof}
Take $c_i>0$ with $\sum_{i\in A}c_iv_i=0$, and let $z_0\in P^\perp$
interpolate $J$. Then $z_\epsilon=(1-\epsilon)z+\epsilon z_0$ still
interpolates $J$ and has $v_i^\top z_\epsilon<1$ on $A$. Set
\[
 w_i=\frac{c_i}{1-v_i^\top z_\epsilon}\quad(i\in A)
\]
and give the interpolated rows arbitrary positive weights. The gradient
is $-2\sum_{i\in A}c_iv_i=0$. Since the selected rows span $\R^5$,
this stationary point is the unique minimizer. The support size is
$(r+1)+(5-r)=6$. Let $\epsilon\downarrow0$.
\end{proof}

This argument is dimension-independent with $5$ replaced by $d$ and
support budget $d+1$. The remaining challenge is to find a low-norm point
meeting its interpolation and one-sided circuit inequalities.

\section{The exhaustive geometry behind the upper bound}
\label{sec:cases}
Call a root \emph{good} if it is a strictly positive combination of five
other independent roots; call the remaining roots \emph{exceptional},
and denote their set by $E$. Put $K=\conv\{\pm v_i\}$.
A good root on $\partial K$ yields $\delta_6\leq1$ by a supporting
functional and an explicit six-row fit. Otherwise the exceptional roots
generate $K$ and admit a comparison second moment
\begin{equation}
 K=\conv\{\pm e:e\in E\},\qquad
 C=\sum_{e\in E}\mu_e ee^\top\succeq I_5,
 \qquad \mu_e\geq0,\quad\sum_e\mu_e=1.
 \label{eq:main-comparison}
\end{equation}
Indeed, express each original root as a convex combination of signed
exceptional roots and average the quadratic Jensen inequalities with
$\lambda_i$. The measure $\mu$ need not have mean zero. This distinction
is important when it is used for interpolation below; it is a second
moment comparison, not a centered-covariance replacement.
Appendix~\ref{app:five-interfaces} supplies the boundary-root construction.

\begin{proposition}[Exhaustive circuit-rank split]
\label{prop:exhaustive}
For every finite root system satisfying \eqref{eq:main-normal}, exactly
one of the following circuit configurations occurs.
\begin{enumerate}[leftmargin=*,label=(\roman*)]
\item A positive circuit has rank four or five. Then $\delta_6\leq1$.
\item All circuits have rank at most three, and two rank-three circuits
share a root and jointly span $\R^5$. Then $\delta_6\leq1$.
\item A rank-three circuit exists, every circuit has rank at most three,
and the shared-root configuration in (ii) is absent. Then either the
two-line quotient argument gives $\delta_6\leq1$, or two circuit spans
of dimensions three and two cover $E$, giving $\delta_6\leq6/5$.
\item All circuits have rank at most two. Either a four-plus-one
decomposition (or an independent-line system) gives $\delta_6\leq1$,
or the circuit-plane graph gives $\delta_6\leq6/5$.
\end{enumerate}
In (ii)--(iii), the good-boundary-root alternative is first discharged
with bound one; the exceptional-root assertions concern the other alternative.
\end{proposition}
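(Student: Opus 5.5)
The plan has two parts. First, the four configurations are mutually exclusive and cover every root system. Second, each configuration receives its stated bound, in every case by producing a low-norm $z$ that feeds Lemma~\ref{lem:main-realization}.

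\textbf{Exhaustiveness.} Let $\rho$ be the maximal rank of a positive circuit. Circuits exist by the extreme-ray decomposition of the strictly positive relation in \eqref{eq:main-normal}, and every circuit has rank at least one, so $\rho \geq 1$.
\begin{itemize}[leftmargin=*]
\item Case (i) is exactly $\rho\in\{4,5\}$.
\item Case (iv) is exactly $\rho\leq2$.
\item If $\rho=3$, we split on whether two rank-three circuits share a root and jointly span $\R^5$. If they do, we are in (ii); if not, we are in (iii).
\end{itemize}
The predicates $\rho\geq4$, ``$\rho=3$ with the shared pair,'' ``$\rho=3$ without it,'' and $\rho\leq2$ are disjoint and cover all values of $\rho$, so exactly one configuration occurs.

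\textbf{Good boundary roots.} Before the case analysis we discharge this alternative. If a good root $v_0=\sum_{j=1}^5 c_jv_j$ (with $c_j>0$ and independent $v_j$) lies on $\partial K$, take a supporting functional $z$ with $v_0^\top z=1$ and $\pm v_i^\top z\leq1$. Then $v_j^\top z\leq 1$ and $\sum c_j v_j^\top z=1$. We use the six-row positive circuit $\{v_0,-\ldots\}$ via a six-row fit interpolating $v_0$ with the others one-sided; this gives $\delta_6\leq\|z\|^2\leq1$, since $\|z\|\leq1$ follows from $\sum\lambda_i(v_i^\top z)^2\leq1$ and the isotropy $\sum\lambda_iv_iv_i^\top=I_5$. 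The same estimate, namely $\|z\|^2=\sum_i\lambda_i(v_i^\top z)^2\leq\max_i(v_i^\top z)^2$, is the workhorse throughout. It bounds $\|z\|^2$ by one whenever $z$ lies in the polar $K^\circ$. Otherwise the comparison moment \eqref{eq:main-comparison} replaces $\lambda$ by $\mu$ on $E$, giving $\|z\|^2\leq z^\top Cz=\sum_e\mu_e(e^\top z)^2$.

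\textbf{Cases (i) and (ii).} In (i), a circuit $A$ of rank $r\geq4$ leaves $5-r\leq1$ interpolation rows. We choose $J$ (when $r=4$) as a root outside $P$ on the boundary of the polar body restricted to the relevant face. We choose $z\in K^\circ$ touching $v_j$, which gives $\|z\|^2\leq1$ by the estimate above. In (ii), we use the union of the two rank-three circuits. Their common root lets us combine the two positive relations into a single rank-five dependency on six roots after eliminating one root. We then realize a point of $K^\circ$ as in Lemma~\ref{lem:main-realization}. The delicate point is polar-face geometry: we must show that a vertex of $K^\circ$ exists that is active on the required interpolation row while satisfying the one-sided constraints on the merged support.

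\textbf{Cases (iii) and (iv).} Here we exploit low circuit rank. In (iii), either two circuit lines give a quotient argument with bound one, or $E$ is covered by spans $P_3\oplus P_2$ of dimensions three and two. In the covered case we mimic the block allocation of \eqref{eq:landscape-five}. We spend four rows on the rank-three circuit in $P_3$ and one interpolating row from $P_2$, and choose $z$ to interpolate that row. Write $\mu=\mu_3+\mu_2$ for the split of the comparison measure. We take $z$ as the better of two candidates, one supported in $P_2$ and one in $P_3$, scaled by the block weights. The bound $\sum_e\mu_e(e^\top z)^2$ then becomes $\min$ of two expressions whose worst case over $\mu$ is $(1/3+1/2)^{-1}=6/5$. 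Case (iv) proceeds the same way with rank-two circuits: we build the circuit-plane graph and seek either a four-plus-one decomposition or independent lines giving bound one, or else the same $3+2$ averaging giving $6/5$.

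\textbf{Main obstacle.} I expect the hardest step to be case (iv). The combinatorics of overlapping circuit planes must be forced into one of the graph configurations, and the circuit-plane probabilities must be shown to satisfy the averaging inequality without independence between planes.
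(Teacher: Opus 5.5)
Your exhaustiveness argument is correct and is the paper's: split by the maximal circuit rank, then by the presence of the shared-root pair. Your rank-four/five step is also essentially right once you take $z\perp P$ along a normal $n$, scaled so that the root maximizing $|v_i^\top n|$ evaluates to one. One correction to the good-root step: $v_0,v_1,\ldots,v_5$ carry the signed relation $v_0-\sum_jc_jv_j=0$, not a positive circuit, so Lemma~\ref{lem:main-realization} does not apply as stated. The approximating fits need $v_0^\top z_\epsilon>1>v_j^\top z_\epsilon$, and $\sum_jc_j\geq1$ is what makes such $z_\epsilon$ available.

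The gap is that the case bounds, which are the substance of the proposition, are unsupported or fail as written. In (ii), the merger you propose cannot exist. Every positive circuit has rank at most three there, while a positive relation on six roots spanning $\R^5$ would be a rank-five positive circuit. Eliminating the shared root gives only the mixed-sign relation $a_0+a_1+a_2-b_0-b_1-b_2=0$, which the realization lemma cannot use. A signed version would need all three $b$-roots active at a common point of $K^\circ$, and nothing guarantees this; with unit circuit coefficients it would force $u^\top z=-3$. More fundamentally, what makes (ii) hard is the arbitrary additional roots constraining $K^\circ$. The paper classifies every crossing exceptional root into the forms $C_{ij},A_{ij},B_{ij}$ and shows which forms can coexist: a forbidden coexistence makes some root good or creates a rank-four or rank-five circuit. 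It then closes with the $(3,3)$ cover, the two-line quotient, or the polar-face arguments of Lemmas~\ref{lem:five-active} and~\ref{lem:five-multiple}. Saying that one ``must show a vertex exists'' names this step without supplying it.

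In (iii) you assume the very dichotomy the statement asks you to prove. The paper obtains it as follows. If no seven roots positively generate, the two-line quotient applies. Otherwise a minimal seven-root positive generating set has a relation cone with two extreme rays, with supports $(4,3)$ disjoint or $(4,4)$ sharing a root. A crossing-root analysis and a star argument then finish the case. Your $6/5$ computation also does not go through, for three reasons.
\begin{enumerate}[leftmargin=*,label=(\alph*)]
\item A rank-three anchor has four rows and needs two further rows independent modulo $P_3$; four plus one rows span only four dimensions.
\item To keep the anchor one-sided you want $z\perp P_3$, which lies in $P_2$ only if the spans are orthogonal.
\item The comparison measure $\mu$ is not centered, so block-facet scaling is unavailable.
\end{enumerate}
The missing ingredient is the per-anchor bound \eqref{eq:main-anchor}, $\delta_6\leq(5-r)\mu(E\setminus P)$. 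It rests on the noncentered interpolation lemma, proved by determinant-weighted sampling through Cauchy--Binet (Lemma~\ref{lem:five-interpolation}). Only then does $\min\{2\alpha,3\beta\}$ with $\alpha+\beta\leq1$ give $6/5$.

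In (iv) there is no rank-three circuit, so ``the same $3+2$ averaging'' has no content there. The $6/5$ comes from a different mechanism.
\begin{enumerate}[leftmargin=*,label=(\alph*)]
\item Lemmas~\ref{lem:five-planes} and~\ref{lem:five-shortcycles} show that short cycles in the circuit-plane graph force a four-plus-one split, closed with the four-dimensional five-point input; otherwise $\Delta(G)\leq3$.
\item A graph probability $\theta$ satisfies $\sum_{A\sim B}\theta_A-\theta_B\leq1/5$.
\item A mean-zero mixture of vertex fits of $\mathcal P_A$ satisfies $\mathbb E_A\sum_{i\in B}\beta_i(v_i^\top z)^2\leq\rho_{AB}$, averaged against the original $\lambda$.
\end{enumerate}
As it stands, your plan establishes exhaustiveness and case (i), but none of the bounds in (ii)--(iv).
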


\paragraph{Why the split is sufficient.}
The four configurations are disjoint and exhaustive by the maximum
circuit rank and the presence of a shared-root rank-three pair. A rank-five
circuit trains exactly at zero. For a rank-four circuit, one maximum
absolute normal projection supplies the interpolating row and squared
norm at most one. The shared-root case is handled by signed polar faces,
as explained below and proved in Appendix~\ref{app:five-shared}.
For (iii), quotienting by a rank-three circuit either yields only two
independent lines, or yields a seven-root positive generating subset.
The two extreme rays of its relation cone have supports of sizes
$(4,3)$, disjoint, or $(4,4)$, sharing one root. The latter is excluded
by (iii). In the former, any crossing exceptional root either forces
the forbidden shared-root pair or reduces all exceptional projections
to two quotient lines; with no crossing exceptional roots the $(3,2)$
cover remains. The support and crossing arguments are given in
Appendix~\ref{app:five-remaining}. Finally, circuit-plane intersections
give the alternatives in (iv), proved in Appendix~\ref{app:five-lowrank}.
The next two subsections derive the constant in the branches that need
the full excess $6/5$.

\subsection{First mechanism: a three-plus-two cover}
\label{sec:cover}
For any rank-$r$ anchor circuit with span $P$, the comparison measure gives
\begin{equation}
 \delta_6\leq(5-r)\mu(E\setminus P).
 \label{eq:main-anchor}
\end{equation}
To see why this remains valid for a noncentered measure, project its
roots onto $P^\perp$ and write their second moment as
$B=\sum_i\eta_i z_iz_i^\top\succ0$, total mass as $a=\sum_i\eta_i$,
and first moment as $b=\sum_i\eta_i z_i$. Determinant-weighted sampling
of $k=5-r$ independent rows yields interpolants $t_J$ satisfying
\begin{equation}
 \mathbb E[t_J^\top Bt_J]
 \leq ka-(k-1)b^\top B^{-1}b\leq ka.
 \label{eq:main-interpolation}
\end{equation}
This follows by comparing the coefficient of a common rank-one row
translation in the Cauchy--Binet identity; singular subsets contribute
nonnegative terms (Lemma~\ref{lem:five-interpolation}). Since $B$ dominates
the identity on $P^\perp$, some interpolant has squared norm at most
$ka$. Its anchor evaluations are zero, so
Lemma~\ref{lem:main-realization} gives an actual admissible fit.
Only roots outside $P$ enter $a$, proving \eqref{eq:main-anchor}.

Now let circuit spans $P_A,P_B$ of dimensions three and two cover $E$.
Write $\alpha=\mu(E\setminus P_A)$ and
$\beta=\mu(E\setminus P_B)$. The two outside sets are disjoint, so
$\alpha+\beta\leq1$. Apply \eqref{eq:main-anchor} to each anchor:
\begin{equation}
 \delta_6\leq\min\{2\alpha,3\beta\}
 \leq\min\{2\alpha,3(1-\alpha)\}\leq\frac65.
 \label{eq:main-cover}
\end{equation}
The last maximum occurs when $2\alpha=3(1-\alpha)$, that is,
$\alpha=3/5$. This explains the constant without assuming the comparison
measure is centered or replacing the original risk by a block surrogate.
The same calculation for a $(3,3)$ cover gives bound one.

\subsection{Second mechanism: circuit-plane graphs and actual risk}
\label{sec:graph}
Suppose all circuits have rank at most two. Form a graph $G$ whose
vertices are distinct planes of rank-two circuits, joining two planes
when their intersection is zero. If $G$ has a triangle or four-cycle,
the circuit geometry forces every root into a four-space and a
complementary line. Their evaluation second moments are orthogonal
projections, so the four-dimensional five-row input and one interpolated
line use six rows with total excess at most one.

Otherwise $G$ has no triangles or four-cycles. Neighbors of a vertex
are pairwise intersecting planes. They either share a line or lie in
a three-space. Circuit elimination bounds these alternatives by two
and three neighbors, respectively (Lemma~\ref{lem:five-planes}), hence
$\Delta(G)\leq3$. If there are no circuit planes, all roots lie on five
independent lines and the maximum-absolute-coefficient interpolant gives
bound one. We therefore consider a nonempty $G$.

\begin{lemma}[Graph probability]
\label{lem:main-graph}
Every nonempty finite graph with no triangles or four-cycles and
$\Delta(G)\leq3$ admits a probability $\theta$ on its vertices with
\begin{equation}
 \sum_{A\sim B}\theta_A-\theta_B\leq\frac15
 \quad\text{for every }B.
 \label{eq:main-graph}
\end{equation}
\end{lemma}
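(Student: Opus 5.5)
The plan is to build $\theta$ explicitly, with support on a small local configuration, by cases on the maximum degree. The starting observation is that \eqref{eq:main-graph} is local. A vertex $B$ with $\theta_B=0$ and no neighbor in $\supp\theta$ satisfies it trivially. So we may restrict attention to a single connected component and to vertices within distance two of the support.

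\emph{Case 1: some vertex $v$ has degree $3$.} Put $\theta_v=x$ and $\theta_u=y$ on the three neighbors $u\sim v$, and $\theta=0$ elsewhere, with $x+3y=1$. We check \eqref{eq:main-graph} at each vertex $B$.
\begin{itemize}[leftmargin=*]
\item At $B=v$ the left side is $3y-x$.
\item At a neighbor $B=u$, the neighbors of $u$ inside the support consist of $v$ only, because the absence of triangles makes the neighbors of $v$ pairwise nonadjacent. The left side is $x-y$.
\item At a vertex $B$ at distance two from $v$, absence of four-cycles means $B$ is adjacent to at most one neighbor of $v$. The left side is at most $y$.
\item All farther vertices give $0$.
\end{itemize}
The choice $y=1/5$, $x=2/5$ satisfies $x+3y=1$ and gives $3y-x=1/5$, $x-y=1/5$, and $y=1/5$. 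So this case is settled exactly at the threshold, which also explains the constant $1/5$.

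\emph{Case 2: $\Delta(G)\leq2$.} Every component is then a path or a cycle. Since there are no triangles or four-cycles, a cycle component is $C_m$ with $m\geq5$. Taking $\theta$ uniform on it gives $2/m-1/m=1/m\leq1/5$ at every vertex.

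For a path component $P_k$, uniform weight $1/k$ gives at most $2/k-1/k$ at interior vertices and $0$ at endpoints. This is fine once $k\geq5$, and also for $k\leq2$:
\begin{itemize}[leftmargin=*]
\item An isolated vertex with weight $1$ gives $-1$.
\item An edge with weights $1/2,1/2$ gives $0$ at both ends.
\end{itemize}
The remaining paths $P_3$ and $P_4$ are handled by explicit weights.
\begin{itemize}[leftmargin=*]
\item For $P_3$, take endpoint weights $4/15$ and middle weight $7/15$. This gives $1/5$ at the endpoints and $1/15$ at the middle.
\item For $P_4$, uniform weights $1/4$ give $1/4>1/5$ at the interior vertices. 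Instead use $\theta=(a,b,b,a)$ with $2a+2b=1$. The endpoint constraint is $b-a\leq1/5$ and the interior constraint is $a+b-b=a\leq1/5$. Taking $a=1/5$, $b=3/10$ gives $b-a=1/10$.
\end{itemize}

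The only real obstacle is Case 1, since the other cases are finite checks. Its key step is verifying the distance-two constraint. This is exactly where the hypotheses of no triangles and no four-cycles enter: they make the closed neighborhood of $v$ an induced star, and they ensure every outside vertex sees at most one leaf. The hypothesis $\Delta(G)\leq3$ only ensures the star has at most three leaves; the computation is done for exactly three, and lower-degree graphs fall into Case 2. Nonemptiness of $G$ guarantees that a component exists to carry the probability.
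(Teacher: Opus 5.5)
Your proof is correct and follows the paper's argument: the same $2/5,1/5$ weighting on a degree-three vertex and its neighbors, uniform mass on a cycle component of length at least five, and explicit weights on short path components. The differences are only cosmetic. You use uniform mass on a path with at least five vertices, where the paper puts $1/5$ on five consecutive vertices, and your weights $(4/15,7/15,4/15)$ for $P_3$ and $(1/5,3/10,3/10,1/5)$ for $P_4$ are equally valid replacements for the paper's $(2/7,3/7,2/7)$ and $(1/6,1/3,1/3,1/6)$.
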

\begin{proof}
If a vertex has degree three, put mass $2/5$ on it and $1/5$ on each
neighbor. Every exterior vertex meets at most one of those neighbors,
since two would form a four-cycle. This proves the bound everywhere.
If the maximum degree is at most two, choose one path or cycle component.
Use the uniform probability on a cycle of length at least five.
For paths on one through four vertices use, respectively,
$(1)$, $(1/2,1/2)$, $(2/7,3/7,2/7)$, and
$(1/6,1/3,1/3,1/6)$. For longer paths put $1/5$ on five consecutive
vertices. All other vertices have zero mass; direct substitution proves
the claim, including vertices just outside the positive-mass segment.
\end{proof}

The constant $1/5$ is sharp for this abstract graph statement: on a
five-cycle, summing its five left sides gives one for every probability,
so their maximum is at least $1/5$, and uniform mass attains it.
This observation is not a claim that every eligible graph is realizable
by a normalized root system, nor a separate regression lower bound.

\paragraph{Why this graph controls training risk.}
Choose an actual triangle circuit as anchor for each graph vertex $A$.
The polytope
\[
 \mathcal P_A=\{z\in P_A^\perp:v_i^\top z\leq1\text{ for every }i\}
\]
is bounded and three-dimensional. Express zero as a convex combination
of its vertices. Each vertex interpolates three rows independent modulo
$P_A$; together with the zero-evaluation anchor it is an actual full-rank
six-row fit. This finite distribution has $\mathbb E_Az=0$.

For a circuit $B$ with normalized coefficients $\beta_i$, let
$\rho_{AB}=\dim P_B-\dim(P_A\cap P_B)$. Project the positive relation
of $B$ modulo $P_A$ and decompose it into projected circuits. On one
such circuit with coefficients $\alpha_i$ and support size $s$, write
$T_i=v_i^\top z$. We have $T_i\leq1$, $\sum_i\alpha_iT_i=0$
pointwise, and $\mathbb E_AT_i=0$. Consequently
\[
 -\frac{1-\alpha_i}{\alpha_i}\leq T_i\leq1,
 \qquad \mathbb E_AT_i^2\leq\frac{1-\alpha_i}{\alpha_i},
 \qquad \mathbb E_A\sum_i\alpha_iT_i^2\leq s-1\leq\rho_{AB}.
\]
The second inequality follows by averaging
$(T_i+(1-\alpha_i)/\alpha_i)(T_i-1)\leq0$.
Reassembling the projected circuits, whose total mass is at most one,
gives $\mathbb E_A\sum_{i\in B}\beta_i(v_i^\top z)^2\leq\rho_{AB}$.
For rank-two $B$,
\[
 \rho_{AB}=1+\boldsymbol1_{A\sim B}-\boldsymbol1_{P_A=P_B},
 \qquad \sum_A\theta_A\rho_{AB}\leq1+\frac15=\frac65;
\]
for rank-one $B$ the same average is at most one. Decompose the original
zero-mean probability $\lambda$ into normalized positive circuits and
mix the anchor distributions according to $\theta$. The \emph{original}
second moment in \eqref{eq:main-normal} now gives
\begin{equation}
 \mathbb E\|z\|^2
 =\mathbb E\sum_i\lambda_i(v_i^\top z)^2\leq\frac65.
 \label{eq:main-realrisk}
\end{equation}
Some fit in this finite distribution therefore has excess at most $6/5$.
Thus the graph bound controls the risk of realizable training outputs,
not merely a numerical quantity attached to circuit intersections.

\subsection{The shared-root branch: the invariant that closes the cases}
\label{sec:shared}
Two rank-three circuits sharing a root and spanning five dimensions
can, for incidence calculations, be positively rescaled to
\[
 u+a_0+a_1+a_2=0,\qquad u+b_0+b_1+b_2=0.
\]
Additional roots cannot simply be discarded. The proof classifies each
crossing exceptional root into three forms, then checks which forms
can coexist. A forbidden coexistence either makes a root good or creates
a circuit of rank four or five. These are the invariant exclusion tests
behind the algebra in Appendix~\ref{app:five-shared}.

The remaining configurations give a cover by two rank-three spans, a
three-span cover, a four-span configuration, or two quotient lines.
For the middle two, use the symmetric polar
$Q=\{z:|e^\top z|\leq1\ (e\in E)\}$.
At a vertex, color active roots by their signs. If neither sign supplies
two independent interpolating roots modulo an available rank-three
circuit, the active roots outside the shared two-space must all have
one unoriented direction (Lemma~\ref{lem:five-active}). On the relevant
polar slice, switching that direction would activate two nonparallel
roots and violate the restriction. This forces a realizable face or
vertex. Its squared norm is at most one by the original normalization.

This reasoning explains why the branch tolerates arbitrary extra rows:
the active-space restriction is applied to all exceptional roots, and
their signed hull contains every original root. Positive rescaling is
used only for dependence signs, ranks, and spans; it is never used to
claim that the original quadratic risk is unchanged.
Table~\ref{tab:invariants} indexes the structural proof obligations.

\begin{table}[t]
\centering\small
\caption{Invariants and where their full proofs are used.}
\label{tab:invariants}
\begin{tabularx}{\linewidth}{@{}lXX@{}}
\toprule
Stage & Invariant maintained & Full argument \\
\midrule
Realization & Positive weights; six rows; full rank & Lemma~\ref{lem:five-realization} \\
Exceptional reduction & Signed hull; $C\succeq I$, without centering & Lemma~\ref{lem:five-exceptional} \\
Shared circuits & No good crossing root or higher-rank circuit & Appendix~\ref{app:five-shared} \\
Polar faces & Failed signs force one transverse direction & Lemmas~\ref{lem:five-active}--\ref{lem:five-multiple} \\
Circuit planes & Minimal positive relations constrain intersections & Lemmas~\ref{lem:five-planes}--\ref{lem:five-shortcycles} \\
Graph to risk & Mean-zero vertex mixture of actual fits & \eqref{eq:five-realrisk}--\eqref{eq:five-lowrisk} \\
\bottomrule
\end{tabularx}
\end{table}

\section{Transfer, attainment, and the matching example}
\label{sec:five-six}
The root bound covers all normalized systems by
Proposition~\ref{prop:exhaustive}. To recover every regression instance,
one must also handle rows for which division by $\eta_i$ is invalid.
If $\eta_0=0$ and $\xi_0\ne0$, constrain $z$ to $\xi_0^\perp$ and
apply the four-dimensional five-row theorem there. Lift those rows and
add row zero with weight $M>0$. The resulting full-rank objective
converges, as $M\to\infty$, to the constrained fit and gives ratio
infimum at most two. Every finite $M$ is an admissible six-row design.
Pure-residual rows ($\xi_i=0$, $\eta_i\ne0$) only reduce the nonzero-root
mass to $a\leq1$; renormalizing and scaling gives excess at most
$6a/5\leq6/5$. Deficient observed ranks use the four-dimensional input,
the dimension-three boundary law, or exact certificates in ranks at most
two. Appendix~\ref{app:five-transfer} proves these transfers, including
the extension of the uniform four-dimensional input to arbitrary positive
evaluation weights. Thus the root construction proves the full upper
bound in Theorem~\ref{thm:five-six}.

For the lower bound take the seven feature rows
\[
 e_1,e_2,e_3,-e_1-e_2-e_3,\qquad e_4,e_5,-e_4-e_5,
\]
with evaluation weights $1/10$ on each of the first four and $1/5$ on
each of the last three. Put $h=(10,20,40,80,160)^\top$ and
$y_i=x_i^\top h+1$. Then $\wstar=h$, $\LD=1$, and the blocks have
dimensions three and two and total masses $2/5$ and $3/5$.
Any full-rank support of size at most six leaves one block with exactly
its dimension $r$ rows. Those rows interpolate at displacement one,
forcing the omitted simplex row to have displacement $-r$. Its block
excess is $a_r(r+r^2)/(r+1)=a_rr=6/5$. Retaining the full tetrahedron
with equal training weights and two triangle rows attains this value.
The translation by $h$ makes every deficient selected span incur excess
at least five, so minimum-norm rank loss cannot improve it. The exact
distance argument is in Appendix~\ref{app:five-lower}.
This proves both attainment for the example and $\Gamma_5(6)\geq11/5$.
It is the $3+2$ optimizer in \eqref{eq:landscape-five}, with the generic
translation replaced by an explicit vector.

\section{Certificate rigidity and the penultimate boundary}
\label{sec:certificate-summary}
The boundary theory gives a complementary, dimension-uniform account
of realizability. Define the residual-scaled features by
\begin{equation}
 r_i=x_i^\top\wstar-y_i,\qquad g_i=r_ix_i,
 \label{eq:gradient-atoms}
\end{equation}
so the normal equations give
\begin{equation}
 \sum_i p_ig_i=0.
 \label{eq:full-balance}
\end{equation}
For $U_q=\spann\{x_i:q_i>0\}$, exact minimum-norm recovery is equivalent to
\[
 \widehat w(q)=\wstar
 \quad\Longleftrightarrow\quad
 \sum_iq_ig_i=0\ \text{and}\ \wstar\in U_q.
\]
Stationarity is the first condition; orthogonality to the selected
Hessian's kernel is the second. A full-span certificate enforces the
stronger condition $U_q=U$. Let $\kappa(D)$ be its minimum support size.
Theorems~\ref{thm:exact-certificate}--\ref{thm:maximal-rigidity} give
\[
 m\leq\kappa(D)\leq m+r\leq2m\leq2d
 \quad(m=\dim U\geq1,\ r=\dim\spann\{g_i:g_i\ne0\}),
\]
with $\kappa(D)=1$ when $m=0$.
Steinitz compression \citep{steinitz1916,ivanov2024quantitative} and
maximal positive-basis rigidity imply that $\kappa(D)=2m$ holds exactly
when the nonzero features lie on $m$ independent lines and all have
nonzero gradients. Both directions, including extra mixed features and
zero-gradient rows, are proved in Appendix~\ref{app:proofs}.

On this rigid branch the risk is $C_0+\sum_{j=1}^m L_j(t_j)$, with
$L_j(t)=\sum_i a_i(t-z_i)^2$, $a_i>0$. If
$A_j=\sum_i a_i$ and $\mu_j=\sum_i a_i z_i/A_j$, the exact one-point
penalty is
\[
 \delta_j=A_j\min_i(z_i-\mu_j)^2\leq L_j^\star.
\]
Two points recover $\mu_j$ exactly. Equality in the penalty bound, for
positive block loss, occurs precisely at two equidistant target levels
of equal effective mass (Theorem~\ref{thm:one-dimensional-profile}).
If $\kappa(D)\leq2d-1$, selection is exact; otherwise, compress the least
costly of the $d$ lines to one point and retain two points on each other
line. The resulting positive-weight full-span fit satisfies
\[
 \LDS{2d-1}\leq\LD+\min_j\delta_j
 \leq\LD+\frac1d\sum_jL_j^\star\leq(1+1/d)\LD.
\]
The uniform coordinate pairs $(e_j,0),(e_j,2)$ attain equality, giving
$\Gamma_d(2d-1)=1+1/d$ for every $d\geq1$.
All upper bounds here are attained by explicit selections.

The equality analysis is stronger than the value alone: every extremizer
has maximal certificate complexity, zero zero-feature loss, equal positive
block losses, factor-two extremal blocks, and lower bounds on all canonical
rank-deficient projection selections. The complete necessary conditions,
one-dimensional profile, and instance-dependent bound remain in
Theorems~\ref{thm:one-dimensional-profile}--\ref{thm:sharp-boundary}.
These are necessary extremal conditions, not an asserted sufficient
classification of every boundary extremizer.

\section{Conclusion}
The exact law $\Gamma_d(n)=3-n/d$ holds throughout the upper-half interval,
with a complete Lean~4 verification of its dataset-level statement.
Balanced anchors, positive-weight lifting, and independent-line compression
close the dimension induction. At $(5,6)$, the separate circuit analysis
proves the block prediction $11/5$ for arbitrary configurations.
These results connect dimension-uniform budget laws with the finer geometry
needed below the upper-half interval.

%% file: appendix/geometry.tex
For a selection \(q\in\Delta_N\), let
\(L_q(w)=\sum_{i=1}^Nq_i(\langle w,x_i\rangle-y_i)^2\) denote the selected
objective, and write
\[
  U_q=\spann\{x_i:q_i>0\},
  \qquad
  h(q)=\sum_{i=1}^Nq_ig_i.
\]
The quantities have distinct roles: \(h(q)\) is the selected
half-gradient at the full-data solution, so
\(\nabla L_q(\wstar)=2h(q)\), whereas \(U_q\) determines the minimum-norm
tie-breaking rule.

\begin{theorem}[Exact-Selection Certificate Theorem]
\label{thm:exact-certificate}
For every \(q\in\Delta_N\),
\begin{equation}
  \widehat w(q)=\wstar
  \quad\Longleftrightarrow\quad
  h(q)=0
  \ \text{and}\ 
  \wstar\in U_q.
  \label{eq:exact-characterization}
\end{equation}
In particular, the full-observed-span conditions
\begin{equation}
  h(q)=0,
  \qquad
  U_q=U
  \label{eq:full-span-certificate}
\end{equation}
imply \(\widehat w(q)=\wstar\) and
\(L_D(\widehat w(q))=\LD\).
\end{theorem}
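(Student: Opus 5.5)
The plan is to characterize the minimum-norm minimizer of the selected objective directly. Write $X_q$ for the selected design, so $L_q(w)=\sum_iq_i(x_i^\top w-y_i)^2$ is a convex quadratic with Hessian $2H_q$, where $H_q=\sum_iq_ix_ix_i^\top$. The kernel of $H_q$ is $U_q^\perp$: indeed $w^\top H_qw=\sum_iq_i(x_i^\top w)^2=0$ exactly when $w\perp x_i$ for every $i$ with $q_i>0$. Consequently the minimizer set of $L_q$ is an affine subspace $w_0+U_q^\perp$, nonempty because the normal equations $H_qw=\sum_iq_iy_ix_i$ always have a solution (the right side lies in $U_q=\operatorname{range}H_q$). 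The minimum-norm minimizer is the unique point of this affine set lying in $U_q$. I will use this standard characterization throughout: \(\widehat w(q)\) is the unique $w$ with $\nabla L_q(w)=0$ and $w\in U_q$.

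For the equivalence \eqref{eq:exact-characterization}, first compute $\nabla L_q(\wstar)=2\sum_iq_i(x_i^\top\wstar-y_i)x_i=2\sum_iq_ir_ix_i=2h(q)$, using the definitions in \eqref{eq:gradient-atoms}. ($\Rightarrow$) If $\widehat w(q)=\wstar$, then $\wstar$ is a minimizer, so $h(q)=0$, and $\wstar\in U_q$ by the characterization. ($\Leftarrow$) If $h(q)=0$, then $\wstar$ is stationary, hence a minimizer by convexity. If also $\wstar\in U_q$, then $\wstar$ is a minimizer lying in $U_q$. By uniqueness of such a point it equals $\widehat w(q)$; equivalently, every other minimizer is $\wstar+k$ with $k\in U_q^\perp$, so its norm squared is $\|\wstar\|^2+\|k\|^2$.

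For the corollary, note that $\wstar\in U$ was already observed in the setup. If $U_q=U$, then $\wstar\in U_q$, and together with $h(q)=0$ the equivalence gives $\widehat w(q)=\wstar$. Hence $L_D(\widehat w(q))=L_D(\wstar)=\LD$.

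There is no real obstacle here. The only point requiring care is the identification $\ker H_q=U_q^\perp$, together with the consequence that the minimum-norm minimizer is the unique one in $U_q=(\ker H_q)^\perp$. This is where positivity of the support weights (only $q_i>0$ rows count) is used. One degenerate case should be noted: if $U_q=\{0\}$, the argument still goes through, since $\widehat w(q)=0$ and the condition $\wstar\in U_q$ reads $\wstar=0$.
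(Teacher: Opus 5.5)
Your proof is correct and follows essentially the same route as the paper. Both arguments identify $\ker H_q=U_q^\perp$ via positivity of the support weights, so the selected minimizer set is a translate of $U_q^\perp$ whose minimum-norm point is its unique element in $U_q$, and both compute $\nabla L_q(\wstar)=2h(q)$. The only cosmetic difference is in the forward direction: the paper gets $\widehat w(q)\in U_q$ from the objective depending only on the projection onto $U_q$, while you read it off from the same affine-set characterization.
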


\begin{proof}
The selected objective depends only on the projection of \(w\) onto \(U_q\),
so its minimum-norm minimizer belongs to \(U_q\).  Consequently,
\(\widehat w(q)=\wstar\) implies \(\wstar\in U_q\).  Stationarity at
\(\wstar\) also gives
\(
  0=\nabla L_q(\wstar)=2h(q)
\), proving necessity.

Conversely, if \(h(q)=0\), then \(\wstar\) minimizes the selected convex
quadratic.  Because all weights on \(\supp(q)\) are positive, its Hessian has
kernel \(U_q^\perp\), and the complete minimizer set is
\(\wstar+U_q^\perp\).  If \(\wstar\in U_q\), it is orthogonal to this kernel
and is therefore the unique minimum-norm point of the affine minimizer set.
Finally, \(\wstar\in U\) by the full-data minimum-norm convention, so
\cref{eq:full-span-certificate} is sufficient.
\end{proof}

The full-span condition yields a geometric certificate complexity controlled
by intrinsic rank.

\begin{definition}[Full-span certificate complexity]
\label{def:certificate-complexity}
The full-span certificate complexity of \(D\) is
\begin{equation}
  \kappa(D)
  =
  \min\left\{
    |\supp(q)|:
    q\in\Delta_N,\ h(q)=0,\ U_q=U
  \right\}.
  \label{eq:kappa}
\end{equation}
The feasible set is nonempty because the original weights \(p\) satisfy
\cref{eq:full-balance} and have feature span \(U\).  Since support sizes take
values in the finite set \(\{1,\ldots,N\}\), the displayed minimum is attained.
\end{definition}

\ifdefined\NatureAfterCertificateDefinition
  \NatureAfterCertificateDefinition
\fi

The first complexity bound depends on the observed rank and the intrinsic
dimension of the nonzero gradient atoms rather than on the ambient dimension
alone.

\begin{theorem}[Intrinsic-Dimension Certificate Bound]
\label{thm:intrinsic-bound}
Let
\[
  m=\dim U,
  \qquad
  V=\spann\{g_i:g_i\neq0\},
  \qquad
  r=\dim V.
\]
If \(m=0\), then \(\kappa(D)=1\).  If \(m\geq1\), then
\begin{equation}
  m
  \leq
  \kappa(D)
  \leq
  m+r
  \leq
  2m
  \leq
  2d.
  \label{eq:intrinsic-bound}
\end{equation}
In particular, \(r<m\) implies \(\kappa(D)\leq2m-1\), and
\(1\leq m<d\) implies \(\kappa(D)\leq2d-2\).
\end{theorem}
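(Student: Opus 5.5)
The plan is to handle the trivial case first and then prove the chain in \eqref{eq:intrinsic-bound} from left to right, relying on the certificate characterization of Theorem~\ref{thm:exact-certificate}.

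\emph{Case \(m=0\).} Every \(x_i=0\), so every \(g_i=0\). Any single-point selection therefore has \(h(q)=0\) and \(U_q=\{0\}=U\). Hence \(\kappa(D)=1\).

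\emph{Lower bound \(m\leq\kappa(D)\).} If \(U_q=U\), the support of \(q\) contains \(m\) linearly independent \(x_i\), so \(|\supp(q)|\geq m\).

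\emph{Upper bound \(\kappa(D)\leq m+r\).} I would split the certificate into two parts: a balancing part, which produces \(h(q)=0\), and a spanning part, which produces \(U_q=U\).
\begin{itemize}
\item[1.] \emph{Balancing part.} Let \(S=\{i:g_i\neq0\}\). By \eqref{eq:full-balance}, \(0=\sum_{i\in S}p_ig_i\) is a strictly positive relation among vectors spanning the \(r\)-dimensional space \(V\), so \(0\) lies in \(\relint\conv\{g_i:i\in S\}\). If \(S\) is empty, take \(T=\emptyset\). Otherwise I want a subset \(T\subseteq S\) with \(|T|\leq 2r\) carrying a strictly positive relation \(\sum_{i\in T}c_ig_i=0\) whose span is still \(V\). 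This is the Steinitz theorem cited in the main text: a point interior to the convex hull of a set spanning an \(r\)-dimensional space is interior to the hull of at most \(2r\) of its points. I will need the relative-interior version inside \(V\). Caratheodory alone only gives \(r+1\) points and loses the span condition, which is why Steinitz is needed.
\item[2.] \emph{Spanning part.} The features \(\{x_i:i\in T\}\) span some subspace \(W\subseteq U\). Extend to a basis of \(U\) by adding at most \(m-\dim W\) further indices \(R\), giving them tiny positive weights. Added rows with \(g_i=0\) do not disturb \(h\). Added rows with \(g_i\neq0\) do disturb it, so I prefer to choose \(R\) among the zero-gradient rows where possible.
\end{itemize}

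\emph{The main obstacle: accounting for the disturbance.} Rows in \(R\) with nonzero gradient break the balance, so the simple count \(2r+m\) is both too large and not obviously balanced. I would re-balance as follows. Restrict to the lines \(\R x_i\): each \(g_i=r_ix_i\) lies in \(U\), so \(V\subseteq U\) and \(r\leq m\). Then count independent directions of \(x\), not of \(g\).
\begin{itemize}
\item[1.] Rows with \(r_i\neq0\) contribute gradient directions parallel to \(x_i\). Hence \(V=\spann\{x_i:r_i\neq0,\ x_i\neq0\}\).
\item[2.] \(U\) is spanned by \(V\) together with the zero-residual features, which have \(g_i=0\) and are free for balancing. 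At most \(m-r\) of these are needed to complete \(V\) to \(U\).
\item[3.] The Steinitz subset \(T\) spans \(V\) as gradients, hence also as features, because each \(g_i\) is a nonzero multiple of \(x_i\).
\end{itemize}
This yields \(|\supp|\leq 2r+(m-r)=m+r\).

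\emph{Remaining inequalities.} The bounds \(m+r\leq 2m\leq 2d\) follow from \(r\leq m\leq d\), which holds since \(V\subseteq U\subseteq\R^d\). For the consequences:
\begin{itemize}
\item[1.] If \(r<m\), then \(\kappa(D)\leq m+r\leq 2m-1\).
\item[2.] If \(1\leq m<d\), then \(\kappa(D)\leq 2m\leq 2d-2\).
\end{itemize}

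The step I expect to be delicate is the Steinitz application: it must be done within \(V\), must tolerate several \(g_i\) on the same ray, and must yield a relation that is strictly positive on \(T\) while keeping full span \(V\).
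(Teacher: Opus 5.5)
Your proposal is correct and follows the paper's proof essentially step for step. You compress the active gradients inside \(V\) by Steinitz to at most \(2r\) atoms carrying strictly positive balancing weights and spanning \(V\) (with \(T=\emptyset\) when \(r=0\)), then complete the span with \(m-r\) zero-gradient rows whose cosets form a basis of \(U/V\); your ``disturbance'' worry disappears exactly as you conclude, because every nonzero feature either lies in \(V\) or has zero gradient.
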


The proof first compresses the active gradient balance to at most \(2r\)
atoms by Steinitz's theorem and then adds at most \(m-r\) zero-gradient
features to complete the observed span.  Strictly positive weights are kept
on the common support.  The cases \(r=0\) and \(m=0\) require separate, short
arguments.  A complete proof is given in
\ref{app:intrinsic-bound-proof}.

\ifdefined\NatureAfterIntrinsicBound
  \NatureAfterIntrinsicBound
\fi

Equality in the largest intrinsic bound has a rigid converse when every
nonzero feature carries a nonzero gradient.

\begin{theorem}[Maximal Certificate-Complexity Rigidity Theorem]
\label{thm:maximal-rigidity}
Assume \(m=\dim U\geq1\).  Then the following statements are equivalent.

\begin{enumerate}[leftmargin=*,label=(\roman*)]
  \item \(\kappa(D)=2m\).
  \item There is a basis \(v_1,\ldots,v_m\) of \(U\) such that every
  nonzero feature satisfies
  \begin{equation}
    x_i\in\bigcup_{j=1}^m\R v_j
    \qquad\text{and}\qquad
    g_i\neq0.
    \label{eq:rigid-lines}
  \end{equation}
\end{enumerate}

Under these conditions, each line contains gradients of both signs and an
inclusion-minimal positive basis of the active gradients has the form
\begin{equation}
  \{a_jv_j,-b_jv_j:j\in[m]\},
  \qquad a_j,b_j>0.
  \label{eq:paired-positive-basis}
\end{equation}
Moreover, writing each nonzero feature as
\(x_i=\xi_iv_{j(i)}\), with \(\xi_i\neq0\), yields
\begin{equation}
  L_D(w)
  =
  C_0+\sum_{j=1}^mL_j(\langle w,v_j\rangle),
  \qquad
  C_0=\sum_{i:x_i=0}p_iy_i^2,
  \label{eq:line-decomposition}
\end{equation}
where
\begin{equation}
  L_j(t)
  =
  \sum_{i:j(i)=j}p_i(\xi_it-y_i)^2.
  \label{eq:line-block}
\end{equation}
Consequently, \(\kappa(D)=2d\) holds if and only if \(U=\R^d\) and
\cref{eq:rigid-lines} holds for a basis of \(\R^d\).
\end{theorem}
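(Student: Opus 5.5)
The proof of Theorem~\ref{thm:maximal-rigidity} has two directions and then a line decomposition. The idea is to combine the Steinitz-type bound of Theorem~\ref{thm:intrinsic-bound} with the structure of inclusion-minimal positive bases. For (ii)$\Rightarrow$(i), suppose every nonzero feature lies on one of the lines $\R v_j$ and has $g_i\ne0$. Then the balance $\sum_ip_ig_i=0$ splits along the independent directions $v_j$, so it holds line by line: on each line the signed gradient coefficients balance. Any certificate $q$ with $U_q=U$ must select a feature on every line $j$. Moreover $h(q)=0$ must also hold line by line. Since every selected feature on a line has a nonzero gradient, balance on that line forces at least two selected features, one of each gradient sign. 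This gives $\kappa(D)\ge2m$, and Theorem~\ref{thm:intrinsic-bound} gives the reverse inequality. Selecting two oppositely signed gradients per line attains the bound, which also shows that each line carries gradients of both signs.

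For (i)$\Rightarrow$(ii), I would read off the equality cases in the chain $\kappa\le m+r\le2m$. Equality forces $r=m$, so $V=U$. It also forces the Steinitz step to be tight: the active gradient balance cannot be supported on fewer than $2m$ atoms together with a span completion. I would use the rigidity of maximal positive bases: a positively spanning set in an $m$-dimensional space whose inclusion-minimal positive-spanning subsets all have size $2m$ must be of the form $\{a_jv_j,-b_jv_j\}$ on $m$ independent lines. This is the form \eqref{eq:paired-positive-basis}.

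Next, take an inclusion-minimal positive basis $B$ of the active gradients (those with $g_i\neq0$). If $|B|<2m$, the positive relation on $B$ certifies balance with fewer than $2m$ points. Those points span $V=U$, hence give a full-span certificate, contradicting $\kappa=2m$. So $|B|=2m$ and $B$ is paired on a basis $v_1,\dots,v_m$.

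The main obstacle is showing that \emph{every} nonzero feature, not just those in $B$, lies on one of these lines and has nonzero gradient. I would handle the two ways this could fail by exhibiting a cheaper certificate in each case.

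\begin{itemize}
\item A zero-gradient feature $x_0\ne0$. Write $x_0$ in the basis $\{v_j\}$ and pick a line $j$ with a nonzero coefficient. Replace one gradient atom on line $j$ by $x_0$. This keeps the span, since $x_0$ restores direction $j$. It leaves the balance on line $j$ supported by a single atom, which is impossible unless that line is dropped. To avoid this, I would instead pick a different minimal positive basis that uses the remaining lines. The resulting count is $2(m-1)+1=2m-1$, contradicting $\kappa=2m$.
\item An active gradient $g_k$ off the lines. Then $g_k$ has a positive representation using atoms from at most one of $\pm v_j$ per line. Adding it to $B$ and applying Carathéodory on the positive cone lets one pair $\{a_jv_j,-b_jv_j\}$ be traded for $g_k$ plus fewer atoms. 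This produces a positive basis of size less than $2m$, again a contradiction.
\end{itemize}

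Finally, the decomposition \eqref{eq:line-decomposition}--\eqref{eq:line-block} follows directly by substituting $x_i=\xi_iv_{j(i)}$ into $L_D$. The statement about $\kappa(D)=2d$ follows because $\kappa\le2m$ forces $m=d$, so $U=\R^d$.
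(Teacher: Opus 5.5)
Your route is the paper's. For (ii)$\Rightarrow$(i) you use per-line balance. For (i)$\Rightarrow$(ii) you use $r=m$, a positive basis of active gradients forced to have exactly $2m$ elements, the structure theorem for maximal positive bases, and cheaper certificates to exclude stray rows. Taking an inclusion-minimal positively spanning subset directly, instead of the paper's Steinitz extraction, is fine once you note that strict positive balance and $V=U$ make the active gradients positively span $U$.

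\textbf{The gap: the off-line active gradient.} This is the step you yourself call the main obstacle. Trading a pair $\{a_jv_j,-b_jv_j\}$ for $g_k$, as you describe, fails for every $j$. After deleting both atoms of line $j$, at most one remaining vector, $g_k$, has nonzero image modulo $\spann\{v_i:i\neq j\}$, and that image has a single sign. So the new set neither positively spans $U$ nor carries a strictly positive balance.

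Write $g_k=\sum_{j\in J}c_jv_j$ with all $c_j\neq0$. Conic Carath\'eodory applied to $-g_k\in\operatorname{pos}(B)$ only tells you that $g_k$, together with the atoms of sign opposite to $c_j$ for $j\in J$, is a balanced simplex. You must still complete it and count:
\begin{itemize}
\item give $g_k$ coefficient one;
\item on each line $j\in J$, keep only its opposite-sign atom, with the coefficient cancelling $c_jv_j$;
\item on each line outside $J$, keep both atoms, balanced.
\end{itemize}
This is a strictly positive zero relation on rows spanning $U$, of size $1+|J|+2(m-|J|)=2m+1-|J|$. It beats $2m$ precisely because $|J|\geq2$, which is what being off the lines means. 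Your sketch never makes this count.

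\textbf{The zero-gradient bullet.} It reaches the correct certificate only after a false start. The repair is not ``a different minimal positive basis'': the $m-1$ remaining pairs do not positively span $U$, and $x_0$ carries no gradient. State it directly:
\begin{itemize}
\item choose $k$ with nonzero $v_k$-coefficient in $x_0$ and drop the whole $k$th pair;
\item balance the other $m-1$ pairs and add $x_0$ with any positive weight.
\end{itemize}
Balance is untouched because the gradient of $x_0$ vanishes. The span is full because $x_0$ together with $\{v_j:j\neq k\}$ spans $U$. This gives $2m-1$ points.

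\textbf{Two smaller points.} Your assertion that a set all of whose minimal positively spanning subsets have size $2m$ ``must be of the form'' \eqref{eq:paired-positive-basis} is misstated. Such a set need only lie on $m$ lines, possibly with repeated rays. In its correct form it is equivalent to the off-line exclusion, so it cannot replace that argument.

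In (ii)$\Rightarrow$(i), the two signs on each line come from the full-data per-line balance together with the fact that every line is represented. The attaining selection presupposes them rather than proving them.
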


For the forward direction, \cref{thm:intrinsic-bound} first forces the
active gradients to span \(U\).  For a full-dimensional interior subset
\(B\), relative interior gives \(\operatorname{pos}(B)=U\).  If a proper
subset positively spanned \(U\), a strictly positive zero representation,
after normalization, would give a smaller full-span certificate.  Thus \(B\)
is inclusion-minimal, and \(\kappa(D)=2m\) forces the maximal positive-basis
size \(2m\), so \(B\) consists of opposite pairs.  Any additional mixed
gradient or any nonzero zero-gradient feature
would explicitly produce a certificate with at most \(2m-1\) points.  For
the converse, a spanning certificate must represent every independent line,
and balance on a line cannot use only one nonzero gradient; hence it needs at
least two points per line.  The full proof, including the positive
coefficients, zero features, and \(m=1\), is in
\ref{app:rigidity-proof}.

%% file: appendix/one_dimensional.tex
The rigid geometry in \cref{thm:maximal-rigidity} reduces the selection problem
to a precise one-dimensional calculation.  The following result gives the
complete profile and characterizes equality in the factor-two bound.

\begin{theorem}[Exact One-Dimensional Compression Profile]
\label{thm:one-dimensional-profile}
Let
\[
  F(t)=\sum_{i=1}^Ma_i(t-z_i)^2,
  \qquad
  M\geq1,
  \qquad
  a_i>0,
\]
and define
\[
  A=\sum_{i=1}^Ma_i,
  \qquad
  \mu=\frac{1}{A}\sum_{i=1}^Ma_iz_i,
  \qquad
  F^\star=F(\mu).
\]
For \(\alpha\in\Delta_M\), let
\(\widehat t(\alpha)=\argmin_t\sum_i\alpha_i(t-z_i)^2
=\sum_i\alpha_i z_i\), and set
\[
  F_s^\star
  =
  \min_{\substack{\alpha\in\Delta_M\\|\supp(\alpha)|\leq s}}
  F(\widehat t(\alpha)).
\]
Then
\begin{equation}
  F_2^\star=F^\star,
  \qquad
  F_1^\star=F^\star+\delta(F),
  \qquad
  \delta(F)=A\min_{i\in[M]}(z_i-\mu)^2,
  \label{eq:one-dimensional-profile}
\end{equation}
and
\begin{equation}
  0\leq\delta(F)\leq F^\star.
  \label{eq:delta-bound}
\end{equation}
Consequently, \(F_1^\star\leq2F^\star\).

If \(F^\star>0\), equality \(F_1^\star=2F^\star\) holds if and only if
there is a \(c>0\) such that every \(z_i\) belongs to
\(\{\mu-c,\mu+c\}\) and
\begin{equation}
  \sum_{i:z_i=\mu-c}a_i
  =
  \sum_{i:z_i=\mu+c}a_i
  =
  \frac{A}{2}.
  \label{eq:factor-two-equality}
\end{equation}
If \(F^\star=0\), all \(z_i=\mu\), and the one-point selection is exact.
\end{theorem}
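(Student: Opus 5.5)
We prove \cref{thm:one-dimensional-profile} by reducing everything to the explicit formula \(\widehat t(\alpha)=\sum_i\alpha_iz_i\), a convex combination of the targets. The exact decomposition
\[
  F(t)=F^\star+A(t-\mu)^2
\]
then converts each claim into a statement about how close \(\widehat t(\alpha)\) can come to \(\mu\).

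\emph{The two-point value.} Since \(\mu\) is a weighted average of the \(z_i\), it lies in \([\min_i z_i,\max_i z_i]\). If all \(z_i\) are equal, a single point already gives \(\widehat t=\mu\). Otherwise choose indices \(i,k\) with \(z_i\leq\mu\leq z_k\) and \(z_i<z_k\). Put \(\alpha_i=(z_k-\mu)/(z_k-z_i)\) and \(\alpha_k=1-\alpha_i\). Then \(\widehat t(\alpha)=\mu\), which gives \(F_2^\star=F^\star\). If \(\mu\) equals an endpoint, one of these weights is zero, but the support still has size at most two, so the value is unaffected.

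\emph{The one-point value.} With \(|\supp(\alpha)|=1\) we have \(\widehat t=z_i\), so \(F_1^\star=F^\star+A\min_i(z_i-\mu)^2\). For \eqref{eq:delta-bound}, write
\[
  F^\star=\sum_ia_i(z_i-\mu)^2\geq\sum_ia_i\min_k(z_k-\mu)^2=\delta(F).
\]
This yields \(F_1^\star\leq2F^\star\). If \(F^\star=0\), then every \(z_i=\mu\), and one point is exact.

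\emph{The equality case.} This is the only step needing care. Suppose \(F^\star>0\) and set \(c^2=\min_i(z_i-\mu)^2\). Equality in the display above forces \((z_i-\mu)^2=c^2\) for every \(i\), because every \(a_i>0\). Moreover \(c>0\), since otherwise \(F^\star=0\). Hence every \(z_i\in\{\mu-c,\mu+c\}\). Let \(m_\pm\) denote the mass at \(\mu\pm c\). The mean condition gives \(c(m_+-m_-)=0\), so \(m_+=m_-=A/2\), which is \eqref{eq:factor-two-equality}. Conversely, under that configuration every squared distance equals \(c^2\). Then \(F^\star=Ac^2=\delta(F)\), and so \(F_1^\star=2F^\star\).

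I expect no real obstacle. The only points to handle carefully are the degenerate endpoint case in the two-point construction and the strict positivity of \(c\).
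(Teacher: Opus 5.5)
Your proposal is correct and follows the paper's proof essentially step for step. Both use the completed-square identity $F(t)=F^\star+A(t-\mu)^2$, a bracketing two-point convex combination for $F_2^\star$, the minimum-versus-weighted-average bound for $\delta(F)\leq F^\star$, and the mean identity forcing equal masses $A/2$ in the equality case; your explicit weights $\alpha_i=(z_k-\mu)/(z_k-z_i)$ simply make the paper's bracketing step concrete.
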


\begin{proof}
Completing the square gives
\begin{equation}
  F(t)=F^\star+A(t-\mu)^2.
  \label{eq:variance-decomposition}
\end{equation}
The mean \(\mu\) either equals a support value or lies between two support
values.  In the latter case, a strictly positive convex combination of the
two bracketing values equals \(\mu\).  Hence a selection of at most two
support values attains \(F^\star\).

A one-point selection outputs one of the values \(z_i\), so
\cref{eq:variance-decomposition} gives the exact formula for \(F_1^\star\).
Moreover,
\[
  \sum_{i=1}^M\frac{a_i}{A}\,
  A(z_i-\mu)^2
  =
  F^\star.
\]
The minimum of the displayed terms is at most their weighted average, which
proves \cref{eq:delta-bound}.  Equality holds precisely when every term with
positive weight equals \(F^\star\).  If \(F^\star>0\), this places every
support value at the same positive distance \(c=\sqrt{F^\star/A}\) from
\(\mu\).  Both signs must occur, and the defining mean identity then forces
equal total \(a\)-mass on the two levels.  The converse is immediate.  If
\(F^\star=0\), positivity of the \(a_i\)'s forces every \(z_i=\mu\).
\end{proof}

We now translate the profile back to a line block.  Under
\cref{eq:line-block}, for \(i\) on line \(j\) define
\begin{equation}
  a_i=p_i\xi_i^2,
  \qquad
  z_i=\frac{y_i}{\xi_i},
  \qquad
  A_j=\sum_{i:j(i)=j}a_i,
  \qquad
  \mu_j=\frac{1}{A_j}\sum_{i:j(i)=j}a_iz_i.
  \label{eq:effective-targets}
\end{equation}
Then
\[
  L_j(t)=\sum_{i:j(i)=j}a_i(t-z_i)^2,
  \qquad
  L_j^\star=L_j(\mu_j),
\]
and the nonsingular Gram matrix of \(v_1,\ldots,v_m\) makes the coordinate
map
\[
  w\longmapsto
  \bigl(\langle w,v_1\rangle,\ldots,\langle w,v_m\rangle\bigr)
\]
invertible on \(U\).  The blocks can therefore be minimized independently.
Since \(\wstar\in U\) minimizes the full objective,
\begin{equation}
  \mu_j=\langle\wstar,v_j\rangle\quad(j\in[m]),
  \qquad
  \LD=C_0+\sum_{j=1}^mL_j^\star.
  \label{eq:block-optimum-decomposition}
\end{equation}
and we write
\begin{equation}
  \delta_j
  =
  A_j\min_{i:j(i)=j}(z_i-\mu_j)^2.
  \label{eq:block-delta}
\end{equation}
Zero features do not enter \cref{eq:effective-targets}; their contribution is
the constant \(C_0\) in \cref{eq:line-decomposition}.

The abstract weights in \cref{thm:one-dimensional-profile} correspond to
actual regression selection weights.  If \(\alpha_i\) are desired effective
weights within a line, take the regression weights proportional to
\(\alpha_i/\xi_i^2\).  Local selections on different lines can be multiplied
by arbitrary positive line masses and then normalized globally.  When at
least one nonzero feature is retained on each of the \(m\) independent lines,
the selected design spans \(U\).  The chosen blockwise minimizers then specify
a unique vector \(w_U\in U\); the complete set of selected ERM minimizers is
\(w_U+U^\perp\), and minimum-norm tie-breaking returns \(w_U\).  In the
full-rank applications below, \(U=\R^d\), so the selected ERM minimizer is unique.

%% file: appendix/sharp_boundary.tex
The certificate complexity separates instances admitting a full-span exact
certificate at budget \(2d-1\) from the maximal-complexity branch that requires
additional control.  On the latter branch,
\cref{thm:one-dimensional-profile} supplies an explicit one-point compression bound
with a data-dependent penalty before any worst-case averaging is used.

\begin{theorem}[Instance-Dependent Penultimate-Budget Bound]
\label{thm:instance-bound}
For every finite weighted regression instance in \(\R^d\), exactly one of
the following two alternatives holds.
\begin{enumerate}[leftmargin=*,label=(\alph*)]
  \item If \(\kappa(D)\leq2d-1\), then
  \begin{equation}
    \LDS{2d-1}=\LD.
    \label{eq:exact-small-kappa}
  \end{equation}
  \item If \(\kappa(D)=2d\), use the \(d\)-line decomposition in
  \cref{thm:maximal-rigidity} and the penalties \(\delta_j\) from
  \cref{eq:block-delta}.  Then
  \begin{equation}
    \LDS{2d-1}
    \leq
    \LD+\min_{j\in[d]}\delta_j.
    \label{eq:instance-dependent-bound}
  \end{equation}
\end{enumerate}
In the second case,
\begin{equation}
  \LDS{2d-1}
  \leq
  \LD+\min_j\delta_j
  \leq
  \LD+\min_jL_j^\star
  \leq
  \LD+\frac{1}{d}\sum_{j=1}^dL_j^\star
  \leq
  \left(1+\frac{1}{d}\right)\LD.
  \label{eq:boundary-chain}
\end{equation}
\end{theorem}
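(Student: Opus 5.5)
The plan is to split on $\kappa(D)$, since \cref{thm:intrinsic-bound} gives $\kappa(D)\leq2m\leq2d$. The two alternatives are therefore exhaustive and mutually exclusive.

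In case (a), $\kappa(D)\leq2d-1$. By \cref{def:certificate-complexity} there is $q\in\Delta_N$ with $h(q)=0$, $U_q=U$ and $|\supp(q)|\leq2d-1$. \cref{thm:exact-certificate} then gives $\widehat w(q)=\wstar$, so $\LDS{2d-1}\leq\LD$. The reverse inequality holds because $\LD$ minimizes $L_D$ over all $w$, which proves \cref{eq:exact-small-kappa}.

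In case (b), $\kappa(D)=2d$. The last clause of \cref{thm:maximal-rigidity} gives $U=\R^d$ and a basis $v_1,\ldots,v_d$ carrying every nonzero feature, with the decomposition \cref{eq:line-decomposition}. Choose $j_0$ minimizing $\delta_j$, and build the selection line by line.
\begin{itemize}
\item On line $j_0$, keep the single point $i$ that attains $\min_i(z_i-\mu_{j_0})^2$. Its block output is $z_i$, and \cref{eq:variance-decomposition} gives block loss $L_{j_0}^\star+\delta_{j_0}$.
\item On every other line $j$, use \cref{thm:one-dimensional-profile} to keep at most two points whose strictly positive convex combination recovers $\mu_j$ exactly.
\end{itemize}
The realization paragraph after \cref{eq:block-delta} converts these effective weights into regression weights $\propto\alpha_i/\xi_i^2$. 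Scaling by positive line masses and normalizing globally gives $q\in\Delta_N$ with support at most $1+2(d-1)=2d-1$. Each of the $d$ independent lines is represented, so $U_q=\R^d$ and the selected ERM minimizer is unique. Its coordinates $\langle w,v_j\rangle$ are the blockwise outputs. Zero features are not selected and contribute $C_0$ regardless. \cref{eq:block-optimum-decomposition} therefore gives $L_D(\widehat w(q))=\LD+\delta_{j_0}$, which is \cref{eq:instance-dependent-bound}.

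The chain \cref{eq:boundary-chain} then follows in three steps.
\begin{itemize}
\item \cref{eq:delta-bound} gives $\delta_j\leq L_j^\star$ for each $j$, so $\min_j\delta_j\leq\min_jL_j^\star$.
\item A minimum is at most an average, so $\min_jL_j^\star\leq\frac1d\sum_jL_j^\star$.
\item $\sum_jL_j^\star=\LD-C_0\leq\LD$ by \cref{eq:block-optimum-decomposition}, since $C_0\geq0$.
\end{itemize}

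The main obstacle is the realization step in (b): checking that the per-line effective selections assemble into one global $q$ whose minimum-norm output is exactly the vector of blockwise optima. This is where full rank matters. Because every line keeps at least one nonzero feature, $U_q=\R^d$ and no tie-breaking ambiguity arises. I would also check the degenerate case in which a line has only one distinct target value. There the two-point selection collapses to one point, which only lowers the support count.
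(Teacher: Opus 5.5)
Your proposal is correct and follows the paper's proof essentially step for step. Both arguments use the dichotomy from the intrinsic bound and an exact certificate in case (a); in case (b) both use one-point compression on the line minimizing $\delta_j$, exact two-point compression on every other line, and then the same three-step chain. The only slip is that $\kappa(D)\leq 2m$ fails when $m=0$; there $\kappa(D)=1\leq 2d-1$, so case (a) still applies.
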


\begin{proof}
Theorem~\ref{thm:intrinsic-bound} gives \(\kappa(D)\leq2d\), so the two
integer-valued alternatives are exhaustive and disjoint.  In case (a), a
minimizing certificate in \cref{eq:kappa} is a feasible
selection with at most \(2d-1\) support points.  Theorem
\ref{thm:exact-certificate} recovers \(\wstar\).  Since \(\LD\) is the
minimum of the original risk over all predictors, the selected infimum equals
the full optimum.

In case (b), \cref{thm:maximal-rigidity} gives \(U=\R^d\) and the
decomposition
\[
  \LD=C_0+\sum_{j=1}^dL_j^\star.
\]
Choose \(j_0\) minimizing \(\delta_j\).  Use the two-point exact compression
from \cref{thm:one-dimensional-profile} on every line \(j\neq j_0\), and use
the best one-point compression on line \(j_0\).  The implementation following
\cref{eq:block-delta} gives a genuine regression selection with positive
weights and at most \(2(d-1)+1=2d-1\) support points.  Its original-data risk
is exactly
\[
  C_0+\sum_{j=1}^dL_j^\star+\delta_{j_0}
  =
  \LD+\min_j\delta_j,
\]
which proves \cref{eq:instance-dependent-bound}.  The remaining inequalities
in \cref{eq:boundary-chain} follow from
\(\delta_j\leq L_j^\star\), the minimum--average inequality, and
\(\sum_jL_j^\star\leq\LD\).
\end{proof}

The universal factor is sharp.  Equality in the universal bound forces
equality at every stage of \cref{eq:boundary-chain} together with lower bounds
for every canonical rank-deficient projection selection.

\begin{theorem}[Sharp Boundary Law and Extremal Rigidity]
\label{thm:sharp-boundary}
For every integer \(d\geq1\),
\begin{equation}
  \boxed{\GammaDN{2d-1}=1+\frac{1}{d}.}
  \label{eq:sharp-boundary}
\end{equation}

Moreover, suppose an instance \(D\) with \(\LD>0\) attains the universal
ratio:
\begin{equation}
  \frac{\LDS{2d-1}}{\LD}=1+\frac{1}{d}.
  \label{eq:extremal-instance}
\end{equation}
Then all of the following conditions are necessary.
\begin{enumerate}[leftmargin=*,label=(\roman*)]
  \item \(\kappa(D)=2d\).  Hence \(U=\R^d\), every nonzero feature has a
  nonzero gradient, and all nonzero features lie on \(d\) independent lines
  as in \cref{thm:maximal-rigidity}.
  \item The zero-feature loss vanishes and all line blocks have equal
  positive optimal loss:
  \begin{equation}
    C_0=0,
    \qquad
    L_1^\star=\cdots=L_d^\star=\frac{\LD}{d}>0.
    \label{eq:equal-block-loss}
  \end{equation}
  Equivalently, every support point with \(x_i=0\) has \(y_i=0\).
  \item Every block is a nondegenerate factor-two one-point extremizer:
  \begin{equation}
    \delta_j=L_j^\star
    \qquad (j\in[d]).
    \label{eq:all-block-extremal}
  \end{equation}
  Equivalently, its effective targets take exactly two values
  \(\mu_j-c_j\) and \(\mu_j+c_j\), with \(c_j>0\), and the two levels have
  equal total effective mass \(A_j/2\).
  \item For every nonempty proper \(J\subset[d]\), let
  \(W_J=\spann\{v_j:j\in J\}\) and let \(P_J\) be the Euclidean orthogonal
  projection onto \(W_J\).  Then
  \begin{equation}
    \sum_{k\notin J}
    A_k\bigl(\langle P_J\wstar,v_k\rangle-\mu_k\bigr)^2
    \geq
    \frac{\LD}{d}.
    \label{eq:canonical-projection-bound}
  \end{equation}
\end{enumerate}
\end{theorem}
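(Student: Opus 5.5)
The argument has three parts: the upper bound, the extremal necessary conditions, and the matching lower bound.

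\emph{Upper bound and lower bound.} The bound \(\GammaDN{2d-1}\leq1+1/d\) follows from Theorem~\ref{thm:instance-bound}. If \(\kappa(D)\leq2d-1\) the ratio is one; otherwise \cref{eq:boundary-chain} gives at most \(1+1/d\). The \(\LD=0\) case is excluded by the definition of \(\GammaDN{\cdot}\). For the matching lower bound I take the uniform coordinate pairs \((e_j,0),(e_j,2)\), each with weight \(1/(2d)\). Then \(\wstar=(1,\ldots,1)\), every line block has \(L_j^\star=1/d\), and \(\LD=1\).

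Any selection of at most \(2d-1\) points falls into one of two cases.
\begin{itemize}
\item It spans \(\R^d\). Then some line carries exactly one selected point, and that coordinate is fit to \(0\) or \(2\). The block loss on that line becomes \(2/d\), so the total risk is at least \(1+1/d\).
\item It misses a coordinate \(e_k\). The minimum-norm rule then sets \(w_k=0\), so block \(k\) has loss \(\tfrac12(0^2+2^2)/d=2/d\). The risk is again at least \(1+1/d\).
\end{itemize}
Hence the infimum equals \(1+1/d\), which proves \cref{eq:sharp-boundary}.

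\emph{Necessary conditions (i)--(iii).} Assume \cref{eq:extremal-instance}. If \(\kappa(D)\leq2d-1\), case (a) of Theorem~\ref{thm:instance-bound} gives ratio one, a contradiction; so \(\kappa(D)=2d\). Theorem~\ref{thm:maximal-rigidity} then yields the \(d\)-line structure, which is (i). Every inequality in \cref{eq:boundary-chain} must now be an equality, and I read off the conditions one link at a time.
\begin{itemize}
\item The last link is \(\sum_jL_j^\star\leq\LD\). Equality gives \(C_0=0\), and since \(C_0=\sum_{x_i=0}p_iy_i^2\), every zero-feature row has \(y_i=0\).
\item Equality in the minimum--average link forces all \(L_j^\star\) to be equal, hence each equals \(\LD/d\), which is positive because \(\LD>0\). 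This is (ii).
\item Equality in \(\min_j\delta_j=\min_jL_j^\star\) makes \(\delta_j\geq\min_j\delta_j=L^\star\) for every \(j\). Combined with \(\delta_j\leq L_j^\star=L^\star\), this gives \(\delta_j=L_j^\star\) for all \(j\). The two-level description in (iii) then follows from the equality case of Theorem~\ref{thm:one-dimensional-profile}, since \(L_j^\star>0\).
\end{itemize}

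\emph{Condition (iv).} For a proper nonempty \(J\), I would build a selection using two points per line on \(J\) with weights chosen so that the exact two-point compression recovers \(\mu_j\) on each \(j\in J\), and no points on lines outside \(J\). This uses \(2|J|\leq2d-2\) points. The selected span is \(W_J\), and the selected minimizer set is \(\{w:\langle w,v_j\rangle=\mu_j,\ j\in J\}\). Its minimum-norm point is the point of \(W_J\) with these inner products, which should be \(P_J\wstar\), because \(\langle P_J\wstar,v_j\rangle=\langle\wstar,v_j\rangle=\mu_j\) for \(v_j\in W_J\). This identification is the main point to check.

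The risk of this selection is
\[
\LD+\sum_{k\notin J}A_k\bigl(\langle P_J\wstar,v_k\rangle-\mu_k\bigr)^2,
\]
using \cref{eq:variance-decomposition} on each block and \(C_0=0\). Extremality says this risk is at least \((1+1/d)\LD\), which gives \cref{eq:canonical-projection-bound}.
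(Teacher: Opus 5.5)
Your proposal is correct and follows the paper's proof essentially step for step: the upper bound via Theorem~\ref{thm:instance-bound}, the uniform pairs $(e_j,0),(e_j,2)$ for the lower bound, forced equality throughout \cref{eq:boundary-chain} for (i)--(iii), and exact two-point compression on the lines of $J$, whose minimum-norm output is $P_J\wstar$, for (iv). The only differences are cosmetic: the paper also exhibits an explicit attaining selection for the pair instance, and your appeal to $C_0=0$ in (iv) is unneeded because $\LD=C_0+\sum_jL_j^\star$ already absorbs that constant.
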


\begin{proof}
The upper bound in \cref{eq:sharp-boundary} is
\cref{thm:instance-bound}.  For the lower bound, consider the uniform
\(2d\)-point instance
\begin{equation}
  D_d^{\mathrm{pair}}
  =
  \operatorname{Unif}
  \{(e_j,0),(e_j,2):j\in[d]\}.
  \label{eq:pair-instance}
\end{equation}
Its full-data solution is \(\wstar=(1,\ldots,1)\), and its optimal risk is
one.  For a selected minimum-norm ERM, write \(t_j\) for coordinate \(j\).
The original-risk contribution of that coordinate is
\begin{equation}
  \frac{1}{2d}\bigl(t_j^2+(t_j-2)^2\bigr)
  =
  \frac{1}{d}+\frac{(t_j-1)^2}{d}.
  \label{eq:coordinate-risk}
\end{equation}
With at most \(2d-1\) positive-weight points, some coordinate contains at
most one selected endpoint.  If one endpoint is selected, \(t_j\) equals
zero or two; if neither is selected, minimum-norm tie-breaking gives
\(t_j=0\).  That coordinate contributes \(2/d\), while every other
coordinate contributes at least \(1/d\).  Thus every feasible selection has
risk at least \(1+1/d\).  Selecting equal-weight pairs on \(d-1\)
coordinates and one endpoint on the last coordinate attains this value.
Therefore both the instance infimum and the distributional supremum in
\cref{eq:sharp-boundary} are attained.

Now assume \cref{eq:extremal-instance}.  If
\(\kappa(D)\leq2d-1\), \cref{eq:exact-small-kappa} would give ratio one.
Hence \(\kappa(D)=2d\), proving (i).  Combining the assumed equality with
\cref{eq:boundary-chain} gives
\[
  \left(1+\frac1d\right)\LD
  \leq
  \LD+\min_j\delta_j
  \leq
  \LD+\min_jL_j^\star
  \leq
  \LD+\frac1d\sum_jL_j^\star
  \leq
  \left(1+\frac1d\right)\LD.
\]
Every inequality is therefore an equality.  The last equality gives
\(C_0=0\); equality in the minimum--average step makes all
\(L_j^\star=\LD/d\); and
\(\delta_j\leq L_j^\star\), together with equality of their minima, gives
\(\delta_j=L_j^\star\) for every \(j\).  This proves (ii) and (iii), including
the two-level characterization from
\cref{thm:one-dimensional-profile}.

Finally, fix a nonempty proper \(J\subset[d]\) and apply exact two-point
compression only on the lines in \(J\).  The selected objective is minimized
by \(\wstar+W_J^\perp\), so its minimum-norm solution is \(P_J\wstar\).
This uses at most \(2|J|\leq2d-2\) support points and is feasible at budget
\(2d-1\).  Variance decomposition on each line gives
\[
  L_D(P_J\wstar)
  =
  \LD+
  \sum_{k\notin J}
  A_k\bigl(\langle P_J\wstar,v_k\rangle-\mu_k\bigr)^2.
\]
No feasible selection can have risk below the infimum in
\cref{eq:extremal-instance}, which proves
\cref{eq:canonical-projection-bound}.

Under the extremal hypothesis, every budget-\((2d-1)\) selection therefore
has risk at least \(\LD+\LD/d\).  Thus no rank-changing sequence of weights
can approach the full optimum at that budget.  In contrast,
\(\kappa(D)=2d\) supplies an actual \(2d\)-point exact certificate, so exact
recovery is attained at the universal recovery threshold.
\end{proof}

%% file: appendix/proofs.tex
\subsection{Convex-geometric preliminaries}
\label{app:geometry-preliminaries}

We record the two external convex-geometric inputs and the positivity facts
used below.  If a finite set \(G\) spans a linear space \(V\) and zero is a
strictly positive convex combination of all points in \(G\), then
\[
  0\in\relint_V\conv(G).
\]
Indeed, a supporting functional at zero would be nonnegative on \(G\).
Applying it to the strictly positive combination forces it to vanish on every
point of \(G\), and hence on \(V\), a contradiction.  Conversely, if
\(0\in\relint\conv(G)\), then zero has a convex representation with a
strictly positive coefficient on every point of \(G\).  This is the standard
identity that a linear map sends the relative interior of a simplex onto the
relative interior of its image.

Carath\'eodory's theorem gives a small convex representation, but such a
representation need not retain the full-dimensional relative-interior
property required here \cite{scherk1966caratheodory}.

We use Steinitz's theorem in the following relative-interior form: if zero is
in the relative interior of the convex hull of a finite subset of an
\(r\)-dimensional linear space, then a subset of at most \(2r\) points still
has an \(r\)-dimensional convex hull containing zero in its relative interior
\cite{steinitz1916,ivanov2024quantitative}.

We also use the extremal structure of a positive basis.  An
inclusion-minimal positive spanning set in an \(m\)-dimensional space has at
most \(2m\) vectors; if it has exactly \(2m\), then, after rescaling and
reindexing, it is
\[
  \{a_jv_j,-b_jv_j:j\in[m]\},
  \qquad a_j,b_j>0,
\]
for a basis \(v_1,\ldots,v_m\).  This is the maximal-cardinality case of the
positive-basis structure theorem; see
\cite{davis1954positive,regis2016positive} and the discussion preceding
Theorem~2 of \cite{naevdal2019positive}.

\subsection{Proof of the intrinsic-dimension certificate bound}
\label{app:intrinsic-bound-proof}

\begin{proof}[Proof of \cref{thm:intrinsic-bound}]
Because \(g_i\) is a scalar multiple of \(x_i\), \(V\subseteq U\), and hence
\(r\leq m\).  If \(m=0\), all features and gradients are zero.  Any one-point
selection is a full-span certificate, while every point of \(\Delta_N\) has
nonempty support.  Thus \(\kappa(D)=1\).

Assume \(m\geq1\).  Every collection of features spanning \(U\) contains at
least \(m\) points, so \(\kappa(D)\geq m\).  If \(r=0\), all gradients are
zero.  Choose \(m\) features forming a basis of \(U\) and give them arbitrary
positive normalized weights.  This is a certificate with \(m\) points, so the
claimed upper bound holds.

It remains to consider \(r\geq1\).  Let
\[
  A=\{i:g_i\neq0\}.
\]
The active gradients span \(V\).  By \cref{eq:full-balance}, positivity of the
original masses, and \(g_i=0\) outside \(A\),
\[
  \sum_{i\in A}p_ig_i=0.
\]
Since \(A\neq\varnothing\), define the normalized active masses
\[
  \widetilde p_i
  =
  \frac{p_i}{\sum_{k\in A}p_k}
  \quad(i\in A).
\]
They are strictly positive, sum to one, and satisfy
\(\sum_{i\in A}\widetilde p_i g_i=0\).  The first preliminary fact therefore
gives
\[
  0\in\relint_V\conv\{g_i:i\in A\}.
\]
Steinitz's theorem gives \(T\subseteq A\), with \(|T|\leq2r\), such that the
gradients in \(T\) span \(V\) and zero lies in the relative interior of their
convex hull.  Thus there are coefficients \(\beta_i>0\), \(i\in T\), summing
to one and satisfying \(\sum_{i\in T}\beta_ig_i=0\).  Since \(g_i\) is a
nonzero scalar multiple of \(x_i\) on \(A\), the features indexed by \(T\)
also span \(V\).

If \(r=m\), set \(q_i=\beta_i\) on \(T\) and zero elsewhere.  This is a
full-span certificate with at most \(2m=m+r\) support points.

If \(r<m\), the active features span exactly \(V\).  Select a set
\(Z\subseteq[N]\setminus A\) of \(m-r\) indices whose feature cosets form a
basis of \(U/V\).  Every \(z\in Z\) has \(g_z=0\).  Choose positive
coefficients \(\eta_z\), \(z\in Z\), summing to one, and fix
\(\varepsilon\in(0,1)\).  Define
\[
  q_i=(1-\varepsilon)\beta_i\quad(i\in T),
  \qquad
  q_z=\varepsilon\eta_z\quad(z\in Z),
\]
with all other weights zero.  Its gradient sum is zero, its features span
\(U\), and its support size is at most
\[
  2r+(m-r)=m+r.
\]
This proves \(\kappa(D)\leq m+r\leq2m\leq2d\).  If \(r<m\), then
\(m+r\leq2m-1\); if \(1\leq m<d\), then \(2m\leq2d-2\), proving the stated
consequences.
\end{proof}

\subsection{Proof of maximal certificate-complexity rigidity}
\label{app:rigidity-proof}

\begin{proof}[Proof of \cref{thm:maximal-rigidity}]
Assume first that \(\kappa(D)=2m\).  Theorem~\ref{thm:intrinsic-bound} gives
\(2m\leq m+r\leq2m\), so \(r=m\) and \(V=U\).  As in the preceding proof,
the positive full-data balance puts zero in the relative interior of the
convex hull of the active gradients.  Steinitz's theorem supplies a
full-dimensional interior subset \(B\) of at most \(2m\) active gradients.
Its strictly positive balancing coefficients and its feature span make it a
certificate.  Moreover,
\[
  0\in\relint_U\conv(B)
  \quad\Longrightarrow\quad
  \operatorname{pos}(B)=U,
\]
because a relative neighborhood of zero in \(U\) lies in \(\conv(B)\).
If a proper subset \(C\subset B\) positively spanned \(U\), then
\(-c\in\operatorname{pos}(C)\) for every \(c\in C\).  Summing these
representations would give a zero combination with a strictly positive
coefficient on every point of \(C\).  After normalization, these coefficients
would balance the corresponding active gradients.  Because every active
gradient is a nonzero scalar multiple of its feature, the corresponding
features also span \(U\).  Thus \(C\) would yield a smaller full-span
certificate.  Maximality of \(\kappa(D)\) rules this out and also forces
\(|B|=2m\).  Hence \(B\) is an inclusion-minimal positive spanning set of
maximal cardinality.  By the structure theorem in
\ref{app:geometry-preliminaries}, after reindexing,
\begin{equation}
  B=\{a_jv_j,-b_jv_j:j\in[m]\},
  \qquad a_j,b_j>0,
  \label{eq:appendix-maximal-positive-basis}
\end{equation}
for a basis \(v_1,\ldots,v_m\) of \(U\).

We next rule out a mixed active gradient.  Suppose an active gradient outside
the union of the basis lines has the expansion
\[
  h=\sum_{j\in J}c_jv_j,
  \qquad |J|\geq2,
  \qquad c_j\neq0.
\]
Give \(h\) coefficient one.  For each \(j\in J\), choose from
\cref{eq:appendix-maximal-positive-basis} the vector with sign opposite to
\(c_j\), with the unique positive coefficient that cancels \(c_jv_j\).  For
each \(k\notin J\), choose both vectors of the \(k\)th pair with positive
coefficients that balance the pair.  The resulting positive combination sums
to zero, spans \(U\), and uses
\[
  1+|J|+2(m-|J|)=2m-|J|+1\leq2m-1
\]
active points.  After normalization it is a smaller certificate, a
contradiction.  Hence every active gradient, and therefore its feature, lies
on one of the lines \(\R v_j\).

Now suppose that an index \(\ell\) has \(x_\ell\neq0\) but \(g_\ell=0\).
Write \(x_\ell=\sum_jc_jv_j\) and choose \(k\) with \(c_k\neq0\).  Select
\(\ell\) and both members of every pair in
\cref{eq:appendix-maximal-positive-basis} except the \(k\)th pair.  Give the
pairs positive balancing coefficients and give \(\ell\) any positive
coefficient.  The gradients balance, while \(x_\ell\) together with
\(\{v_j:j\neq k\}\) spans \(U\).  This certificate uses
\(1+2(m-1)=2m-1\) points, again a contradiction.  Thus every nonzero feature
has a nonzero gradient, proving the geometric condition in (ii).

Conversely, assume (ii).  For each line write every gradient on that line as
\(g_i=\gamma_iv_j\), where \(\gamma_i\neq0\).  Each line is represented
because the features span \(U\).  Resolving the full-data balance
\cref{eq:full-balance} in the basis \(v_1,\ldots,v_m\) gives
\[
  \sum_{i:x_i\in\R v_j\setminus\{0\}}p_i\gamma_i=0
  \qquad (j\in[m]).
\]
Hence every line contains gradients of both signs.  Choose one positive and
one negative gradient from each line, balance each chosen pair with strictly
positive coefficients, combine the \(m\) pairwise balances, and normalize.
This gives a full-span certificate with \(2m\) points.

For the reverse inequality, let \(q\) be any full-span certificate.  Since all
nonzero selected features lie on the independent basis lines, its support must
represent every line.  Gradient balance also holds separately on every line.
With positive selection weights and nonzero \(\gamma_i\), a single selected
point cannot balance a line, so at least two points are required on each of
the \(m\) lines.  Therefore \(|\supp(q)|\geq2m\), proving
\(\kappa(D)=2m\).

It remains only to record the loss decomposition.  Under either equivalent
condition, write every nonzero feature as \(x_i=\xi_iv_{j(i)}\), with
\(\xi_i\neq0\).  Zero features contribute the constant
\(C_0=\sum_{i:x_i=0}p_iy_i^2\), while direct substitution gives
\[
  L_D(w)=C_0+
  \sum_{j=1}^m\sum_{i:j(i)=j}
  p_i\bigl(\xi_i\langle w,v_j\rangle-y_i\bigr)^2.
\]
The coordinate map from \(U\) to
\((\langle w,v_1\rangle,\ldots,\langle w,v_m\rangle)\) is invertible because
the Gram matrix of the basis is nonsingular; orthogonality is not required.
This proves \cref{eq:line-decomposition}.  Finally,
\(\kappa(D)=2d\) forces \(m=d\) by
\cref{eq:intrinsic-bound}, and the ambient equivalence follows from the result
just proved.
\end{proof}

%% file: appendix/five_dimensional_proof.tex
\subsection{Normalized roots and realizable training outputs}
\label{app:five-interfaces}
Throughout the geometric argument, let \(V=\{v_i\}_{i=1}^N\) consist of
nonzero vectors in \(\R^5\), with
\begin{equation}
 \lambda_i>0,\quad \sum_i\lambda_i=1,\quad
 \sum_i\lambda_i v_i=0,\quad
 \sum_i\lambda_i v_iv_i^\top=I_5.                 \label{eq:five-normal}
\end{equation}
Write \(\delta_6(V)\) for the infimum of \(\|z\|^2\) over the unique
minimizers of
\(\sum_{i\in S}w_i(v_i^\top z-1)^2\), where \(|S|\leq6\),
\(w_i>0\), and \(\spann\{v_i:i\in S\}=\R^5\).  By
\eqref{eq:five-normal}, their evaluation risk is \(1+\|z\|^2\).
Every root belongs to a positive circuit, and the roots positively generate
\(\R^5\).  A positive circuit is the support of an extreme ray of
\(\{c\geq0:\sum_i c_i v_i=0\}\); one of rank \(r\) has \(r+1\) roots.
Decomposing a positive zero relation into extreme rays proves these facts.

\begin{lemma}[Circuit--interpolation realization]
\label{lem:five-realization}
Let \(A\) be a positive circuit of rank \(r\), with span \(P\), and let
\(J\) consist of \(5-r\) roots independent modulo \(P\).  If
\[
 v_j^\top z=1\ (j\in J),\qquad v_i^\top z\leq1\ (i\in A),
\]
then \(z\) is a limit of admissible six-row training outputs.
\end{lemma}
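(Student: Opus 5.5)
The plan follows the main-text proof of \cref{lem:main-realization}, adding only the degenerate cases. First fix the two objects the construction needs. Since \(A\) is a positive circuit, take coefficients \(c_i>0\) (\(i\in A\)) with \(\sum_{i\in A}c_iv_i=0\). Because the roots in \(J\) are independent modulo \(P\), the linear map \(z\mapsto(v_j^\top z)_{j\in J}\) restricted to \(P^\perp\) is a bijection onto \(\R^{5-r}\). Hence there is a unique \(z_0\in P^\perp\) with \(v_j^\top z_0=1\) for \(j\in J\) and \(v_i^\top z_0=0<1\) for \(i\in A\).

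Next interpolate. For \(\epsilon\in(0,1]\) set \(z_\epsilon=(1-\epsilon)z+\epsilon z_0\). The equalities on \(J\) are affine, so they persist. On \(A\) we get \(v_i^\top z_\epsilon\leq(1-\epsilon)\cdot1+\epsilon\cdot0<1\), so the inequalities become strict. Then define weights \(w_i=c_i/(1-v_i^\top z_\epsilon)>0\) for \(i\in A\) and \(w_j=1\) for \(j\in J\).

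Now check that \(z_\epsilon\) is the training output. The half-gradient of \(\sum_{i\in A\cup J}w_i(v_i^\top z-1)^2\) at \(z_\epsilon\) equals
\[
\sum_{i\in A}w_i(v_i^\top z_\epsilon-1)v_i=-\sum_{i\in A}c_iv_i=0,
\]
because the rows in \(J\) have zero residual. The rows of \(A\) span \(P\) and those of \(J\) complete a basis modulo \(P\), so the selected rows span \(\R^5\). The Hessian is then positive definite, and \(z_\epsilon\) is the unique minimizer, as \(\delta_6\) requires. The support has \((r+1)+(5-r)=6\) rows, and normalizing the weights to a probability does not change the minimizer. Letting \(\epsilon\downarrow0\) gives \(z_\epsilon\to z\).

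The main obstacle is the bookkeeping of admissibility rather than any estimate. Three points need care. First, \(A\) and \(J\) must be disjoint; this holds because the roots of \(J\) lie outside \(P\) modulo independence. Second, the rank-five case \(r=5\) has \(J=\varnothing\), so \(z_0=0\) and the argument runs unchanged. Third, the normalized root weights must correspond to original positive training weights \(w_i/\eta_i^2\), as noted in the normalization section, so that the limit is a genuine limit of six-row regression fits.
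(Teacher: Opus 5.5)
Your proposal is correct and follows the paper's proof step for step: the same positive coefficients \(c_i\), the same interpolant \(z_0\in P^\perp\), the same perturbation \(z_\epsilon=(1-\epsilon)z+\epsilon z_0\), and the same weights \(w_i=c_i/(1-v_i^\top z_\epsilon)\), with full span giving a unique minimizer. Your extra remarks (bijectivity of the evaluation map on \(P^\perp\), disjointness of \(A\) and \(J\), and the case \(r=5\)) make explicit what the paper leaves implicit. The final remark about converting to weights \(w_i/\eta_i^2\) is not needed here, because \(\delta_6\) is defined directly on roots; that conversion belongs to the later transfer step.
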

\begin{proof}
Choose \(c_i>0\) with \(\sum_{i\in A}c_iv_i=0\).  There is a unique
\(z_0\in P^\perp\) interpolating \(J\).  For
\(z_\epsilon=(1-\epsilon)z+\epsilon z_0\), the evaluations on \(A\) are
strictly less than one.  Set
\(w_i=c_i/(1-v_i^\top z_\epsilon)\) on \(A\) and give \(J\) arbitrary
positive weights.  The normal equations hold because the interpolated rows
contribute zero and the remaining gradient is \(-\sum c_iv_i=0\).
The selected span is full and \(|A|+|J|=6\).  Let \(\epsilon\downarrow0\).
\end{proof}

In particular, \(|v_i^\top z|\leq1\) for every root implies
\(\|z\|^2\leq1\).  For a rank-three circuit, two roots evaluating to one
and independent in its quotient therefore suffice for \(\delta_6\leq1\).

Call a root \emph{good} if it is a strictly positive combination of five
other independent roots.  Let \(E\) be the set of remaining, or
\emph{exceptional}, roots and set \(K=\conv\{\pm v_i\}\).
\begin{lemma}[Boundary and comparison second moment]
\label{lem:five-exceptional}
If a good root lies on \(\partial K\), then \(\delta_6\leq1\).
Otherwise
\begin{equation}
 K=\conv\{\pm e:e\in E\},\qquad
 C=\sum_{e\in E}\mu_e ee^\top\succeq I_5,
 \quad \mu_e\geq0,\quad\sum_e\mu_e=1             \label{eq:five-comparison}
\end{equation}
for some \(\mu\), and \(E\) spans \(\R^5\).
\end{lemma}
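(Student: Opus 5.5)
The plan has two parts, mirroring the two claims of \cref{lem:five-exceptional}: a good root on $\partial K$ gives a fit with $\|z\|\leq1$, and if no such root exists we build $\mu$ from a signed-hull representation of each root.

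\emph{Good root on the boundary.} Let $g$ be a good root with $g\in\partial K$. Since $K$ is a symmetric polytope with nonempty interior (the roots span $\R^5$ by \eqref{eq:five-normal}), there is a supporting functional $z$ with $v_i^\top z\leq1$ and $-v_i^\top z\leq1$ for all $i$, and $g^\top z=1$. Then $|v_i^\top z|\leq1$ for every root, so
\[
\|z\|^2=\sum_i\lambda_i(v_i^\top z)^2\leq1 .
\]
To realize $z$, write $g=\sum_{k=1}^5 c_ku_k$ with $c_k>0$ and $u_k$ independent roots. This gives a rank-five positive relation $g-\sum_k c_ku_k=0$. Signs need care here: the circuit needs positive coefficients on the roots themselves, so I would use $-g$ only through evaluations. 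Concretely, $1=g^\top z=\sum_k c_ku_k^\top z$.

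I would train on the six rows $\{g,u_1,\dots,u_5\}$. The realization needs a positive combination of residual-weighted rows vanishing. Assign weight $w_g$ to the row $g$, whose residual is $0$, and weight $w_k>0$ to $u_k$ with residual $u_k^\top z-1\leq0$. If some $u_k^\top z<1$, the gradient $\sum_k w_k(u_k^\top z-1)u_k$ cannot vanish by independence. So instead I would perturb $z$ slightly so that all six evaluations equal one where possible. The simplest route is to note that $g^\top z=1$ together with $u_k^\top z\leq1$ forces $\sum_k c_k(1-u_k^\top z)=\sum_k c_k-1$. I therefore expect to replace $z$ by the point of the face interpolating suitable active roots and invoke \cref{lem:five-realization} with $J=\{g\}$ and a circuit through $g$'s representation. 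The main obstacle is arranging this realization, that is, choosing which roots interpolate so that the limit fit still has norm at most one. My fallback is to choose $z$ as a vertex of the polar polytope $\{z:|v_i^\top z|\leq1\}$ on the face determined by $g$. At such a vertex, five independent active roots interpolate with signs $\pm1$. Flipping the signs of rows with evaluation $-1$ is not allowed, so I would use goodness: $g$ is interior to the cone of $u_1,\dots,u_5$, which supplies the positive circuit containing $g$ needed for the realization lemma.

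\emph{No good boundary root.} Every vertex of $K$ is $\pm v$ for a root $v$ on $\partial K$, so the vertices come from exceptional roots and $K=\conv\{\pm e:e\in E\}$. Because $K$ is full-dimensional, $E$ spans $\R^5$. For each root $v_i$, write $v_i=\sum_e\tau_{ie}^{+}e-\sum_e\tau_{ie}^{-}e$ with nonnegative coefficients summing to one. Convexity of $x\mapsto(x^\top y)^2$ gives
\[
(v_i^\top y)^2\leq\sum_e(\tau_{ie}^{+}+\tau_{ie}^{-})(e^\top y)^2 .
\]
Set $\mu_e=\sum_i\lambda_i(\tau_{ie}^{+}+\tau_{ie}^{-})$. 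These weights are nonnegative and sum to one, and
\[
y^\top y=\sum_i\lambda_i(v_i^\top y)^2\leq y^\top Cy,
\]
so $C\succeq I_5$.
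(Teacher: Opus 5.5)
The second half of your proposal is correct and matches the paper. Vertices of $K$ come from $\pm E$, full dimensionality gives $\spann E=\R^5$, and Jensen averaged with $\lambda_i$ gives $C\succeq I_5$ with $\sum_e\mu_e=1$.

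\textbf{The gap is in the first half.} You correctly obtain a supporting functional $z$ with $|v_i^\top z|\le1$ and $g^\top z=1$, hence $\|z\|^2\le1$. But you never show that $z$ is a limit of admissible six-row fits, and the route you propose cannot work. Goodness gives $g=\sum_kc_ku_k$ with $c_k>0$, so the relation $g-\sum_kc_ku_k=0$ has mixed signs. It is not a positive circuit, and it does not supply one through $g$: that would require $-g$, not $g$, in the positive hull of other roots. Consequently Lemma~\ref{lem:five-realization} with $J=\{g\}$ is unavailable. It needs a rank-four positive circuit with evaluations at most one, and nothing in your setup provides one. The polar-vertex fallback fails for the reason you note yourself: rows evaluating to $-1$ cannot be interpolated.

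\textbf{The missing idea is to let $g$ overshoot rather than interpolate.} Put $a=\sum_kc_k$. Then $1=g^\top z=\sum_kc_ku_k^\top z\le a$.
\begin{itemize}
\item If $a=1$, every $u_k^\top z=1$, so the five independent rows interpolate at $z$ and you are done.
\item If $a>1$, pick $c\in(1/a,1)$, let $y$ satisfy $u_k^\top y=c$ for all $k$, and set $z_\epsilon=(1-\epsilon)z+\epsilon y$. Then $u_k^\top z_\epsilon<1$ and $g^\top z_\epsilon>1$. The weights $w_g=(g^\top z_\epsilon-1)^{-1}$ and $w_k=c_k/(1-u_k^\top z_\epsilon)$ are therefore positive, and the half-gradient at $z_\epsilon$ is $g-\sum_kc_ku_k=0$. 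The six rows span $\R^5$, so $z_\epsilon$ is the unique minimizer, and $z_\epsilon\to z$.
\end{itemize}
The sign mismatch you were fighting is exactly what the relation of a good root compensates for. The over-fit row $g$ carries a positive residual, while the under-fit generators carry negative residuals.
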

\begin{proof}
Write the good boundary root as \(g=\sum_{j=1}^5a_jv_j\), \(a_j>0\),
and choose a supporting functional \(z\) with \(g^\top z=1\) and
\(|v_i^\top z|\leq1\).  Put \(a=\sum a_j\geq1\).  If \(a=1\), all five
independent generators evaluate to one.  If \(a>1\), take \(c\in(1/a,1)\)
and the vector evaluating to \(c\) on the five generators.  A short segment
from \(z\) toward this vector gives \(z_\epsilon\) with generator
evaluations below one and \(g^\top z_\epsilon>1\).  The weights
\[
 w_g=(g^\top z_\epsilon-1)^{-1},\qquad
 w_j=a_j/(1-v_j^\top z_\epsilon)
\]
give a full-rank six-row fit.  Its limit has squared norm at most one.
In the other case every vertex of \(K\) comes from \(\pm E\).
Express each \(v_i\) as a convex combination of signed exceptional roots
and apply Jensen's inequality to \((v_i^\top h)^2\), then average with
\(\lambda_i\).  This proves \eqref{eq:five-comparison} for every \(h\).
Here \(C\) is a second moment; its use requires no mean-zero property of
\(\mu\).
\end{proof}

\begin{lemma}[Noncentered interpolation]
\label{lem:five-interpolation}
Suppose \(z_i\) span \(\R^k\), \(\eta_i\geq0\), and
\(B=\sum_i\eta_i z_iz_i^\top\succ0\).  Set
\(b=\sum_i\eta_i z_i\) and \(a=\sum_i\eta_i\).
Some \(k\) independent rows have an interpolant \(t\) with
\(t^\top Bt\leq ka\).
\end{lemma}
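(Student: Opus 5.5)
The plan is to use determinant-weighted (volume) sampling of $k$-subsets $J$. We take $\Pr(J)\propto\prod_{j\in J}\eta_j\,\det(Z_J)^2$, where $Z_J$ is the $k\times k$ matrix with rows $z_j^\top$ ($j\in J$). We then bound the expectation of $t_J^\top Bt_J$, where $t_J=Z_J^{-1}\boldsymbol 1$ is the interpolant of $J$. By Cauchy--Binet,
\[
\sum_{|J|=k}\eta_J\det(Z_J)^2=\det B>0,
\]
so this is a probability distribution. It charges only nonsingular $J$, that is, independent rows. Hence it suffices to show $\mathbb E[t_J^\top Bt_J]\le ka$: some $J$ in the support then attains the bound.

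\textbf{Step 1 (unbiasedness).} I will show $\mathbb E[t_J]=B^{-1}b$. This is the standard volume-sampling identity $\mathbb E[X_J^{-1}y_J]=X^+y$, applied to the rows $\sqrt{\eta_i}z_i$ with responses $\sqrt{\eta_i}$. Its proof combines Cramer's rule, $\det(Z_J)(t_J)_\ell=\det(Z_J\text{ with column }\ell\text{ replaced by }\boldsymbol1)$, with Cauchy--Binet applied to $\det$ of $B$ with column $\ell$ replaced by $b$. Consequently $\mathbb E[b^\top t_J]=b^\top B^{-1}b$.

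\textbf{Step 2 (augmented Cauchy--Binet).} This is the step where the rank-one translation enters. Lift each row to $u_i=(z_i,1)\in\R^{k+1}$ and set
\[
\widehat B=\sum_i\eta_iu_iu_i^\top=\begin{pmatrix}B&b\\ b^\top&a\end{pmatrix},
\qquad
\det\widehat B=\det B\,\bigl(a-b^\top B^{-1}b\bigr).
\]
For nonsingular $Z_J$ and $i\notin J$, a Schur-complement expansion gives $\det U_{J\cup\{i\}}=\pm\det(Z_J)\,(1-z_i^\top t_J)$. Cauchy--Binet gives $\det\widehat B=\sum_{|S|=k+1}\eta_S\det(U_S)^2$. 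Each $S$ arises as $J\cup\{i\}$ in at most $k+1$ ways with $J$ nonsingular, and terms coming from singular $J$ are simply dropped. Since all summands are nonnegative, this yields
\[
\sum_J\eta_J\det(Z_J)^2\sum_{i}\eta_i(1-z_i^\top t_J)^2\le(k+1)\det\widehat B .
\]
Indices $i\in J$ contribute zero because $t_J$ interpolates $J$, so the inner sum may run over all $i$. Dividing by $\det B$ gives
\[
\mathbb E\Bigl[\sum_i\eta_i(1-z_i^\top t_J)^2\Bigr]\le(k+1)\bigl(a-b^\top B^{-1}b\bigr).
\]

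\textbf{Step 3 (combine).} Expand $\sum_i\eta_i(1-z_i^\top t)^2=a-2b^\top t+t^\top Bt$ and insert Step 1:
\[
\mathbb E[t_J^\top Bt_J]\le(k+1)\bigl(a-b^\top B^{-1}b\bigr)-a+2b^\top B^{-1}b=ka-(k-1)\,b^\top B^{-1}b\le ka,
\]
using $B\succ0$ for the final inequality. This is exactly \eqref{eq:main-interpolation}, and choosing $J$ with $t_J^\top Bt_J$ at most its mean finishes the proof.

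The main obstacle is Step 2: getting the multiplicity counting and the sign/Schur identity for $\det U_{J\cup\{i\}}$ right, and confirming that discarding singular $J$ only loses nonnegative terms. Step 1 is standard, but it must be checked in this weighted form with $\eta_i\ge0$. Rows with $\eta_i=0$ are harmless: they receive zero sampling probability and contribute nothing to $B$, $b$ or $a$.
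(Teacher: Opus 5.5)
Your proof is correct. It uses the same determinant-weighted distribution on $k$-subsets as the paper, but bounds the expectation through a different identity, and both routes arrive at the same intermediate bound $ka-(k-1)b^\top B^{-1}b$.

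The paper translates every row by $\tau h$ and compares the $\tau^2$ coefficients of a single Cauchy--Binet expansion:
\begin{itemize}
\item for invertible $Z_J$ the coefficient is $\det(Z_J)^2(h^\top t_J)^2$;
\item singular subsets contribute nonnegative coefficients, because their determinants are multiples of $\tau$;
\item the right side gives $\det B[(h^\top B^{-1}b)^2+(a-b^\top B^{-1}b)h^\top B^{-1}h]$.
\end{itemize}
This yields the Loewner bound $\mathbb E[t_Jt_J^\top]\preceq mm^\top+(a-b^\top B^{-1}b)B^{-1}$ with $m=B^{-1}b$. Taking the trace against $B$ then gives the bound.

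You instead combine two standard volume-sampling facts:
\begin{itemize}
\item unbiasedness, $\mathbb E[t_J]=B^{-1}b$, via Cramer's rule;
\item the factor-$(k+1)$ bound on the expected weighted loss, via Cauchy--Binet for the augmented rows $(z_i,1)$.
\end{itemize}
You then recover $t^\top Bt$ from $\sum_i\eta_i(1-z_i^\top t)^2=a-2b^\top t+t^\top Bt$. Your multiplicity count is where the inequality enters: each $(k+1)$-set is counted at most $k+1$ times, and only through nonsingular $J$. This plays the role of the paper's nonnegative singular-subset coefficients.

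What each route buys:
\begin{itemize}
\item The paper's route is more economical and strictly stronger. One expansion controls $\mathbb E[(h^\top t_J)^2]$ in every direction $h$, and its first-order coefficient in $\tau$ already contains your unbiasedness identity.
\item Your route is more modular. It shows the lemma is the familiar volume-sampling loss bound in disguise, and that the slack arises only from affinely independent $(k+1)$-sets containing a linearly dependent $k$-subset. It controls only the single quadratic form $t^\top Bt$, but that is all the lemma needs.
\end{itemize}
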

\begin{proof}
For an independent \(k\)-subset \(J\), let \(Z_J\) have rows \(z_j^\top\)
and sample it with probability
\(\eta_J\det(Z_J)^2/\det B\), where \(\eta_J=\prod_{j\in J}\eta_j\).
Cauchy--Binet makes these probabilities sum to one.  Translate all rows by
\(\tau h\).  For invertible \(Z_J\), the coefficient of \(\tau^2\) in its
squared determinant is \(\det(Z_J)^2(h^\top t_J)^2\), where
\(Z_Jt_J=\boldsymbol1\).  Singular subsets contribute nonnegative
coefficients: their determinants are affine in \(\tau\) with zero constant
term.  On the other side of Cauchy--Binet, the coefficient is
\[
 \det B\bigl[(h^\top B^{-1}b)^2+
 (a-b^\top B^{-1}b)h^\top B^{-1}h\bigr].
\]
Consequently, with \(m=B^{-1}b\),
\[
 \mathbb E[t_Jt_J^\top]\preceq
 mm^\top+(a-b^\top B^{-1}b)B^{-1},\qquad
 \mathbb E[t_J^\top Bt_J]\leq ka-(k-1)b^\top B^{-1}b\leq ka.
\]
\end{proof}

Apply this lemma to the exceptional roots projected onto \(P^\perp\),
where \(P\) is a rank-\(r\) circuit span.  Their second moment dominates
the identity on \(P^\perp\).  Interpolate the selected quotient basis and
give the circuit evaluation zero.  Lemma~\ref{lem:five-realization} gives
\begin{equation}
 \delta_6\leq(5-r)\mu(E\setminus P).             \label{eq:five-anchor}
\end{equation}
If two circuit spans cover \(E\), their outside masses \(\alpha,\beta\)
satisfy \(\alpha+\beta\leq1\).  Thus
\begin{equation}
\begin{split}
 (r_A,r_B)=(3,3)&\quad\Longrightarrow\quad
 \delta_6\leq\min(2\alpha,2\beta)\leq1,\\
 (r_A,r_B)=(3,2)&\quad\Longrightarrow\quad
 \delta_6\leq\min(2\alpha,3\beta)\leq6/5.
\end{split}                                                        \label{eq:five-cover}
\end{equation}
Another useful case is a rank-three span \(P\) for which all nonzero
exceptional projections lie on two independent quotient lines.  On each
line choose a root of maximum absolute projected coefficient and interpolate
these two roots with \(z\perp P\).  Every exceptional evaluation has
absolute value at most one, hence so does every original evaluation by
\eqref{eq:five-comparison}.  This gives \(\delta_6\leq1\).

A rank-five circuit gives a six-row fit at zero.  For a rank-four circuit
span \(P\), choose a unit normal \(n\) and a root maximizing
\(|v_i^\top n|\).  The five-row circuit and this root give
\[
 \|z\|^2=\frac{1}{\max_i|v_i^\top n|^2}
 \leq\sum_{v_i\notin P}\lambda_i<1.
\]
Henceforth the geometric classification assumes that all positive circuits
have rank at most three.

\subsection{Signed polar faces and configurations of circuit spans}
\label{app:five-polar}
Work in the second alternative of Lemma~\ref{lem:five-exceptional} and put
\(Q=\{z:|e^\top z|\leq1\ (e\in E)\}\).  This is a bounded, full-dimensional
polytope.  Suppose \(\R^5=F\oplus B\), with dimensions two and three,
and \(u,v\in B\) are independent.  Assume that
\begin{equation}
 P_0=F+\R u,\qquad P_1=F+\R v,\qquad B            \label{eq:five-three-spans}
\end{equation}
are spans of actual rank-three positive circuits.  Let \(H=\spann(u,v)\).

\begin{lemma}[Active-space rigidity]
\label{lem:five-active}
At a vertex \(z\) of \(Q\), either \(z\) or \(-z\) is realizable by
Lemma~\ref{lem:five-realization} using one of these circuits, or all active
roots in \(B\setminus H\) lie on one unoriented line.
\end{lemma}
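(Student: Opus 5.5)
We argue by contraposition: suppose neither $z$ nor $-z$ can be realized through Lemma~\ref{lem:five-realization} with one of the three circuits in \eqref{eq:five-three-spans}. From this we will force the active roots in $B\setminus H$ onto a single unoriented line.

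Fix a vertex $z$ of $Q$. Let $E_+=\{e:e^\top z=1\}$ and $E_-=\{e:e^\top z=-1\}$ be the active roots of each sign. Since $z$ is a vertex, $E_+\cup E_-$ spans $\R^5$. The circuit roots of each anchor are exceptional or lie in $\conv\{\pm e\}$, so they satisfy $|v_i^\top z|\le1$, and the one-sided condition $v_i^\top z\le1$ in Lemma~\ref{lem:five-realization} holds automatically for both $z$ and $-z$. Realizing $z$ with a rank-three anchor of span $P$ therefore needs only two roots of $E_+$ that are independent modulo $P$. For $-z$ we need the same from $E_-$. So the failure hypothesis says the following. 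For each $P\in\{P_0,P_1,B\}$ and each sign $\sigma$, the image of $E_\sigma$ in $\R^5/P\cong\R^2$ spans at most a line.

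Next we translate these six rank conditions into statements about $B$. Write each active root as $f+b$ with $f\in F$ and $b\in B$. The quotient by $B$ is projection onto $F$, so each $E_\sigma$ has $F$-components on at most one line. The quotient by $P_0$ is projection onto the complement of $F+\R u$; restricted to $B$ it is $B/\R u$, and similarly for $P_1$ with $B/\R v$. Now take two active roots whose $B$-components lie in $B\setminus H$ and are not parallel. Within one sign class, the conditions modulo $P_0$ and modulo $P_1$ each give a linear relation among the images. A two-dimensional subspace of $B$ projecting to a line in both $B/\R u$ and $B/\R v$ must contain both $u$ and $v$, so it equals $H$. That contradicts the assumption that the $B$-components lie outside $H$. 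The $F$-components must also be handled, and we plan to use the $P=B$ condition for this: it lets us subtract an $F$-multiple consistently. This should give, within each sign class, that all active roots outside $H$ have $B$-components on one line.

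The step we expect to be the obstacle is combining the two sign classes. $E_+$ might carry direction $\ell_+$ and $E_-$ a different direction $\ell_-$. We would use spanning: $E_+\cup E_-$ spans $\R^5$, while each class has rank at most two modulo each of the three spans. A dimension count should show that two distinct lines $\ell_\pm\not\subset H$, together with the $F$-rank restrictions, can span at most a proper subspace unless some class gains rank two modulo $P_0$ or $P_1$. If that count falls short, the fallback is to flip the sign of one root $e\in E_-$; replacing $e$ by $-e$ does not change $Q$, and the flipped root can be paired with an $E_+$ root as an interpolating pair modulo the relevant $P_i$. This would show that $\ell_+$ and $\ell_-$ coincide as unoriented lines, which is exactly the conclusion of Lemma~\ref{lem:five-active}.
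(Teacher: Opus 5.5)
Your approach is correct, and the combining step you flag closes immediately. Let $S_\sigma\subseteq B$ be the span of the $B$-components of $E_\sigma$.

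\begin{itemize}
\item Since $F\subseteq P_0\cap P_1$, the quotients by $P_0$ and $P_1$ see only $B$-components. Failure of realization therefore gives $\dim(S_\sigma+\R u)\le 2$ and $\dim(S_\sigma+\R v)\le 2$, so $S_\sigma$ is $0$, a line, or exactly $H$.
\item Projecting $\spann(E_+\cup E_-)=\R^5$ onto $B$ along $F$ gives $S_++S_-=B$. Hence one class has $S_\sigma=H$, and its roots lying in $B$ lie in $H$.
\item The other class has $S_{-\sigma}$ equal to a single line $\ell\not\subset H$. So every active root in $B\setminus H$ lies on $\ell$.
\end{itemize}

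Some parts of your plan are unnecessary or should be dropped. The $F$-components need no handling: roots in $B$ have none, and the two quotients ignore them. The condition modulo $B$ is never used. You should discard the fallback entirely. $Q$ is symmetric but the root set is not, so $-e$ need not be a root and cannot serve as an interpolating row in Lemma~\ref{lem:five-realization}. Also, each class has quotient rank at most one, not two.

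The paper argues differently. It fixes five independent active roots and uses $P_0+B=\R^5$ to force color classes of sizes two and three. It writes $\R^5=U\oplus V$ for their spans and splits each of the three circuit spans into a line in $U$ and a plane in $V$. It then shows $L_0=L_1$ and $H_0\cap H_1\cap H_B=0$. From this it deduces that every active root of one color lies in $U$ and every root of the other color lies in $V$, with $B\cap U=L_B$ and $B\cap V=H$.

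That route uses all three circuits and produces the full two-color structure, which the paper reuses for the four-span case of Lemma~\ref{lem:five-multiple}. Yours needs only $P_0$, $P_1$ and the vertex spanning condition. It is shorter and proves a slightly stronger statement. It also settles the four-span case directly: two distinct planes $\spann(u_i,u_j)$ force $\dim S_\sigma\le1$ for both signs, which contradicts $S_++S_-=B$.
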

\begin{proof}
Choose five independent active roots and color them by evaluation \(+1\)
or \(-1\).  If both signs fail realization, each color has quotient rank
at most one for all three spans.  Four independent roots of one color would
span a four-space containing all three circuit spans, contradicting
\(P_0+B=\R^5\); five of one color have quotient rank two directly.
Thus the two color spans have dimensions two and three, say \(U,V\), and
\(\R^5=U\oplus V\).  For each circuit span \(P\), dimension counting gives
\[
 P=(P\cap U)\oplus(P\cap V),\qquad
 \dim(P\cap U)=1,\quad\dim(P\cap V)=2.
\]
Denote these lines by \(L_0,L_1,L_B\), and planes by \(H_0,H_1,H_B\).
The intersections \(P_0\cap B=\R u\), \(P_1\cap B=\R v\) imply
\(H_B=H\), \(L_0\ne L_B\), and \(L_1\ne L_B\).
Since \(\dim(P_0\cap P_1)=2\), either \(L_0=L_1\) or \(H_0=H_1\).
The latter would make their intersections with \(H_B\) both \(\R u\)
and \(\R v\).  Hence \(L_0=L_1\) and
\(H_0\cap H_1\cap H_B=0\).

Every additional active root of the smaller color belongs to
\((P_0+U)\cap(P_1+U)\cap(B+U)=U\); otherwise a quotient color rank
would increase to two.  Likewise every root of the other color belongs
to \((P_0+V)\cap(P_1+V)\cap(B+V)=V\).
Now \(B\cap V=H\) and \(B\cap U=L_B\), proving the assertion.
\end{proof}

If any point of \(Q\) has two nonparallel active roots in \(B\setminus H\),
extend their common face to a vertex.  They remain active, so the lemma
gives a realizable vertex or its negative and \(\delta_6\leq1\).

\begin{lemma}[Three-span cover and four-span configuration]
\label{lem:five-multiple}
Under \eqref{eq:five-three-spans}, if
\(E\subset(F+H)\cup B\), then \(\delta_6\leq1\).
The same conclusion holds, without a cover condition, if the circuit spans
include \(B\) and \(F+\R u_i\) for a basis \(u_0,u_1,u_2\) of \(B\).
\end{lemma}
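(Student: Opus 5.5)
The plan is to produce a point $z$ of the symmetric polar $Q=\{z:|e^\top z|\leq1\ (e\in E)\}$ such that $z$ or $-z$ is realizable by Lemma~\ref{lem:five-realization} with one of the circuits in \eqref{eq:five-three-spans}. Any point of $Q$ has norm bound one. By Lemma~\ref{lem:five-exceptional} every original root lies in $\conv\{\pm e\}$, so $|v_i^\top z|\leq1$ for all $i$, and the normalization \eqref{eq:five-normal} gives $\|z\|^2=\sum_i\lambda_i(v_i^\top z)^2\leq1$. Replacing $z$ by $-z$ keeps the norm. It therefore suffices to find a vertex, or the negative of a vertex, whose active roots contain two roots of one sign that are independent modulo one of the circuit spans $P_0,P_1,B$. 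By the remark after Lemma~\ref{lem:five-active}, it is also enough to find any point of $Q$ with two nonparallel active roots in $B\setminus H$.

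\emph{Cover case.} Assume $E\subset(F+H)\cup B$ and take any vertex $z$ of $Q$. Its active roots span $\R^5$, and $F+H$ has dimension four, so some active root lies in $B\setminus H$. If Lemma~\ref{lem:five-active} already yields realizability, we are done. Otherwise all active roots in $B\setminus H$ lie on one line $\ell$, and the active roots then contain four independent roots in $F+H$. Fixing these four equalities cuts out a segment of $Q$ with $z$ as one endpoint. Along this segment the functional of $z$ on $F+H$ is frozen, so no root of $F+H$ changes its evaluation. Moving to the other endpoint, the new binding constraint must come from some $e\in B\setminus H$.
\begin{itemize}
\item If $e\not\parallel\ell$, one should check whether the roots on $\ell$ remain active at the new endpoint. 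If they do, we have two nonparallel active roots in $B\setminus H$ and are done.
\item If they do not remain active, we iterate the same sliding argument. This walk on a finite vertex set must terminate.
\end{itemize}
The delicate subcase is when the only new constraints come from roots on $\ell$ with the opposite sign. The segment then has length fixed by the $\ell$-roots alone. In that case I would instead relax one of the four $F+H$ equalities to move off the segment. Alternatively, I would use that $\ell\not\subset H$ forces $\ell$ to be independent modulo $P_0$ or $P_1$, and pair the $\ell$-root with a same-colored $F+H$ root, exploiting the decomposition $L_0=L_1$, $H_0\cap H_1\cap H=0$ from the proof of Lemma~\ref{lem:five-active}. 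This is the step I expect to be the main obstacle. The argument has to show that the "single transverse direction" cannot persist at both ends of every such slice.

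\emph{Four-span case.} Here $B$ and $F+\R u_i$ ($i=0,1,2$) are circuit spans, and no cover hypothesis is assumed. Apply Lemma~\ref{lem:five-active} to each of the three pairs $(u_i,u_j)$ with $H_{ij}=\spann(u_i,u_j)$. At a nonrealizable vertex, the active roots in $B\setminus H_{ij}$ lie on one line for every pair. The lemma's proof also identifies the color decomposition, in which $B\cap U$ is a line $L_B$ and $B\cap V=H_{ij}$. Since the planes $H_{01},H_{02},H_{12}$ are distinct, the three identifications $B\cap V=H_{ij}$ are incompatible. The only escape is that no active root lies in $B$ outside the pairwise planes, and then the active set cannot have rank five. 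I would carry out this incompatibility by comparing the color spans $U,V$, which are independent of the chosen pair. That yields a realizable vertex (or its negative) and hence $\delta_6\leq1$.
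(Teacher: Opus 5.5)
Your four-span argument is correct and is the paper's. At a vertex failing realization for all four circuits, the color spans $U,V$ from the proof of Lemma~\ref{lem:five-active} are fixed by the chosen active roots. Dimension counting alone then forces $B\cap V=\spann(u_i,u_j)$ for every pair, so all $u_i$ lie in one plane; your ``only escape'' sentence is not needed.

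The cover case, however, has a genuine gap. Your sliding step is purely local. Freezing four independent active roots of $F+H$ freezes every $F+H$ evaluation on the segment, so only the transverse roots binding at its two ends matter.

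The subcase you flag as the main obstacle is actually immediate. Suppose the far end is cut by the $\ell$-root $w$ with the opposite sign. Both ends then carry the same frozen active roots with the same signs, while $w$ takes both values $\pm1$. At the appropriate end, pair $w$ with a frozen active $g=f+\alpha u+\beta v$ with $(\alpha,\beta)\neq(0,0)$; such a $g$ exists because four independent roots cannot lie in the plane $F$. Then $g,w$ are independent modulo $P_0$ if $\beta\neq0$ and modulo $P_1$ otherwise.

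What is not closed is the case you dispatch by iteration. There $w$ binds at $z$, a nonparallel $e\in B\setminus H$ binds at the far end $z'$ with $w$ slack there, and $w^\top z=e^\top z'$ has the sign opposite to every frozen active root with nonzero $H$-component. Nothing is forced locally, and re-sliding from $z'$ with the same frozen equalities just retraces the segment back to $z$. You give no rule for changing the frozen set and no monotone quantity, so ``a walk on a finite vertex set must terminate'' does not deliver a good vertex; such a walk can simply cycle.

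The missing ingredient is a global anchor at the origin of the $H$-coordinates, which is the heart of the paper's proof. Under the cover hypothesis, roots in $E\cap(F+H)$ see only $z|_{F+H}$, and roots in $E\cap(B\setminus H)$ see only $z|_B=(\omega,s)$. So over each $\omega$ the transverse constraints cut out an interval $\ell(\omega)\leq s\leq r(\omega)$, with $\ell$ a maximum and $r$ a minimum of affine functions after merging parallel roots.

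Now suppose no point of $Q$ has two nonparallel active roots in $B\setminus H$. Then the width is positive, and the root attaining each endpoint cannot change over the connected set of feasible $\omega$, since a change is exactly such a tie. At $\omega=0$ both endpoints are attained on the single line of the root $w$ with largest absolute transverse coefficient. Hence the transverse interval is exactly $|w^\top z|\leq1$ everywhere. Your stuck configuration can therefore occur only when such a tie, and hence a realizable vertex, exists.

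This is the reverse of your stated goal: the single transverse direction does persist at both ends of every slice, and the proof exploits that persistence. On the slice $z\perp F$, take a boundary point of the strip region $R$ with an active $g\in E\cap(F+H)$ oriented so that $g^\top z=1$; on this slice $g$ automatically has nonzero $H$-component. Move $s$ to the endpoint where $w^\top z=1$, and apply Lemma~\ref{lem:five-realization} with $P_0$ or $P_1$ as above.
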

\begin{proof}
For the first assertion, suppose that every vertex and its negative fail
realization using the three circuits.  The preceding face argument excludes
two nonparallel active roots in \(B\setminus H\) at every point of \(Q\).
Restrict to \(z\perp F\).  Coordinates \((\omega,s)\in H^*\times\R\)
describe this three-dimensional slice.  The roots in \(E\cap(F+H)\)
impose strips on \(\omega\), defining a set \(R\).  Each root in
\(E\cap(B\setminus H)\) imposes an interval in \(s\).
Merge parallel roots by retaining the largest absolute scale.  The slice is
\[
 \omega\in R,\qquad \ell(\omega)\leq s\leq r(\omega),
\]
where \(\ell,r\) are respectively the maximum and minimum of finitely many
affine functions.  The feasible projection \(\Omega\) is compact, convex,
and contains zero in its interior.

Its interval width is positive everywhere: zero width would require
two different nonparallel roots active at opposite endpoints, since one
strip has positive width.  Also \(\Omega=R\), for a segment in \(R\)
from zero to an infeasible point would first encounter zero width.
The active direction at the upper endpoint is constant on the connected
set \(R\); a change forces two nonparallel constraints to tie.  The lower
endpoint has the same property.  At \(\omega=0\), both endpoints belong
to a single root \(w\in B\setminus H\) of greatest absolute transverse
coefficient.  Thus throughout \(R\) the interval is precisely
\(|w^\top z|\leq1\).

The set \(R\) is a bounded two-dimensional strip intersection.  At a
boundary point choose an active root \(g\in E\cap(F+H)\) and orient the
point so that \(g^\top z=1\).  Write
\(g=f+\alpha u+\beta v\), where \((\alpha,\beta)\ne(0,0)\).
Choose the transverse coordinate so that \(w^\top z=1\).  If \(\beta\ne0\),
the roots \(g,w\) are independent modulo \(P_0\); otherwise they are
independent modulo \(P_1\).  Lemma~\ref{lem:five-realization} gives the
desired contradiction.  This argument includes tied and degenerate faces.

For the second assertion, at a vertex failing realization with both signs,
the same two-color decomposition splits each \(F+\R u_i\) into a line
in \(U\) and a plane in \(V\).  Applying the intersection argument with
\(B\) puts all three \(u_i\) in the two-plane \(B\cap V\), contradicting
their independence.
\end{proof}
\subsection{All extensions of a shared-vertex double tetrahedron}
\label{app:five-shared}
Suppose the root set contains two rank-three positive circuits sharing a
root and jointly spanning five dimensions.  Positive rescaling of individual
roots and an invertible coordinate change put their relations in the form
\begin{equation}
 u+a_0+a_1+a_2=0,\qquad u+b_0+b_1+b_2=0,          \label{eq:five-T}
\end{equation}
where \(u,a_0,a_1,b_0,b_1\) are independent.  Write
\(P_A=\spann(u,a_0,a_1)\), \(P_B=\spann(u,b_0,b_1)\).
These transformations will be used only for positivity, ranks, and spans.
Every risk and polar-face argument uses the original roots in
\eqref{eq:five-normal}.

\paragraph{Classification of a crossing exceptional root.}
Every \(g\in E\setminus(P_A\cup P_B)\) has one of the forms
\begin{equation}
\begin{split}
 C_{ij}(\alpha,\beta)&=-(\alpha+\beta)u-\alpha a_i-\beta b_j,\\
 A_{ij}(\alpha,\beta)&=\alpha(u+a_i)-\beta b_j,\\
 B_{ij}(\alpha,\beta)&=-\alpha a_i+\beta(u+b_j),
 \qquad \alpha,\beta>0.                         \label{eq:five-types}
\end{split}
\end{equation}
To see this, write the unique canonical representation
\[
 -g=\gamma u+\sum_i\alpha_i a_i+\sum_j\beta_jb_j,
 \qquad \min_i\alpha_i=\min_j\beta_j=0.
\]
If \(\gamma<0\), replace \(\gamma u\) using the first relation in
\eqref{eq:five-T}.  This represents \(-g\) by at least four independent
core roots with positive coefficients, giving a rank-four or rank-five
positive circuit.  Thus \(\gamma\geq0\).  The positive support has at most
three independent roots for the same reason.  Each leaf block is used, so
their support sizes are \((1,1),(2,1),(1,2)\).
For \((1,1)\), write \(g=-\gamma u-\alpha a_i-\beta b_j\).
If \(\gamma<\alpha+\beta\), the root \(g\) is a positive combination of
\(u\) and the two complementary leaves from each block.  If
\(\gamma>\alpha+\beta\), it is a positive combination of the two
complementary A leaves and all three B leaves.  Both sets are independent;
exceptionality forces equality and gives type \(C\).
For \((2,1)\), \(\gamma=0\).  Unequal A coefficients give a positive
five-root representation of \(g\): add the larger coefficient times the
A zero relation and the B coefficient times the B zero relation.  The
generators are \(u\), two A leaves, and two B leaves.  Equal coefficients
give type \(A\).  The other case gives type \(B\).

\paragraph{Compatibility of crossing roots.}
All claims below follow from the displayed positive relations; independence
can be read in the basis \(u,a_0,a_1,b_0,b_1\).
If \(g=A_{ij}(\alpha,\beta)\), \(h=A_{kl}(\gamma,\delta)\), and \(i\ne k\),
let \(m\) be the third A index.  Then
\begin{equation}
 \gamma g+\alpha h+\alpha\gamma a_m+
 \sum_{r=0}^2(\alpha\gamma+\gamma\beta\boldsymbol1_{r=j}
                  +\alpha\delta\boldsymbol1_{r=l})b_r=0.           \label{eq:five-AA}
\end{equation}
The three B leaves are independent and the A quotient components of
\(g,h\) are independent.  Thus this is a forbidden rank-five circuit.
All exceptional A roots consequently have the same A index; the B claim
is symmetric.

Next let \(g=A_{ij}(\alpha,\beta)\), \(h=B_{kl}(\gamma,\delta)\).
If \(i=k\),
\begin{equation}
 \gamma g+\alpha h+
 \sum_{r=0}^2[\alpha(\gamma+\delta)+\gamma\beta\boldsymbol1_{r=j}
                    -\alpha\delta\boldsymbol1_{r=l}]b_r=0          \label{eq:five-ABsame}
\end{equation}
is a rank-four circuit with strictly positive coefficients.  The case
\(j=l\) is symmetric.  If \(i\ne k\) and \(j\ne l\), with \(m\) the third
B index, then
\begin{equation}
 g=\alpha u+\alpha a_i+\frac{\beta\gamma}{\delta}a_k
       +\beta b_m+\frac\beta\delta h                              \label{eq:five-ABcross}
\end{equation}
makes \(g\) good.  Thus exceptional A and B roots never coexist.

For \(g=C_{ij}(\alpha,\beta)\), \(h=A_{kl}(\gamma,\delta)\), if \(k\ne i\)
and \(m\) is the third A index, then
\begin{equation}
 g=\frac\alpha\gamma h+\alpha a_m+
 \sum_{r=0}^2\left(\alpha+\beta+
 \frac{\alpha\delta}{\gamma}\boldsymbol1_{r=l}-\beta\boldsymbol1_{r=j}\right)b_r
                                                                    \label{eq:five-CAcross}
\end{equation}
is a positive independent five-root representation.  If \(k=i\), \(l\ne j\),
then
\begin{equation}
 \gamma g+\alpha h+\gamma\beta u+\gamma\beta b_j+\alpha\delta b_l=0    \label{eq:five-CAsame}
\end{equation}
is a rank-four circuit.  A coexisting exceptional C root must therefore
use the same pair of indices as every exceptional A root.  Again the B
version is symmetric.

If exceptional A roots occur, fix their common index \(i\) and let
\(F=\spann\{a_k:k\ne i\}\).  Then \(F\oplus P_B=\R^5\).
Each root \(A_{ij}\), together with the two complementary A leaves and
\(b_j\), is a rank-three circuit spanning \(F+\R b_j\).
The original A circuit spans \(F+\R u\).
If two different B indices \(j,l\) occur, the three roots \(u,b_j,b_l\)
form a basis of \(P_B\); the four-span case of
Lemma~\ref{lem:five-multiple} applies.
If only \(j\) occurs, the compatibility relations show that
\(E\subset (F+\spann(u,b_j))\cup P_B\).  The three-span case applies.
Thus this branch, and the symmetric B branch, have \(\delta_6\leq1\).

\paragraph{Crossing roots of type C only.}
Represent each occurring pair \((i,j)\) as an edge in a three-by-three
bipartite graph.  If there are no edges, \(P_A,P_B\) cover \(E\) and
\eqref{eq:five-cover} applies.  If the edges form a star, choose any edge
and anchor its circuit, of span \(\spann(u,a_i,b_j)\).  Both original
blocks project to independent lines, and every crossing exceptional root
projects onto one of them.  The two-line argument gives \(\delta_6\leq1\).

Otherwise there are two disjoint edges.  Relabel and positively rescale their
exceptional witnesses as
\begin{equation}
 h=-(1+p)u-a-pb,\qquad k=-(1+q)u-c-qd,
 \qquad p,q>0,                                                    \label{eq:five-matching}
\end{equation}
where \(a=a_0,c=a_1,b=b_0,d=b_1\),
\(a_2=-u-a-c\), \(b_2=-u-b-d\).
We first prove
\begin{equation}
 E\cap P_A\subset\spann(u,a)\cup\spann(u,c),                       \label{eq:five-blockrestriction}
\end{equation}
and the corresponding statement for \(P_B\).
Write \(g=s u+t a+z c\).  It suffices to exclude \(tz\ne0\).

If \(t,z>0\), put \(R=(1+p)t+(1+q)z-s\).  For \(R\geq0\),
\[
 -g=th+zk+ptb+qzd+Ru.
\]
For \(R>0\) the five parents are independent, and for \(R=0\) the four
remaining parents are independent; both give a high-rank circuit.
For \(R<0\), put \(T=[s-(1+q)z]/(1+p)>t\); then
\[
 -g=Th+zk+(T-t)a+pTb+qzd
\]
has five positive independent parents.
If \(t>0,z<0\) and \(s<(1+p)t\), use
\[
 -g=th+ptb-zc+((1+p)t-s)u.
\]
Otherwise \(K_0=(s-t)/(1+q)>0\) and
\[
 -g=t a_2+(t+K_0-z)c+qK_0d+K_0k.
\]
Both are positive independent four-parent representations.
The other mixed-sign case follows by interchanging \((t,a,h,p,b)\) and
\((z,c,k,q,d)\).
Finally suppose \(t,z<0\), and put \(R'=s-(1+p)t-(1+q)z\).
For \(R'>0\),
\[
 g=-th-zk-ptb-qzd+R'u
\]
makes \(g\) good.  For \(R'<0\), take
\(T=[(1+q)z-s]/(1+p)>-t\) and use
\[
 g=Th-zk+(t+T)a+pTb-qzd.
\]
For \(R'=0\), the independent five-parent identity
\[
 h=-\frac1t g+\left(\frac{s}{t}-1\right)u+\frac zt c+p d+p b_2
\]
makes the exceptional witness \(h\) good, since
\(s/t=(1+p)+(1+q)z/t>1\).  This proves
\eqref{eq:five-blockrestriction}, including every equality boundary.

The other possible C edges are restricted as follows.  For
\(g=C_{01}(x,y)\), \(x,y>0\),
\begin{equation}
 g=xh+\frac yq k+\frac yq c+xp b+\left(xp+\frac yq\right)u          \label{eq:five-C01}
\end{equation}
is a positive independent representation, so this edge is excluded.
For \(g=C_{02}(x,y)\),
\begin{equation}
\begin{split}
 0={}&g+\frac yp h+\frac yq k+\left(x+\frac yp\right)a+\frac yq c\\
    &+\left[x+\frac{y(1+p)}p+\frac{y(1+q)}q\right]u               \label{eq:five-C02}
\end{split}
\end{equation}
is a rank-five circuit.  Swapping the two blocks or the two used indices
excludes all off-diagonal edges.  The only additional edge is \((2,2)\).
For it, \(g=x(a+c)+y(b+d)\).  Set
\[
 r_1=x-y/p,\quad r_2=x-y/q,\quad m=\max(r_1,r_2,0),\quad
 S=y(1+p)/p+y(1+q)/q>0.
\]
Then
\begin{equation}
 g+\frac yp h+\frac yq k+(m-r_1)a+(m-r_2)c+m a_2+(S+m)u=0.          \label{eq:five-C22}
\end{equation}
At least one of the three A coefficients is zero.  Unless \(r_1=r_2=0\),
exactly one or two are positive, yielding a circuit of rank four or five.
For completeness, quotienting by \(u\) fixes the relation coefficients
of \(h,k,g\), from their B coordinates, in the ratio \(y/p:y/q:1\).
The selected A quotient roots are independent, so the remaining
coefficients are uniquely determined.  The coefficient \(S+m\) on \(u\)
is positive, proving that the relation has nullity one.  Therefore
\begin{equation}
 p=q=y/x.                                                        \label{eq:five-equalratio}
\end{equation}

With just the first two diagonal edges, \eqref{eq:five-blockrestriction}
gives the cover
\(E\subset\spann(u,a,b)\cup\spann(u,c,d)\) by two rank-three circuits.
If the third edge occurs, apply the block restriction to all three pairs
of disjoint edges.  Block-internal exceptional roots then lie on \(\R u\).
Equation~\eqref{eq:five-equalratio}, applied to any choice of witnesses,
fixes a common ratio for every diagonal edge.  Their directions are
\[
 h_i=-(1+p)u-a_i-pb_i,\qquad
 \sum_{i=0}^2h_i=-2(1+p)u.
\]
Thus \(E\) has span dimension at most three, contradicting
\eqref{eq:five-comparison}.  We have proved, with arbitrary additional
roots, that a shared-vertex double tetrahedron implies
\begin{equation}
 \delta_6(V)\leq1.                                               \label{eq:five-Tbound}
\end{equation}

\subsection{Systems whose positive circuits have rank at most two}
\label{app:five-lowrank}
Assume now that every positive circuit has rank at most two.  A
\emph{circuit plane} is the span of an actual positive triangle, that is,
a rank-two positive circuit.  We use the following elementary observation
repeatedly.  A support-minimal nonnegative representation of a nonzero
vector has independent positive support: otherwise a dependence permits
a coefficient perturbation until one coefficient vanishes.  If two such
representations give opposite directions in subspaces intersecting in a
line, their union has exactly one dependence and is a positive circuit.

\begin{lemma}[Intersecting circuit planes]
\label{lem:five-planes}
The following statements hold.
\begin{enumerate}
\item Two distinct circuit planes with a nonzero intersection share the
same oriented vertex direction in any chosen witnessing triangles.
\item A family of distinct circuit planes in a three-space, with no
common line, contains at most three planes.
\item Distinct circuit planes sharing a line have independent directions
in the quotient by that line.
\item A circuit plane has zero intersection with at most two members of
a family of distinct circuit planes sharing a line.
\end{enumerate}
\end{lemma}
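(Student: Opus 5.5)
The plan is to prove the four items in order, each time building an explicit positive relation and using the standing hypothesis that every positive circuit has rank at most two. The tool throughout is the observation stated before the lemma. A support-minimal nonnegative representation of a nonzero vector has independent support. If two such representations give opposite directions in subspaces meeting in a line, their union carries exactly one dependence, so it is a positive circuit of rank $|\text{union}|-1$. Two facts about a witnessing triangle $T=\{x_1,x_2,x_3\}$ with $\sum c_kx_k=0$, $c_k>0$, will be used repeatedly. First, $T$ positively spans its plane. Second, no two vertices of $T$ are antiparallel, since otherwise the third vertex would vanish or the support would not be minimal.

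For item 1, let $P\neq P'$ be circuit planes with witnesses $T,T'$ and $P\cap P'=\R d$. Represent $\pm d$ support-minimally from $T$ and from $T'$; each representation uses one or two roots. Pairing a representation of $d$ from $T$ with one of $-d$ from $T'$ gives a positive circuit. That circuit has rank three if both representations use two roots, which is forbidden. Now suppose $T$ has no vertex on $\R d$. Then both of its representations use two roots, so $T'$ would need single vertices along both $d$ and $-d$, contradicting the no-antiparallel fact. Hence $T$, and symmetrically $T'$, has a vertex on the line. Suppose these vertices had opposite orientations, say $T$ along $d$ and $T'$ along $-d$. Then $-d$ from $T$ and $d$ from $T'$ each need two roots, giving a rank-three circuit. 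Hence the orientations agree.

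For item 3, let the planes sharing $\ell=\R d$ have witnesses $\{d_k,x_k,y_k\}$, where $d_k$ is a positive multiple of $d$ by item 1. In the quotient by $\ell$, the images of $x_k$ and $y_k$ lie on opposite rays of the line $\bar P_k$. Suppose the lines $\bar P_k$ were dependent, and take a minimal dependent subfamily, which has at least three members since distinct planes give distinct quotient lines. Choosing the appropriate ray in each line gives a positive relation among projected roots whose support has quotient rank at least two. Lifting it gives $\sum(\cdot)=\tau d$. If $\tau<0$, add $|\tau|/\|d_k\|$ copies of $d_k$. If $\tau>0$, rewrite $-d$ as a positive combination of $x_k,y_k$ from one triangle. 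Either way we get a positive relation whose support spans at least three dimensions, and extracting an extreme ray through a root off $\ell$ yields a positive circuit of rank at least three, a contradiction.

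For item 4, suppose a circuit plane $\Pi$ meets $P_1,P_2,P_3$ of the family trivially. By item 3, $S=P_1+P_2+P_3$ has dimension four, so $\dim(\Pi\cap S)\geq1$ in $\R^5$; pick $0\neq w\in\Pi\cap S$. Represent $w$ from the triangle of $\Pi$ using at most two roots, and $-w$ from the roots of the $P_k$ using a support-minimal, hence independent, representation. Since $w\notin P_k$ for each $k$, the second representation must involve roots from at least two of the planes, so its support has at least two roots off $\ell$ in independent quotient directions. Counting shows the combined relation has rank at least three, which is forbidden. I expect some care is needed when $w$ is chosen with $\Pi\cap S$ of dimension two, and there I would pick $w$ to avoid each $P_k+\ell$.

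Item 2 is the main obstacle. Let the planes lie in a three-space $W$ with no common line, and suppose there are four of them. By item 1 every pair shares an oriented vertex direction on its intersection line. Take three of the planes: their pairwise lines are distinct, because a coincidence would contradict item 3 or the no-common-line assumption in a three-space, and the shared directions $d_{12},d_{13},d_{23}$ are independent. Each triangle then has the form $\{d_{ij},d_{ik},x_i\}$ with $x_i\in-\operatorname{cone}(d_{ij},d_{ik})$. The fourth plane must contain a vertex direction of each of the first three triangles. A case check on whether it passes through some $d_{ij}$ or through the $x_i$ should then produce, via the splicing observation, a positive relation among four roots spanning $W$, that is, a rank-three circuit. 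I would first try the explicit combination $x_1+x_2+x_3$ expressed against $-(d_{12}+d_{13}+d_{23})$, and expect the orientation bookkeeping in this case check to be the delicate part.
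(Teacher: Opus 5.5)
Your item 1 is correct and follows the paper's argument. The other three items have gaps.

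\textbf{Items 3 and 4.} In item 3 you split into $\tau<0$ and $\tau>0$ and never treat $\tau=0$. In that case the $k$ chosen leaves alone carry a positive relation of rank $k-1$, which for $k=3$ is a permitted triangle, so you get no contradiction. The repair, which is the point of the paper's proof, is that the two orientations of the quotient dependence lift to $\tau d$ and $\tau' d$ with $\tau+\tau'<0$. Indeed, adding the two lifts gives, for each $k$, a positive multiple of the combination of $x_k,y_k$ that equals $-d_k$. So you can always arrange $\tau<0$. Then the $k$ leaves together with $d_k$ have nullity one, and the relation is itself a circuit of rank $k\geq3$; no extreme-ray extraction is needed.

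Item 4 has a more serious version of the same confusion between the rank of a support and the rank of a circuit. If $\Pi\not\subset S$, you must choose the sign of $w$ for which the triangle of $\Pi$ needs two roots; this is possible because the triangle has no antiparallel pair. One root of $\Pi$ plus two core parents is only a triangle. If $\Pi\subset S$, the splicing observation is unavailable, since it needs the two spans to meet in a line. The core parents of $-w$ may span a space containing all of $\Pi$; the union then has nullity two and can split into rank-two circuits. Your proposed remedy is empty, because $P_k+\ell=P_k$ and $\Pi\cap P_k=0$ already.

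The missing idea for $\Pi\subset S$ is a sign argument. Normalize $d+x_k+y_k=0$ and write $x_k=e_k-d/2$, $y_k=-e_k-d/2$, so that $(d,e_1,e_2,e_3)$ is a basis of $S$. For each single root $p$ of the triangle of $\Pi$, $p$ together with a minimal core representation of $-p$ is a circuit whose rank equals the number of parents. Hence there are exactly two parents. Since $p$ lies in no $P_k$, they are leaves of two different arms. This forces the $d$-coordinate of $p$ to be strictly positive, and three such roots cannot satisfy a positive zero relation.

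\textbf{Item 2.} You identify this as the main obstacle, but it is left as a plan. Your case split is the right one, yet neither case is carried out, and the first case is closed by linear algebra rather than by a circuit. Write
\[
x_1=-\alpha_1d_{12}-\beta_1d_{13},\qquad x_2=-\alpha_2d_{12}-\gamma_2d_{23},\qquad x_3=-\beta_3d_{13}-\gamma_3d_{23},
\]
with positive coefficients.

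If the fourth plane contains no $d_{ij}$, item 1 forces it to contain $x_1,x_2,x_3$. Their determinant in the basis $(d_{12},d_{13},d_{23})$ is $\alpha_1\gamma_2\beta_3+\beta_1\alpha_2\gamma_3>0$, so they do not lie in a plane.

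If the fourth plane contains $d_{12}$ (the other axes are symmetric), it contains neither $d_{13}$ nor $d_{23}$. So it meets the third plane along $x_3$, and by item 1 its triangle is $\{d_{12},x_3,g\}$ with $g=-\mu d_{12}-\nu x_3$, $\mu,\nu>0$. Adding $(\nu\beta_3/\beta_1)x_1+(\nu\gamma_3/\gamma_2)x_2$ to $g$ cancels its $d_{13}$ and $d_{23}$ components and leaves a negative multiple of $d_{12}$. Hence $\{g,x_1,x_2,d_{12}\}$ carries a positive relation of nullity one, which is a forbidden rank-three circuit. The combination $x_1+x_2+x_3$ you propose to try is not what is needed.
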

\begin{proof}
For the first assertion, take \(z\ne0\) on the intersection line and
support-minimal positive representations of \(z,-z\) in each triangle.
Within each triangle their support sizes sum to at least three, since a
positive triangle contains no opposite parallel pair.  The two cross-paired
representations each form a circuit, with at most three roots.  Adding the
two inequalities forces both within-triangle sums to equal three.
Each triangle therefore has a singleton representation of one direction.
These directions must agree; if they opposed one another, the other
cross-pair would contain four roots.

For the second assertion, choose three planes with no common line.  Their
pairwise intersection directions are independent; by the first assertion,
the witnessing triangles can be written, after positive scaling, as
\[
 \{u,v,-au-bv\},\quad \{u,w,-cu-dw\},\quad
 \{v,w,-ev-fw\},\qquad a,b,c,d,e,f>0.
\]
A fourth plane containing none of the axes would have to contain all three
negative edge directions.  Their determinant has absolute value
\(ade+bcf>0\), which excludes a plane.
If the fourth plane contains \(u\), it contains neither \(v\) nor \(w\).
Its intersection with the third plane must be the direction \(-ev-fw\),
so its third triangle root has the form
\(g=-\alpha u+\beta ev+\beta fw\), \(\alpha,\beta>0\).  But
\[
 g+\frac{\beta e}{b}(-au-bv)+\frac{\beta f}{d}(-cu-dw)
 +\left(\alpha+\frac{\beta ea}{b}+\frac{\beta fc}{d}\right)u=0
\]
is a rank-three positive circuit.  The other axes are symmetric.

For the third assertion, the first part gives a common oriented root
\(u\).  Normalize each triangle to
\(u+x_i+y_i=0\), and write
\(x_i=e_i-u/2\), \(y_i=-e_i-u/2\).
A minimal dependence of their quotient directions would be
\(\sum_{i=1}^kt_i e_i=c u\), with \(t_i\ne0\), \(k\geq3\).
Put \(S=\sum_i|t_i|\).  The two choices of signed leaf directions give
positive combinations equal to \((c-S/2)u\) and \((-c-S/2)u\).
At least one coefficient is negative.  Adding a positive multiple of
\(u\) gives a rank-\(k\) circuit, by minimality of the quotient dependence,
a contradiction.

For the last assertion, take three shared-line planes
\(Q_i=\spann(u,e_i)\); their sum \(W\) has dimension four.  Their
seven-root core consists of \(u\) and \(\pm e_i-u/2\).
Suppose a circuit plane \(P\) is disjoint from every \(Q_i\).
If \(P\not\subset W\), then \(P\cap W\) is a line containing no core
root direction.  Both signs of that direction need at least two core
roots, and at least one sign needs two roots in the triangle spanning
\(P\).  Cross-pairing gives a circuit of rank at least three.
If \(P\subset W\), write a triangle root as
\(p=z_0u+\sum_{i=1}^3z_i e_i\).  It has at least two nonzero leaf
coordinates, since \(P\cap Q_i=0\).  A minimal core representation of
\(-p\) must have exactly two parents to avoid a higher-rank circuit.
They are leaves of different arms, forcing
\(z_0=(|z_i|+|z_j|)/2>0\).
This holds for all three roots of the triangle spanning \(P\),
contradicting their positive zero relation.
\end{proof}

Let \(G\) be the finite simple graph whose vertices are the distinct
circuit planes, with an edge exactly when two planes have zero
intersection.
\begin{lemma}[Short cycles give a four-plus-one decomposition]
\label{lem:five-shortcycles}
If \(G\) has a triangle or a four-cycle, every root lies in \(W\cup L\)
for a four-space \(W\) and a complementary line \(L\).  In this case
\(\delta_6\leq1\).
\end{lemma}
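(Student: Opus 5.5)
The proof has two parts. First we show that a triangle or four-cycle in $G$ forces every root into $W\cup L$. Then we bound $\delta_6$ using the four-dimensional five-row input together with one interpolated line.

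\textbf{Geometric part.} Start with a triangle in $G$: three circuit planes $P_1,P_2,P_3$ with pairwise zero intersections. Either $P_1+P_2=\R^4$ is a four-space meeting $P_3$ in a line, or the sum $P_1+P_2+P_3$ has dimension at least five.

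In both cases we test each additional root $g$ against the three witnessing triangles, aiming to exclude $g$ unless it lies in a designated four-space $W$ or on a complementary line $L$. Write $-g$ through a support-minimal nonnegative representation in the core roots; by the observation preceding Lemma~\ref{lem:five-planes}, its positive support is independent. If this support uses roots from two planes, or has three or more independent parents across planes, then cross-pairing with $g$ gives a positive circuit of rank at least three, which is excluded. We also use parts 1 and 4 of Lemma~\ref{lem:five-planes}, which limit how a new circuit plane can meet a family of disjoint ones.

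For a four-cycle $P_1\sim P_2\sim P_3\sim P_4\sim P_1$, the planes $P_1,P_3$ are nonadjacent, and so are $P_2,P_4$. Hence each of these pairs meets in a line or coincides. I would set $W=P_1+P_2$ and take $L$ to be the line contributed by $P_3,P_4$ outside $W$. Then I would run the same exclusion argument: every root off $W\cup L$ would create a rank-three circuit or a forbidden crossing. The remaining conclusion, $\dim W=4$ with $L$ complementary, follows by dimension counting from the normalization \eqref{eq:five-normal}, since the roots must span $\R^5$.

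\textbf{Risk bound.} Once every root lies in $W\cup L$, the identity $\sum\lambda_iv_iv_i^\top=I_5$ splits, because roots lie either in $W$ or on $L$. The $W$-roots and $L$-roots then have orthogonal evaluation second moments, which are the projections onto $L^\perp$ and onto the dual line. The mean-zero relation also splits. Renormalizing the $W$-part, of mass $\lambda(W)$, gives a normalized four-dimensional root system. Apply the four-dimensional five-point law $\Gamma_4(5)=2$ in its weighted form (Appendix~\ref{app:five-transfer}); this gives a five-row fit with normalized excess at most one on that part. On $L$, choose the root of maximal absolute coefficient and interpolate it, as in the independent-line argument; this gives evaluations of absolute value at most one there. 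Combining the two blocks, the cross terms vanish, so the total $\|z\|^2$ is at most the $W$-mass plus the $L$-mass, which is one. The union of the two fits is six rows of full rank, so Lemma~\ref{lem:five-realization}-style positivity, or uniqueness from full span, yields an admissible fit.

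\textbf{Main obstacle.} The hardest step is the exhaustive exclusion for the four-cycle, especially when opposite planes share a line. Here I would lean on part 3 of Lemma~\ref{lem:five-planes}: quotient directions are independent. I would first try to show that the two shared lines coincide with $L$, or else produce a rank-three circuit explicitly.
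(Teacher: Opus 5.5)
Your risk half matches the paper's argument. The two second moments are complementary orthogonal projections because their ranks $4$ and $1$ sum to $I_5$, a point you should state. The geometric half, however, is the real content of the lemma, and it has a genuine gap. The decisive step is that the planes of a graph triangle, or of an induced four-cycle (a chorded one contains a triangle), lie in a single four-space $W$. You never prove this. Your triangle dichotomy is garbled: ``$P_1+P_2$ is a four-space meeting $P_3$ in a line'' \emph{is} the case $\dim(P_1+P_2+P_3)=5$, and that is exactly the case to be ruled out.

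The paper rules it out by cross-pairing on $\ell=P_3\cap(P_1+P_2)$. Disjointness leaves no root of the first two triangles on $\ell$, so each sign of a direction on $\ell$ needs at least two parents from their six-root core. At least one sign also needs two roots of the third triangle, which has no opposite parallel pair. The cross-pair is then a positive circuit with at least four roots, hence of rank at least three. For an induced four-cycle $A_1\sim B_1\sim A_2\sim B_2\sim A_1$, the paper sets $U=A_1+A_2$, $V=B_1+B_2$ and, if $U+V=\R^5$, runs the same count on the line $U\cap V$. Both cross-pairs must have at most three roots. This forces a singleton on each side, putting the line in some $A_i\cap B_j$ and contradicting a cross edge.

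Your four-cycle plan points the other way. You take $L$ to be ``contributed by $P_3,P_4$ outside $W$.'' But a circuit plane not contained in $W$ meets $W$ in a line, so it has at least two independent triangle roots outside $W$, and these cannot lie on one line. The conclusion in fact requires $P_3,P_4\subset W$. Your fallback, showing that $P_1\cap P_3$ and $P_2\cap P_4$ coincide with $L$, is impossible outright: these lines lie in $P_1+P_2=W$, while $L\cap W=0$.

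Your root-exclusion test is also wrong and never reaches the roots that matter. If a minimal representation of $-g$ uses one parent from each of two planes, you get a rank-two circuit, which is permitted in this branch. And for $g\notin W$, the vector $-g$ has no representation in the core of $W$ at all. The paper handles the remaining roots in two steps absent from your proposal.
\begin{enumerate}
\item Every other circuit plane $D$ lies in $W$. Otherwise $D\cap W$ is a line, so $D$ meets at most one triangle plane, or in the four-cycle case at most two planes, both from one part. With suitable original planes, $D$ then forms a graph triangle or induced four-cycle spanning $\R^5$, which is already excluded.
\item A root outside $W$ then lies in no rank-two circuit. Since every root lies in some positive circuit, it lies in a rank-one circuit and has an opposite root. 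Suppose two distinct exterior lines occurred. A positive combination of exterior roots with opposite quotient directions modulo $W$ is a nonzero vector of $W$. Appending a core representation of its negative gives a circuit of rank at least two through an exterior root, contradicting the first step.
\end{enumerate}
This yields the single line $L$, which is complementary because the roots span $\R^5$. Without these steps, and the input that every root lies in a positive circuit, the decomposition $W\cup L$ on which your risk bound rests is not established.
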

\begin{proof}
For a graph triangle, the first two planes span a four-space \(W\).
If the third plane lay outside it, their intersection line would contain
no root of the first two witnessing triangles, by pairwise disjointness.
Its two signs need at least two roots from that core, and at least one
sign needs two from the third triangle.  Their cross-pair gives a circuit
of rank at least three.  Thus all three planes lie in \(W\).

A chorded four-cycle contains a triangle.  For an induced four-cycle,
write its parts as \(A_1,A_2\) and \(B_1,B_2\), and put
\(U=A_1+A_2\), \(V=B_1+B_2\).  These are three-spaces.  If their sum
had dimension five, their intersection would be a line.  Each side has
a positive generating core formed by two triangles with a same-oriented
shared vertex, and hence no opposite parallel roots.  For a direction
on the intersection line, the minimal representation sizes of its two
signs sum to at least three on either side.  Cross-pairing produces two
circuits, each with at most three roots.  Equality is forced in both
sums, so each side has a singleton representation.  The intersection
line would then lie in some \(A_i\) and some \(B_j\), contradicting a
cross-edge.  The four planes therefore lie in a four-space \(W\).

Consider any other circuit plane \(D\not\subset W\).
In the triangle case it meets at most one original plane nontrivially;
with the other two it gives a triangle spanning five dimensions.
In the four-cycle case, if it meets at most one original plane,
choose an adjacent pair among the other three and obtain the same
contradiction.  If it meets two, they must belong to one part of the
cycle, and it meets neither plane of the other part.  Replacing one plane
of the first part by \(D\) gives an induced four-cycle spanning five
dimensions.  Thus every circuit plane is in \(W\).

Every root outside \(W\) must now belong to a rank-one circuit and
therefore has a root on its opposite ray.  If two distinct exterior
lines occurred, choose roots on them with opposite quotient directions
modulo \(W\).  Their positive combination gives a nonzero \(z\in W\).
A minimal positive representation of \(-z\) in the core of \(W\),
combined with these two exterior roots, gives a circuit of rank at least
two containing an exterior root.  This contradicts the preceding result.
Thus all exterior roots lie on one complementary line \(L\).

Let the evaluation masses in \(W,L\) be \(a,b>0\), \(a+b=1\).
Mean zero splits along this direct sum.  Their two positive semidefinite
second moments have ranks four and one and sum to \(I_5\); their
complementary ranges are therefore orthogonal, and the moments are the
orthogonal projections.  In the conditional four-dimensional system,
scaling the roots by \(\sqrt a\) gives unit covariance.  The
four-dimensional five-row theorem of \cite[Theorem~47]{zhang2026exact}
then gives an excess infimum at most \(a\) in the original coordinates.
On \(L\), interpolate a root of greatest absolute coefficient to give
excess at most \(b\).  Combining the two full-rank fits uses at most six
rows and gives excess at most \(a+b=1\).
The positive-weight form of the four-dimensional input is justified in
Subsection~\ref{app:five-transfer} below.
\end{proof}

In the remaining case \(G\) has no triangles or four-cycles.  Neighbors
of a vertex are pairwise intersecting planes.  Such a plane family either
shares a line or lies in a three-space.  Indeed, two planes span a
three-space \(S\) and meet in a line \(\ell\).  A third meeting both
without containing \(\ell\) lies in \(S\).  A plane outside \(S\) meeting
the first two would have to contain \(\ell\), and could then not meet
that third plane.  This proves the dichotomy.
Lemma~\ref{lem:five-planes} bounds the first case by two neighbors and
the second by three.  Consequently \(\Delta(G)\leq3\).

\begin{lemma}[A graph probability]
\label{lem:five-graph}
Every nonempty finite graph with no triangles or four-cycles and maximum
degree at most three has a vertex probability \(\theta\) such that
\begin{equation}
 \sum_{A\sim B}\theta_A-\theta_B\leq\frac15
 \quad\hbox{for every vertex }B.                                \label{eq:five-graphprob}
\end{equation}
\end{lemma}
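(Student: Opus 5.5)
The plan is to build $\theta$ explicitly from the structure of $G$, splitting on whether $\Delta(G)=3$ or $\Delta(G)\leq2$. In both cases the check is the same: for each vertex $B$, compute the quantity $\sum_{A\sim B}\theta_A-\theta_B$.

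First suppose some vertex $v$ has degree three, with neighbors $n_1,n_2,n_3$. Set $\theta_v=2/5$ and $\theta_{n_i}=1/5$, and put zero mass elsewhere; the total is $2/5+3/5=1$. The checks run as follows.
\begin{itemize}
\item At $B=v$: the sum is $3/5-2/5=1/5$.
\item At $B=n_i$: the neighbors of $n_i$ include $v$. No other $n_j$ is adjacent to $n_i$, since that would create a triangle with $v$. So the sum is $2/5-1/5=1/5$.
\item At a vertex $B$ outside $\{v,n_1,n_2,n_3\}$: $B$ is not adjacent to $v$, since $v$ has exactly these three neighbors. $B$ is adjacent to at most one $n_i$, since two such adjacencies would give the four-cycle $B,n_i,v,n_j$. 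So the sum is at most $1/5$.
\end{itemize}

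Next suppose $\Delta(G)\leq2$. Every component is then a path or a cycle. Since there are no triangles or four-cycles, any cycle has length at least five. I pick one component and put all mass on it.
\begin{itemize}
\item Cycle of length $\ell\geq5$: use uniform mass $1/\ell$. Each vertex gets $2/\ell-1/\ell=1/\ell\leq1/5$.
\item Path on $1$ vertex: use $(1)$. The sum is $-1$.
\item Path on $2$ vertices: use $(1/2,1/2)$. The sum is $0$.
\item Path on $3$ vertices: use $(2/7,3/7,2/7)$. The ends get $3/7-2/7=1/7$ and the middle gets $4/7-3/7=1/7$.
\item Path on $4$ vertices: use $(1/6,1/3,1/3,1/6)$. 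The ends get $1/3-1/6=1/6$ and the interior vertices get $1/2-1/3=1/6$.
\item Path on $k\geq5$ vertices: put $1/5$ on five consecutive vertices $x_1,\dots,x_5$ and zero elsewhere. The endpoints $x_1,x_5$ get at most $1/5-1/5=0$. The interior vertices $x_2,x_3,x_4$ get $2/5-1/5=1/5$. A zero-mass neighbor just outside the segment touches one massive vertex and gets $1/5$. Every other vertex gets $0$.
\end{itemize}
Vertices in other components carry no mass and have no massive neighbors, so their sum is $0$.

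No step is a real obstacle; the only delicate points are the exterior vertices in each construction. In the degree-three case, the absence of four-cycles is exactly what bounds the exterior vertices. In the long-path case, I have to verify the vertices adjacent to the ends of the five-vertex segment.
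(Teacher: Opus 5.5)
Your proposal is correct and follows the paper's proof step for step: the star with mass $2/5$ on a degree-three vertex and $1/5$ on each neighbor, with exterior vertices controlled by the absence of four-cycles, and otherwise a single path or cycle component carrying uniform mass on cycles, the same explicit weights on paths with at most four vertices, and mass $1/5$ on five consecutive vertices of a longer path. Your explicit checks at the star's leaves, at the cycle lengths, and at the vertices just outside the five-vertex segment supply the verifications that the paper leaves to direct substitution.
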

\begin{proof}
If a vertex has degree three, assign it mass \(2/5\) and its three
neighbors mass \(1/5\) each.  The left side is \(1/5\) on this star.
An exterior vertex meets at most one positive-mass neighbor, since two
would create a four-cycle, and it does not meet the center.
If the maximum degree is at most two, choose one component.
On a cycle of length \(m\geq5\), use the uniform probability.
For paths on one, two, three, and four vertices, respectively use
\[
 (1),\quad (1/2,1/2),\quad (2/7,3/7,2/7),\quad (1/6,1/3,1/3,1/6).
\]
Their left sides have maxima \(-1,0,1/7,1/6\), respectively.
For a longer path put mass \(1/5\) on each of five consecutive vertices.
Every vertex, including those immediately outside the segment, satisfies
\eqref{eq:five-graphprob}.  All other components have zero mass.
\end{proof}

Choose one actual triangle for each graph vertex as an anchor.
For any rank-two circuit \(B\), set
\[
 \rho_{AB}=2-\dim(P_A\cap P_B)
   =1+\boldsymbol1_{A\sim B}-\boldsymbol1_{P_A=P_B}.
\]
For a rank-one circuit, set \(\rho_{AB}=1-\dim(P_A\cap P_B)\leq1\).
Equation~\eqref{eq:five-graphprob} gives
\(\sum_A\theta_A\rho_{AB}\leq6/5\) for every rank-two circuit.

We now convert this inequality to the original evaluation risk.
For a fixed anchor \(A\), the polytope
\[
 \mathcal P_A=\{z\in P_A^\perp:v_i^\top z\leq1\text{ for all }i\}
\]
is bounded, three-dimensional, and contains zero in its interior.
Express zero as a convex combination of its vertices.  Each vertex
has three active roots independent modulo \(P_A\).  These interpolating
roots and the zero-evaluation triangle anchor produce an actual
positive-weight full-rank six-row fit.

Fix a positive circuit \(B\), with normalized coefficients
\(\beta_i>0\), \(\sum_{i\in B}\beta_i=1\).
Project it modulo \(P_A\), discard zero projections, and let their
remaining coefficient mass be \(m_{AB}\leq1\).  Decompose the resulting
positive zero relation into normalized projected circuits.  Their
mixing masses sum to \(m_{AB}\), and their ranks are at most \(\rho_{AB}\).
For one such circuit with coefficients \(\alpha_i\) and support size
\(s\), set \(T_i=v_i^\top z\) under the mean-zero vertex distribution.
Then \(T_i\leq1\), \(\sum_i\alpha_iT_i=0\) pointwise, and
\(\mathbb ET_i=0\).  Hence
\[
 -(1-\alpha_i)/\alpha_i\leq T_i\leq1,\qquad
 \mathbb ET_i^2\leq(1-\alpha_i)/\alpha_i.
\]
The second inequality follows by averaging
\((T_i+(1-\alpha_i)/\alpha_i)(T_i-1)\leq0\).
Thus \(\mathbb E\sum_i\alpha_iT_i^2\leq s-1\leq\rho_{AB}\).
Reassemble the projected circuits to obtain
\begin{equation}
 \mathbb E_A\sum_{i\in B}\beta_i(v_i^\top z)^2
       \leq m_{AB}\rho_{AB}\leq\rho_{AB}.                         \label{eq:five-realrisk}
\end{equation}
Finally mix anchors according to \(\theta\), and decompose the original
mean-zero probability \(\lambda\) into normalized positive circuits.
Use \eqref{eq:five-realrisk}, the graph bound for rank-two circuits,
and the bound one for rank-one circuits.  The original covariance gives
\begin{equation}
 \mathbb E\|z\|^2
   =\mathbb E\sum_i\lambda_i(v_i^\top z)^2\leq\frac65.             \label{eq:five-lowrisk}
\end{equation}
These are finite distributions on actual fits, so some fit satisfies
the same bound.

If there are no circuit planes, every root has an opposite root, and
their distinct unoriented lines are independent.  Otherwise a minimal
line dependence, oriented using the available opposite roots, would
give a positive circuit of rank at least two.  There are therefore five
independent lines; choose a maximum-absolute-coefficient root on each
and interpolate.  All evaluations have absolute value at most one.
This also gives \(\delta_6\leq1\), and completes the low-rank case.

\subsection{The remaining rank-three case and the global root bound}
\label{app:five-remaining}
Suppose that a rank-three circuit \(A\) exists, that every circuit has
rank at most three, and that the configuration \eqref{eq:five-T} is absent.
First suppose that no seven roots positively generate \(\R^5\).
In the two-dimensional quotient by \(P_A\), a positive generating
triangle would lift, together with the four anchor roots, to a
seven-root positive generating set.  A planar positive generating system
with no such triangle lies on two independent lines: a minimal positive
basis consists of two opposite pairs, and any root off their axes would
form a positive triangle with two suitably oriented axis roots.
Choosing maximum-absolute-coefficient roots on these quotient lines,
and interpolating with zero anchor evaluations, gives \(\delta_6\leq1\).

Otherwise choose a minimal positive generating subset of seven roots.
It has exactly seven roots, since a six-root positive generating set
in dimension five would be a rank-five circuit.  Its relation space
has dimension two, and its nonnegative relation cone is full-dimensional
and pointed in that space, with two extreme rays.  Their circuit
supports cover the seven roots and each has at most four roots.
The possibilities are disjoint supports of sizes four and three, or
two four-root supports sharing one root.  The latter is
\eqref{eq:five-T}, already excluded.  The former gives complementary
spaces \(U,W\) of dimensions three and two, with normalized relations
\begin{equation}
 a_0+a_1+a_2+a_3=0,\qquad b_0+b_1+b_2=0,
 \qquad U\oplus W=\R^5.                                         \label{eq:five-complement}
\end{equation}

For a crossing exceptional root \(g\), write \(-g\) as a nonnegative
combination in each simplex with minimum coefficient zero.  If the
positive support sizes are \(p,q\geq1\), their union is independent;
avoiding a high-rank circuit forces \(p+q\leq3\).
For \((1,1)\), the complementary three A and two B roots positively
generate \(g\), making it good.
For \((1,2)\), unequal B coefficients again make \(g\) good, while equal
coefficients give \(g=-\alpha a_i+\beta b_k\).
Together with \(a_i\) and the other two B roots this is a rank-three
circuit sharing \(a_i\) with the original A circuit and jointly
spanning dimension five, a forbidden shared-vertex configuration.
For \((2,1)\), unequal A coefficients likewise make \(g\) good, so
the remaining form is
\begin{equation}
 g=-\alpha(a_i+a_j)-\beta b_k,\qquad\alpha,\beta>0.                \label{eq:five-cross32}
\end{equation}
It gives a rank-three circuit with span \(\spann(a_i,a_j,b_k)\).

Encode these roots in a bipartite graph.  The three unordered
two-plus-two partitions of the four A indices form one side, and the
three B indices form the other.  Two edges disjoint on both sides
would have A pairs sharing exactly one index.  Their two circuits
would share that real A root and span five dimensions, again giving
\eqref{eq:five-T}.  Thus the occupied edges pairwise intersect and form
a star.  Choose one crossing circuit as anchor.  If the A partition
is fixed, every crossing A component is zero in its quotient, including
the complementary pair because the four A roots sum to zero.
If the B index is fixed, every crossing B component is zero there.
Both original blocks project onto independent lines.  In either case,
all exceptional projections lie on these two lines, giving
\(\delta_6\leq1\).  If there are no crossing exceptional roots,
the \((3,2)\) cover in \eqref{eq:five-cover} gives \(\delta_6\leq6/5\).

The cases now exhaust all root systems: ranks four and five were
handled in Subsection~\ref{app:five-interfaces}, the shared-vertex case
in Subsection~\ref{app:five-shared}, and ranks at most two in
Subsection~\ref{app:five-lowrank}.  Therefore every finite normalized
root system satisfies
\begin{equation}
 \delta_6(V)\leq\frac65.                                         \label{eq:five-globalroot}
\end{equation}

\subsection{Transfer to the original weighted regression problem}
\label{app:five-transfer}
We use the four-dimensional five-point upper bound two from
\cite[Theorem~47]{zhang2026exact}, including its construction by
strictly convex finite training problems or limits of such problems.
Its uniform-evaluation statement extends to strictly positive evaluation
weights as follows.  Approximate them by positive rational probabilities
on the same support, and duplicate rows to obtain uniform datasets.
Duplicate selected rows can be merged without increasing support.
The set of full-rank training outputs is independent of evaluation
weights.  The evaluation covariance has a uniform positive lower
eigenvalue near the original weights, so bounded-risk selected outputs
are bounded.  The changes in their evaluation risks consequently tend
to zero uniformly on this bounded set, as does the change in the
optimal risk.  Taking infima proves the weighted statement.

For observed feature rank four, apply that input after an isometric
restriction.  For rank three use the five-point boundary theorem
already proved in this paper; ranks at most two admit full-span exact
recovery with at most four points.  The zero-optimal-risk case was
settled in the problem formulation.  It remains to consider rank five
and \(L^\star>0\).

Set
\[
 H=\sum_i p_i x_ix_i^\top\succ0,\quad
 \xi_i=H^{-1/2}x_i,\quad
 \eta_i=\frac{y_i-x_i^\top w^\star}{\sqrt{L^\star}},\quad
 z=\frac{H^{1/2}(w-w^\star)}{\sqrt{L^\star}}.
\]
Then
\begin{equation}
 \sum_i p_i\xi_i\xi_i^\top=I_5,\quad
 \sum_i p_i\eta_i\xi_i=0,\quad
 \sum_i p_i\eta_i^2=1,\quad
 \frac{L_D(w)}{L^\star}=1+\|z\|^2.                               \label{eq:five-whitening}
\end{equation}
Every full-rank training problem has a unique solution, so the invertible
change of coordinates respects the original minimum-norm learner.

If some row has \(\eta_0=0\), \(\xi_0\ne0\), project the regression
problem onto \(\xi_0^\perp\).  Its covariance is the identity on this
four-space, its optimum is zero, and its optimal risk is one.
The four-dimensional input provides a strictly convex training objective
\(F\) on at most five projected rows, whose constrained minimizer has
squared norm at most \(1+\epsilon\).
Lift those rows and add row zero with weight \(M>0\), giving
\(F(z)+M(\xi_0^\top z)^2\).  Its Hessian is positive definite since
\(F\) is strictly convex on \(\xi_0^\perp\).  As \(M\to\infty\), its
unique solution converges to the constrained solution.  This follows
also by taking the Schur complement in the normal equations relative
to \(\xi_0^\perp\) and its normal line.  Each finite \(M\) uses at most
six rows with positive weights.  Letting \(M\to\infty\) and then
\(\epsilon\downarrow0\) gives risk-ratio infimum at most two.

We may therefore assume \(\eta_i=0\) implies \(\xi_i=0\).
For \(\eta_i\ne0\), put \(v_i=\xi_i/\eta_i\) and
\(\lambda_i=p_i\eta_i^2\).  Remove the zero roots, corresponding to
pure-residual rows, and write \(a\in(0,1]\) for the mass of the remaining
roots.  Their mean is zero and their second moment is \(I_5\).
The rescaled system
\(\widetilde v_i=\sqrt a\,v_i\), \(\widetilde\lambda_i=\lambda_i/a\)
satisfies \eqref{eq:five-normal}.  Equation~\eqref{eq:five-globalroot}
gives a sequence of six-row full-rank positive-weight outputs, and
\(z=\sqrt a\,\widetilde z\) gives
\[
 \inf\|z\|^2\leq a\frac65\leq\frac65.
\]
Convert each root training weight \(w_i\) to the original sample weight
\(w_i/\eta_i^2\), and normalize all selected weights by their sum.
The selected objective changes only by a positive common factor;
support, positivity, and full rank are preserved.  Thus
\begin{equation}
 \Gamma_5(6)\leq\frac{11}{5}.                                    \label{eq:five-upper}
\end{equation}

\subsection{A matching seven-point construction, including deficient ranks}
\label{app:five-lower}
Let
\[
 x_0=e_1,\quad x_1=e_2,\quad x_2=e_3,\quad x_3=-e_1-e_2-e_3,
 \qquad x_4=e_4,\quad x_5=e_5,\quad x_6=-e_4-e_5.
\]
Give the first four rows evaluation weight \(1/10\) each and the last
three weight \(1/5\) each.  Set
\[
 h=(10,20,40,80,160)^\top,\qquad y_i=x_i^\top h+1.
\]
The weighted row mean is zero, so the full-data optimum is \(h\) and
its risk is one.  The covariance is
\[
 M=\diag\left(\tfrac1{10}(I_3+\boldsymbol1\boldsymbol1^\top),
              \tfrac15(I_2+\boldsymbol1\boldsymbol1^\top)\right),
\]
and the excess risk is \((w-h)^\top M(w-h)\).

A full-rank support of size at most six must use at least three rows
from the first block and at least two from the second.  At least one
block has exactly its dimension \(r\) rows.  Those independent rows
interpolate under any positive training weights.  With \(z=w-h\),
their evaluations are one, and the omitted simplex row evaluates to
\(-r\).  If the block mass is \(a_r\), its excess risk is
\[
 \frac{a_r}{r+1}(r+r^2)=a_r r=\frac65,
 \qquad a_3=\frac25,\quad a_2=\frac35.
\]
The other block has nonnegative excess.  Equality is attained by taking
all four A rows with equal training weights, giving zero block displacement,
and any two B rows, which interpolate.

For a rank-deficient selection, the minimum-norm solution lies in its
selected feature span.  If the three-dimensional block is deficient,
that span is contained in a plane generated by two of its simplex
rows.  Normals of these six planes are coordinate vectors or
differences of coordinate vectors.  The squared distance of
\(h_A=(10,20,40)\) from every such plane is at least
\[
 \min\{100,400,1600,(20-10)^2/2,(40-10)^2/2,(40-20)^2/2\}=50.
\]
Since the A covariance dominates \(I_3/10\), its excess risk is at
least five.  If the B block is deficient, the squared distance of
\(h_B=(80,160)\) from its three possible root lines is at least
\[
 \min\{80^2,160^2,(160-80)^2/2\}=3200.
\]
Its covariance dominates \(I_2/5\), giving excess at least \(640\).
Lower-dimensional selected spans are contained in these planes or
lines and satisfy the same bounds.  Thus all deficient-rank selections
have excess greater than \(6/5\).  The instance has selected-risk ratio
exactly \(11/5\), proving
\(\Gamma_5(6)\geq11/5\).  Together with \eqref{eq:five-upper}, this
establishes the five-dimensional six-point law.

%% file: appendix/upper_half_verified.tex
This appendix proves Theorem~\ref{thm:upper-half}. Throughout, training
weights are nonnegative and sum to one, and only their positive entries
count toward the support. All limits below preserve the selected feature
span at every positive mixture parameter.

\subsection{Certificates and positive spanning sets}
Set $g_i=(x_i^\top\wstar-y_i)x_i$. The full normal equation is
$\sum_i p_i g_i=0$. A nonempty support $B$ is a full-span certificate if
its features span $U$ and some strictly positive coefficients on $B$
balance the $g_i$. Normalizing those coefficients gives an admissible
selection returning $\wstar$. Indeed, the selected and full minimizer
sets are both $\wstar+U^\perp$. A minimum-cardinality certificate exists
because the complete finite dataset is a certificate.

We give the positive-basis ingredients directly. For a finite set $S$,
write $\operatorname{pos}S$ for its nonnegative linear combinations.
A positive basis of a vector space is an inclusion-minimal finite set
whose positive hull is the whole space.

\begin{lemma}[Positive spanning compression and anchors]
\label{lem:uh-positive}
In a finite-dimensional real vector space of dimension $r$:
\begin{enumerate}[leftmargin=*,itemsep=1pt]
\item every nonnegative combination has a representation on a linearly
independent subset;
\item a positive basis has at most $2r$ elements;
\item if $S$ positively spans the space and $C\subseteq S$ positively
spans its own $a$-dimensional span, then some $T\subseteq S$ positively
spans the whole space and satisfies $|T|\leq |C|+2(r-a)$;
\item a positive basis of size $2r-j$, $j\geq1$, contains a positively
balanced simplex of dimension $a$ with $2\leq a\leq j+1$.
\end{enumerate}
\end{lemma}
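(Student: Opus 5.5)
We would prove the four items in the order (1), then (2) and (3) together by induction on $r$, and finally (4) as a counting corollary of the construction behind (2).

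For item (1) we take a representation $v=\sum_{s\in I}c_s s$ with $c_s>0$ whose support $I$ has minimum size. Suppose $I$ were dependent, say $\sum_{s\in I}d_s s=0$ with some $d_s\ne0$. Replacing $d$ by $-d$ if necessary, some $d_s>0$. Then $c-\tau d$ remains a representation of $v$. Increasing $\tau$ until the first coefficient reaches zero keeps all other coefficients nonnegative and shrinks the support. This contradicts minimality, so $I$ is independent.

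For (2) and (3) we induct on $r$; the case $r=0$ is trivial.

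We start with (3). Let $L=\spann C$, with $\dim L=a$, and pass to the quotient $\R^r/L$. The nonzero images of $S\setminus C$ positively span the quotient, which has dimension $r-a$. Every finite positively spanning set contains a positive basis, obtained by deleting elements while the positive span is preserved. By (2) in dimension $r-a$, this positive basis has at most $2(r-a)$ elements; let $T'\subseteq S$ be a choice of preimages.

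We then show that $T=C\cup T'$ positively spans the whole space. Take any $v$. Its image is a nonnegative combination of the images of $T'$. Lifting this combination, $v$ differs from the corresponding nonnegative combination of $T'$ by an element of $L$. That element lies in $\operatorname{pos}C=L$. This lifting step is the one place that needs care: it is where we use that $C$ positively spans all of $L$, not only a cone in it.

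For (2), let $S$ be a positive basis and $x\in S$. Since $-x\in\operatorname{pos}S$, item (1) gives a representation $-x=\sum_{s\in I}c_s s$ with $c_s>0$ and $I$ independent. Then $C=\{x\}\cup I$ is a positively balanced simplex: it positively spans its span $L$, which has dimension $a=|I|\geq1$. Apply (3) to $C$, using (2) in the lower quotient dimension $r-a<r$, so the induction is legitimate. Minimality of $S$ forces $S=C\cup T'$. Hence
\[
|S|\leq a+1+2(r-a)=2r-a+1\leq 2r .
\]

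For (4), let $S$ be a positive basis with $|S|=2r-j$ and $j\geq1$. The count in (2) applies to any such balanced simplex $C\subseteq S$ of dimension $a$. It gives $2r-j\leq 2r-a+1$, so $a\leq j+1$.

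It remains to find such a simplex with $a\geq2$. Suppose instead that every $x\in S$ produces only a one-dimensional simplex, meaning every element has a positive multiple of $-x$ in $S$. Then $S$ is a union of opposite pairs on lines. Minimality forces these lines to be independent: a dependence among the line directions would let us delete a pair while keeping the positive span. They must also span the space, so there are $r$ lines and $|S|=2r$. This contradicts $j\geq1$.

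Therefore some $x\in S$ has no opposite element in $S$. Its simplex $\{x\}\cup I$ from the construction above then has $|I|\geq2$, since $|I|=1$ would make the single element of $I$ a positive multiple of $-x$. This gives a simplex with $2\leq a\leq j+1$. We expect the main obstacle to be the lifting step in (3) together with checking that minimality of $S$ really forces $S=C\cup T'$; the remaining steps are routine perturbation and counting.
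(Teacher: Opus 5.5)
Your proof is correct. Items (1), (3) and (4) follow the paper closely:
\begin{itemize}
\item (1) uses the same support reduction;
\item (3) uses the same quotient-and-lift construction;
\item (4) uses the same count $|S|\leq(a+1)+2(r-a)$, and the same observation that if every simplex is a line, then minimality forces $|S|=2r$.
\end{itemize}

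The real difference is item (2). The paper proves it directly, with no induction. It takes a linear basis $B\subseteq S$ and uses (1) to represent $-\sum_{b\in B}b$ on an independent subset $T\subseteq S$. Then $-b\in\operatorname{pos}(B\cup T)$ for every $b\in B$, so minimality gives $S=B\cup T$ and $|S|\leq2r$. Item (3) then follows from (2) with no ordering issues.

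You instead obtain (2) by applying (3) to the simplex through a generator, inside a dimension induction. This gives you the sharper inequality $|S|\leq 2r-a+1$, for every balanced simplex in $S$, at the same time as (2), so the counting half of (4) comes for free. The cost is some bookkeeping. When $a=0$ in (3), ``(2) in dimension $r-a$'' is (2) in dimension $r$ itself. So inside the induction you may only prove (3) for $a\geq1$ before (2), and must recover the $a=0$ case afterward.

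Two facts you use silently deserve a line each:
\begin{itemize}
\item $x\notin I$, because $I$ is independent and $x\neq0$;
\item $\{x\}\cup I$ positively spans $\spann I$, by adding a large multiple of the strictly positive relation $x+\sum_{s\in I}c_s s=0$.
\end{itemize}

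To get a simplex with $a\geq2$, you pick an $x$ with no opposite element in $S$, so every simplex through it has $|I|\geq2$. This is a slightly tighter phrasing than the paper's, because it avoids the non-uniqueness of the simplex attached to a given generator.
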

\begin{proof}
For the first assertion, a dependence among the currently positive
coefficients permits a coefficient adjustment along a nonzero zero
relation. Orient the relation so that one coefficient is positive,
and subtract the smallest ratio of a current coefficient to a positive
relation coefficient. Nonnegativity and the represented vector are
preserved, and the support strictly decreases. Iteration proves the claim.

Choose a linear basis $B$ from a positive basis $S$. Represent
$-\sum_{b\in B}b$ nonnegatively on an independent subset $T\subseteq S$,
so $|T|\leq r$. Every $-b$ belongs to $\operatorname{pos}(B\cup T)$.
Hence $B\cup T$ positively spans the whole space, and minimality gives
$S=B\cup T$, proving $|S|\leq2r$.

For the third assertion, let $W=\spann C$. The image of $S$ positively
spans the quotient by $W$. Choose a positive basis there, of size at
most $2(r-a)$, and one original representative per quotient vector.
Their nonnegative combinations represent every quotient class. Subtract
such a representative combination from the desired vector; the remainder
lies in $W$ and is a nonnegative combination of $C$. The union has the
required size and positive span. In particular, if $S$ itself is a
positive basis, minimality implies
\begin{equation}
 |S|\leq |C|+2(r-a).
 \label{eq:uh-anchor-card}
\end{equation}

For each $x\in S$, represent $-x$ by a strictly positive combination of
an independent subset $T\subseteq S$. Then $x\notin T$, and
$C=T\cup\{x\}$ is a balanced simplex with dimension $|T|$.
A positive basis contains no zero vector, so $|T|\geq1$.
If every such simplex had dimension one, each generator would have an
opposite positive multiple in $S$. A linear basis and its opposite
partners would positively span the space and would consist of exactly
$2r$ distinct vectors. Minimality would give $|S|=2r$, a contradiction.
Thus some simplex has $a\geq2$. Applying \eqref{eq:uh-anchor-card} with
$|S|=2r-j$ and $|C|=a+1$ gives $a\leq j+1$.
\end{proof}

\begin{lemma}[Full-rank certificate alternatives]
\label{lem:uh-alternatives}
Suppose $U=\R^d$, $d>0$. For a budget $2d-t$, either an exact full-span
selection fits the budget, a minimum certificate has $2d$ active rows,
or there is a balanced anchor $C$ whose feature span has dimension $a$,
with either $(a,|C|)=(1,1)$ or $2\leq a\leq t$, $|C|=a+1$.
\end{lemma}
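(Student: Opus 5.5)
We work with a minimum-cardinality full-span certificate $B$; such a certificate exists because the full dataset is one. By Theorem~\ref{thm:intrinsic-bound}, $d\leq|B|\leq 2d$ when $U=\R^d$. Write $k=|B|$. If $k\leq 2d-t$, the normalized certificate is an admissible selection within budget. By the certificate characterization it returns $\wstar$, which is the first alternative. If $k=2d$, the second alternative holds. So it remains to handle $2d-t<k\leq 2d-1$, and we set $j=2d-k$, so that $1\leq j\leq t-1$.

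On $B$, split the rows into active rows ($g_i\neq0$) and inactive rows ($g_i=0$), and let $r$ be the dimension of the span of the active gradients. Suppose some inactive row lies in $B$. That row alone, with its own positive weight, is trivially balanced, and its feature span is the line through $x_i$, which is one-dimensional because a minimal certificate contains no zero feature (dropping a zero feature keeps both the span and the balance). This gives the anchor $(a,|C|)=(1,1)$. If instead every row of $B$ is active, then the gradients of $B$ admit a strictly positive balancing relation. By the relative-interior fact in \ref{app:geometry-preliminaries}, they positively span their span $V$, and the features of $B$ span $\R^d$, so $V=\R^d$ since $g_i\parallel x_i$. Minimality of $B$ makes its gradient set an inclusion-minimal positive spanning set. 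Indeed, if a proper subset $C'$ positively spanned $\R^d$, then summing representations of $-c$ for $c\in C'$ would give a strictly positive balance on $C'$ whose features span $\R^d$. This is the same argument as in the proof of Theorem~\ref{thm:maximal-rigidity}, and it would produce a smaller certificate. Hence the gradients of $B$ form a positive basis of size $2d-j$ with $j\geq1$.

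We then apply Lemma~\ref{lem:uh-positive}(4). It yields a positively balanced simplex $C\subseteq B$ of dimension $a$ with $2\leq a\leq j+1\leq t$ and $|C|=a+1$. Its gradients are balanced with strictly positive coefficients, and its features span the same $a$-dimensional space as its gradients, because each gradient is a nonzero multiple of its feature. This is the third alternative.

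The main obstacle is the middle step: transferring the support-minimality of $B$ as a certificate into inclusion-minimality of its gradient set as a positive basis. Two points need care there. The first is the claim that the smaller positively spanning subset still has full feature span. This holds because positive spanning of $\R^d$ forces linear spanning, and gradients are parallel to features. The second is the bookkeeping of the mixed case where $B$ contains both active and inactive rows. We avoid that case entirely by taking the $(1,1)$ anchor whenever any inactive row is present.
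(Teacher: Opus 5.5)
Your argument is correct and is essentially the paper's proof: a minimum certificate contains no zero features, a zero-gradient row gives the $(1,1)$ anchor, and otherwise the gradients of $B$ form a positive basis because a smaller positively spanning subfamily would give a smaller certificate, so Lemma~\ref{lem:uh-positive}(4) supplies the simplex anchor with $2\leq a\leq j+1\leq t$. The paper makes two details explicit that you should add: run the zero-gradient test before the cardinality split (or cite Theorem~\ref{thm:maximal-rigidity}) so that the case $|B|=2d$ genuinely has $2d$ \emph{active} rows, and apply minimality to subfamilies of rows so that repeated gradients are excluded, which is what makes the positive basis have exactly $|B|=2d-j$ elements and makes $C$ correspond to exactly $a+1$ rows.
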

\begin{proof}
Choose a minimum certificate $B$. No row in $B$ has zero feature:
removing it would preserve balance and span, and the remaining support
is nonempty because the feature rank is positive. If some $g_i=0$ on
$B$, its nonzero feature is a singleton balanced anchor of dimension one.
Otherwise its gradients and features have the same span. Strict positive
balance makes the gradients positively span $\R^d$. They form a positive
basis with distinct vectors: a smaller positively spanning set, or one
representative for each repeated gradient, would give a smaller
full-span certificate. Lemma~\ref{lem:uh-positive} gives $|B|\leq2d$.
If $2d-t<|B|<2d$, write $|B|=2d-j$, $1\leq j<t$, and use its simplex
anchor. Nonzero collinearity of gradients and features preserves the
anchor dimension and its original row count. The other two cardinality
cases give the stated alternatives.
\end{proof}

\subsection{The maximum-cardinality branch uses the complete dataset}
\label{app:uh-maximal}
The following argument includes gradients outside the chosen certificate.
Suppose a minimum full-span certificate has $2d$ active rows. Their
gradients positively span $\R^d$, so the full gradient set $S$ does too.
Every positively spanning subset of $S$ has at least $2d$ elements:
choose one original row per vector, use strict positive balance, and
observe that spanning gradients imply spanning features.

Consequently every balanced simplex $C\subseteq S$ has dimension at
most one. Otherwise Lemma~\ref{lem:uh-positive}(3) gives a positively
spanning set of size at most $(a+1)+2(d-a)<2d$.
Choose a positive basis $B\subseteq S$. For any nonzero $x\in S$, an
independent positive representation of $-x$ using $B$ gives a simplex
through $x$. It must use a single vector, so $x$ has an opposite
positive multiple in $B$. In particular every element of $B$ has a
partner. Choose a linear basis from $B$ and these partners; their union
positively spans the space, so minimality makes it all of $B$.
Every gradient in the complete dataset therefore lies on one of these
$d$ independent lines.

If a nonzero feature anywhere in the dataset has zero gradient, use
it as a singleton anchor in the induction below. In the remaining case,
every nonzero feature lies on the same independent lines. Write
$x_i=\alpha_i v_{j(i)}$, with $v_1,\ldots,v_d$ a linear basis, and
$r_i=x_i^\top\wstar-y_i$. For the nonzero rows on line $j$, put
\[
 H_j=\sum_{i:j(i)=j}p_i\alpha_i^2,\qquad
 R_j=\sum_{i:j(i)=j}p_i r_i^2.
\]
The basis coordinates $w\mapsto(v_j^\top w)_j$ are invertible, even
when the lines are not orthogonal. The full normal equation balances
each line separately. Two original rows suffice to keep its coordinate
unchanged: use one zero-gradient row if present, or balance two
oppositely signed scalar gradients with positive weights.

One row can set the displacement coordinate to $u_j=-r_i/\alpha_i$.
Its smallest excess $\delta_j$ obeys $0\leq\delta_j\leq R_j$, because
the average of $H_j(r_i/\alpha_i)^2$ under probabilities
$p_i\alpha_i^2/H_j$ is exactly $R_j$. Compress the $t$ least costly
lines to one row and keep at most two rows on each other line.
Their union spans $\R^d$, their local normal equations combine, and
normalizing all weights leaves the minimizer unchanged. Thus
\begin{equation}
 |\supp(q)|\leq2d-t,\qquad
 L_D(\widehat w(q))-\LD
 =\sum_{j\text{ compressed}}\delta_j
 \leq\frac td\sum_j\delta_j
 \leq\frac td\sum_jR_j\leq\frac td\LD.
 \label{eq:uh-lines}
\end{equation}
Zero features contribute only a nonnegative constant to the evaluation
loss. Every line retains a nonzero feature, so this construction returns
the required minimum-norm fit.

\subsection{Balanced anchors and an actual positive-weight limit}
\label{app:uh-lift}
Let $U=\R^d$ and let normalized weights $\beta$ on an anchor $C$
balance the gradients at $\wstar$. Write $W=\spann\{x_i:i\in C\}$,
$E=W^\perp$, and let $P,Q$ be the respective orthogonal projections.
The projected instance on $E$ has features $Qx_i$, responses $-r_i$,
and the original evaluation weights $p_i$. Its full optimum is zero,
its features span $E$, and its optimal risk is $\LD$.

Suppose a projected selection $\gamma$ spans $E$ and returns $z\in E$.
Set $A=\sum_i\beta_i x_ix_i^\top$,
$B=\sum_i\gamma_i x_ix_i^\top$, and $b=\sum_i\gamma_i r_ix_i$.
Then $Q(Bz+b)=0$. For $0<\epsilon<1$, use
\[
 q_\epsilon=(1-\epsilon)\beta+\epsilon\gamma.
\]
Weights on overlapping indices are added. The support is the union,
has size at most $|C|+|\supp\gamma|$, and spans $W\oplus E$.
To control its solution, define endomorphisms
\begin{equation}
 M_\epsilon=(1-\epsilon)A+\epsilon PB+QB,
 \qquad f_\epsilon=-(\epsilon P+Q)b.
 \label{eq:uh-limit-system}
\end{equation}
The limiting map $M_0=A+QB$ is invertible. If $M_0u=0$, projection
onto $W$ gives $Au=0$, hence $u\in E$. It also gives $QBu=0$.
Thus $u^\top Bu=u^\top QBu=0$, so $u$ is orthogonal to the secondary
features as well as the anchor features. Their union spans the whole
space, forcing $u=0$.

Since $Az=0$ and $Q(Bz+b)=0$, we have $M_0z=f_0$. Continuity of
matrix inversion gives $u_\epsilon=M_\epsilon^{-1}f_\epsilon\to z$.
Moreover,
\[
 (P+\epsilon Q)M_\epsilon=(1-\epsilon)A+\epsilon B,
 \qquad (P+\epsilon Q)f_\epsilon=-\epsilon b.
\]
These are precisely the selected normal equations for
$\wstar+u_\epsilon$. The selected Gram matrix is positive definite,
so this point is the unique selected minimizer. Invertibility of
$M_\epsilon$ also follows from this identity for every
$0<\epsilon<1$. Finally,
\[
 L_D(\wstar+z)=\sum_i p_i(r_i+x_i^\top z)^2=\bar L(z).
\]
For every $\zeta>0$, continuity therefore gives a legal full-span
selection with risk less than $\bar L(z)+\zeta$.

\subsection{Dimension induction and deficient ranks}
\label{app:uh-induction}
First prove, for every $d,t\geq0$ with $2t\leq d$, the full-rank
statement: for every $\eta>0$, some selection has full span, budget
$\max\{1,2d-t\}$, and risk at most $(1+t/d)\LD+\eta$.
At $d=t=0$ use one sample; all features and the minimum-norm solution
are zero. The convention for this auxiliary case is $t/d=0$.
Proceed by strong induction on $d$.

If any nonzero feature has zero gradient, use it as a singleton anchor.
The projected parameters are $(d-1,(t-1)_+)$, where $(t-1)_+=\max\{t-1,0\}$.
They satisfy the induction condition and
\[
 1+\max\{1,2(d-1)-(t-1)_+\}\leq\max\{1,2d-t\},
 \qquad \frac{(t-1)_+}{d-1}\leq\frac td.
\]
For $d=1$, necessarily $t=0$; the quotient is the already settled
zero-dimensional case and the total budget is two. Use tolerance
$\eta/2$ in the projected problem and $\eta/2$ in the lifting lemma.

Otherwise apply Lemma~\ref{lem:uh-alternatives}. An exact certificate
within budget gives the claim immediately. A $2d$-row minimum
certificate is handled by \eqref{eq:uh-lines}. In the remaining case,
the anchor has dimension $2\leq a\leq t$ and $a+1$ rows. Set
\[
 d'=d-a,\qquad t'=t-a+1.
\]
Then $d'<d$, $1\leq t'<t$, $2t'\leq d'$, and
\[
 (a+1)+(2d'-t')=2d-t,
 \qquad \frac{t'}{d'}\leq\frac td.
\]
For the latter inequality, cross multiplication reduces to
$d(a-1)\geq ta$, which follows from $d\geq2t$ and $a\geq2$.
Apply the induction hypothesis on the complete projected dataset and
lift, again dividing the tolerance equally. The auxiliary case $t=0$
closes as well: without a singleton anchor, a minimum active certificate
has at most $2d$ rows and is exact within budget.

Now let the actual feature rank be $m\leq d$ and put $n=2d-t$.
Rank zero is exact with one selected row. For $m>0$, identify the
feature span with $\R^m$ and set $k=(2m-n)_+$. Direct integer arithmetic
gives
\[
 2k\leq m,\qquad \max\{1,2m-k\}\leq n,
 \qquad \frac{k}{m}\leq\frac td.
\]
The full-rank result in dimension $m$ therefore proves the same bound.
For completeness, the minimum-norm solution belongs to the feature
span: its orthogonal projection has identical predictions and smaller
norm unless the orthogonal component is zero. The selected span equals
the full feature span, so the same argument applies after selection.
This justifies both coordinate reduction and return to the original
minimum-norm convention. We have proved the additive bound for every
$\eta>0$; take $\eta=\varepsilon\LD$ to obtain the upper bound in
Theorem~\ref{thm:upper-half}.

\subsection{Matching lower bound, including omitted coordinates}
Take the uniform $2d$-row dataset $(e_j,1),(e_j,3)$, $j\in[d]$.
Its full solution is $(2,\ldots,2)$ and $\LD=1$. Every output $u$ has
\[
 L_D(u)=1+\frac1d\sum_{j=1}^d(u_j-2)^2.
\]
If a coordinate retains two, one, or zero rows, its least possible
squared displacement is respectively $0$, $1$, or $4$. In the last
case the minimum-norm rule forces $u_j=0$. If $A$ coordinates retain
one row and $B$ retain none, the budget gives $A+2B\geq t$ and hence
$L_D(u)\geq1+(A+4B)/d\geq1+t/d$. Retain one endpoint on exactly $t$
coordinates and both endpoints on the others. Equal weights within
each retained pair, followed by normalization of all weights, attain
$1+t/d$. This proves the matching sharpness and completes the theorem.

\subsection{Formal theorem and dependency map}
\label{app:uh-formal}
The supplied project uses Lean~4.19.0 and mathlib commit
\texttt{c44e0c8ee63ca166450922a373c7409c5d26b00b}.
Its declaration
\begin{verbatim}
UpperHalf.certifiedUpperHalfLaw : UpperHalf.SharpUpperHalfLaw
\end{verbatim}
combines a universal, arbitrary-rank, normalized-selection upper bound
with a concrete dataset attaining the matching lower bound. Its
definitions include the evaluation risk, support size, selected feature
span, and both minimization requirements in the minimum-norm learner.
The equality for $\Gamma_d$ follows from these two dataset-level
statements and \eqref{eq:gamma}.

\begin{center}
\small
\begin{tabularx}{\linewidth}{@{}lX@{}}
\toprule
Lean module & Mathematical role \\
\midrule
\texttt{NontrivialAnchor} & Direct positive-basis simplex-anchor proof \\
\texttt{MaximalRegression} & Independent-line structure for the complete dataset \\
\texttt{LineCompression} & Legal $2d-t$-row selection and sharp line-risk bound \\
\texttt{AnchorLift}, \texttt{CoordinateAnchor} & Actual weighted regression limit from the projected instance \\
\texttt{IntrinsicReduction} & Deficient-rank coordinates and the minimum-norm rule \\
\texttt{FullRankInduction} & Closed dimension induction, including the zero-gap case \\
\texttt{Sharpness}, \texttt{Certified} & Attaining lower bound and the complete original-data theorem \\
\bottomrule
\end{tabularx}
\end{center}
Run \texttt{setup.ps1} from the formal project, or, with dependencies
installed, run \texttt{lake build} and
\texttt{lake env lean -DwarningAsError=true FullTheoremCheck.lean}.
The complete theorem's axiom audit returns only
\texttt{propext}, \texttt{Classical.choice}, and \texttt{Quot.sound}.
The formalization covers Theorem~\ref{thm:upper-half}; the supplied
symbolic checks for Theorem~\ref{thm:five-six} are documented separately
in Appendix~\ref{app:verification}.

%% file: appendix/landscape.tex
\subsection{A basis facet for one block}
Consider $a+1$ rows spanning $\R^a$, with positive evaluation weights,
positive optimal loss, and every $a$ rows independent. Scale by the square
roots of the weights, whiten the Hessian, and normalize the residual to
obtain $B^\top B=I_a$, $B^\top e=0$, and $\|e\|=1$. Since $B$ has
$a+1$ rows, $BB^\top=I-ee^\top$. Write $b_i^\top$ for its rows.
Choose $i$ maximizing $e_i^2$, so $e_i^2\geq1/(a+1)>0$.
Deleting row $i$ gives normal matrix $I-b_ib_i^\top$ and displacement
of squared norm
\[
 \frac{e_i^2\|b_i\|^2}{(1-\|b_i\|^2)^2}
 =\frac{1-e_i^2}{e_i^2}\leq a.
\]
Here Sherman--Morrison applies because $1-\|b_i\|^2=e_i^2>0$.
Thus a basis facet has risk at most $(a+1)$ times the block optimum.
Its unique interpolant is independent of the positive training weights.
For zero optimal loss, any basis facet is exact.

\subsection{Sharp translated block constructions}
Fix a partition $d=a_1+\cdots+a_k$ and positive masses $\ell_j$ summing
to one. In mutually orthogonal subspaces use the feature rows
\[
 e_{j,1},\ldots,e_{j,a_j},\quad-\sum_{r=1}^{a_j}e_{j,r},
 \qquad p_{j,r}=\frac{\ell_j}{a_j+1}.
\]
Choose $v$ outside every proper span of a subset of these finitely many
rows and set $w_T=Tv$, $y_i=x_i^\top w_T+1$. Then $\wstar=w_T$,
$\LD=1$, and $L_D(w)=1+\|w-w_T\|_H^2$, where
$H=\sum_i p_ix_ix_i^\top\succ0$. All complete blocks are balanced.
Every basis facet in block $j$ interpolates its selected rows and has
block loss $(a_j+1)\ell_j$, hence excess $a_j\ell_j$.

Any full-rank support spends at least $a_j$ rows in each block. At budget
$d+s$ it completes at most $s$ blocks, giving minimal excess
\begin{equation}
 \min_{\substack{J\subseteq[k]\\|J|=s}}\sum_{j\notin J}a_j\ell_j
 \quad(s<k).
 \label{eq:landscape-allocation}
\end{equation}
For a deficient support $S$, the minimum-norm output belongs to its
proper feature span $U_S$. The finite family of such spans satisfies
$\rho=\min_S\dist_H(v,U_S)>0$. Its risk is at least $1+T^2\rho^2$.
A sufficiently large finite $T$ makes this exceed every candidate in
\eqref{eq:landscape-allocation}. Thus that formula is also exact when
the selection may lose rank. Taking
\[
 c=\left(\sum_j a_j^{-1}\right)^{-1},\qquad \ell_j=c/a_j
\]
makes every unfinished block cost $c$, proving the general lower bound
in Proposition~\ref{prop:landscape}. The constructed residual-gradient
atoms form positive circuits, so the examples belong to the stated class.

\subsection{Optimization over block losses}
For a fixed partition define
\[
 \Psi_s(a)=\max_{\ell\in\Delta_k}
       \min_{|J|=s}\sum_{j\notin J}a_j\ell_j.
\]
For $s\geq k$ put $\Psi_s=0$. For $s<k$, the sum of the $k-s$
smallest coordinates of $c_j=a_j\ell_j$ is
$\min_{0\leq\alpha_j\leq1,\,\sum\alpha_j=k-s}\sum\alpha_jc_j$.
Finite-dimensional minimax on these compact polytopes gives
\[
 \Psi_s(a)=\min_{0\leq\alpha_j\leq1,\,\sum\alpha_j=k-s}
             \max_j a_j\alpha_j
 =\max_{\substack{R\subseteq[k]\\|R|>s}}
       \frac{|R|-s}{\sum_{j\in R}a_j^{-1}}.
\]
Indeed, a candidate $t$ is feasible in the minimum exactly when
$\sum_j\min\{1,t/a_j\}\geq k-s$. For each subset $R$ this requires
$|R^c|+t\sum_{j\in R}a_j^{-1}\geq k-s$; conversely, using
$R=\{j:a_j>t\}$ proves sufficiency (the case $|R|\leq s$ is immediate).

For arbitrary blocks in the stated class, complete the $s$ blocks with
largest $a_jL_j^\star$ and use the preceding basis-facet bound on every
other block. The resulting selection has full rank, at most $d+s$ rows,
and excess ratio at most $\Psi_s(a)$. Orthogonality makes these block
solutions compatible with the global minimum-norm learner.
The translated construction attains the same worst-case value in the
supremum: approximate any zero maximizing masses by positive masses.

Finally, an active subset $R$ uses at most $d$ dimensions. Distributing
all remaining dimensions among its blocks decreases their reciprocal
sum and increases its candidate value. Hence maximizing $\Psi_s(a)$
over all partitions of $d$ is exactly $\Phi_{d,s}$. This proves the
structured equality in Proposition~\ref{prop:landscape}.

\subsection{Evaluating the frontier}
For fixed $k$, replacing integers $a\geq b+2$ by $a-1,b+1$ strictly
decreases $1/a+1/b$. Thus the optimal partition is balanced. With
$d=qk+r$, $0\leq r<k$, this gives
\begin{equation}
 \Phi_{d,s}=\max_{s<k\leq d}
   \frac{k-s}{(k-r)/q+r/(q+1)}.
 \label{eq:landscape-balanced}
\end{equation}
For $s=1$ and $k\geq3$, Cauchy--Schwarz bounds the candidate by
$d(k-1)/k^2\leq2d/9$. For $d\geq3$ this is at most
$(d^2-1)/(4d)\leq\lfloor d^2/4\rfloor/d$. The two-block balanced
partition attains $\lfloor d^2/4\rfloor/d$. For $d=2$ only $k=2$
is available. This proves the stated formula for $\Phi_{d,1}$.
At $(d,s)=(5,1)$ the optimal partition is $3+2$, with masses $2/5,3/5$.

\subsection{Sparse weighted orthogonalization as the next target}
\label{app:matrix-target}
Fix full observed rank $d$, write $H=\sum_i p_ix_ix_i^\top$, and set
\[
 B_i=\sqrt{p_i}\,x_i^\top H^{-1/2},\qquad
 e_i=\sqrt{p_i}(y_i-x_i^\top\wstar)/\sqrt{\LD}.
\]
Then $B^\top B=I_d$, $B^\top e=0$, $\|e\|=1$.
For a full-rank training selection $q$, let
$\Lambda=\diag(q_i/p_i)$. Its normalized risk is exactly
\begin{equation}
 \frac{L_D(\widehat w(q))}{\LD}
 =1+\left\|(B^\top\Lambda B)^{-1}B^\top\Lambda e\right\|^2.
 \label{eq:landscape-matrix}
\end{equation}
Conversely, any nonzero nonnegative diagonal $\Lambda$ with positive
definite $B^\top\Lambda B$ defines the same fit by normalizing
$q_i\propto p_i\Lambda_{ii}$. This transformation keeps every row,
including zero residuals.

The approximation form $\mathsf W_\varepsilon(d,s;c)$ asks that, for
every such finite $(B,e)$ and every $\varepsilon>0$, there exist
nonnegative diagonal $\Lambda$ with at most $d+s$ positive entries,
$B^\top\Lambda B\succ0$, and
\[
 \left\|(B^\top\Lambda B)^{-1}B^\top\Lambda e\right\|^2
 \leq c+\varepsilon.
\]
By \eqref{eq:landscape-matrix}, this is equivalent to the universal
full-rank, span-preserving selected-risk infimum bound $1+c$.
For the converse representation of arbitrary $(B,e)$, take
$p_i=1/N$, $x_i=\sqrt N B_i^\top$, and $y_i=\sqrt N e_i$.
The exact-attainment version is the stronger target with
$\varepsilon=0$ required for a single admissible selection.

The general block prediction suggests $c=\Phi_{d,s}$. To cover ambient
dimension $d$ including intrinsic ranks $r<d$, one can seek the same
matrix bound in each intrinsic rank with budget $n=d+s$ and the ambient
constant $\Phi_{d,s}$; ranks with $n\geq2r$ already admit exact
certificates. This states the remaining problem with its full budget
and rank requirements. Theorem~\ref{thm:upper-half} supplies the bound
$c=(2d-n)/d$ throughout $\lceil3d/2\rceil\leq n\leq2d-1$.
At $(5,6)$, the span-preserving assertion of
Theorem~\ref{thm:five-six} supplies the bound $c=6/5$.

%% file: appendix/verification.tex
\subsection{Reproduction and scope}
The anonymous source package includes
\path{verification/verify_exact.py}. With Python~3.12 and SymPy~1.13.2,
run the following from the package root:
\begin{verbatim}
python -m pip install -r verification/requirements.txt
python verification/verify_exact.py
\end{verbatim}
The script uses exact rational arithmetic and symbolic identities. It
writes \path{verification/exact_check_results.json}, including the
hashes of the script and the five-dimensional proof source. It requires
no dataset, network connection after installation, or access to a prior
draft. The reported checks cover algebraic identities, selected symbolic
ranks, a finite rational grid for one boundary-rank diagnostic, all
nonempty supports of size at most six in the seven-point example, and
the explicit path/cycle/star probability constructions. The script is
not a formal proof assistant and does not test all root configurations.

The lower-bound support check exhausts all 126 nonempty supports: 19
have full rank and 107 have deficient rank. On the full-rank supports it
checks the block counts used in the analytic argument; on every deficient
support it computes the minimum evaluation-norm squared distance from the
translation vector to the selected span. The smallest such distance is
five. It also solves the stated attaining full-rank selection and obtains
the ratio $11/5$ exactly. The analytic interpolation argument in
Appendix~\ref{app:five-lower}, rather than a finite sample of training
weights, covers arbitrary positive weights on each full-rank support.

\subsection{Algebra-to-proof mapping}
The diagnostic identifiers are stable local equation identifiers used by
the checker. Their correspondence with the paper is as follows.
\begin{center}
\small
\begin{tabularx}{\linewidth}{@{}lX@{}}
\toprule
Checker identifiers & Mathematical object in this paper \\
\midrule
9.1 & Same-type crossing relation, \eqref{eq:five-AA} \\
9.2 & Same-index mixed relation, \eqref{eq:five-ABsame} \\
9.3 & Cross-index positive representation, \eqref{eq:five-ABcross} \\
9.4 & Type-C/type-A cross relation, \eqref{eq:five-CAcross} \\
9.5 & Shared-index circuit relation, \eqref{eq:five-CAsame} \\
12.2--12.8 & Block-restriction representations following
\eqref{eq:five-blockrestriction} \\
13.1--13.3 & Remaining type-C relations,
\eqref{eq:five-C01}--\eqref{eq:five-C22} \\
Support checks & Seven-point construction,
Appendix~\ref{app:five-lower} \\
Graph checks & Explicit probabilities in Lemma~\ref{lem:five-graph} \\
\bottomrule
\end{tabularx}
\end{center}
Generic symbolic rank calculations do not by themselves establish rank
at every positive specialization. The paper gives the structural rank
and positivity arguments; the finite boundary-rank grid is explicitly
diagnostic. Likewise, the graph checks evaluate the listed probabilities
but do not replace the circuit-plane intersection lemmas or the transfer
from graph probabilities to actual regression risk.

\subsection{Dependency index}
For the upper bound, Appendix~\ref{app:five-interfaces} establishes the
training interface and comparison moment; Appendix~\ref{app:five-polar}
proves the active-space and polar-slice restrictions;
Appendix~\ref{app:five-shared} closes the shared-root rank-three case;
Appendix~\ref{app:five-lowrank} treats the rank-two plane graph;
Appendix~\ref{app:five-remaining} closes the other rank-three case;
and Appendix~\ref{app:five-transfer} handles all regression degeneracies.
The only lower-dimensional interior input is the five-point upper bound
in four dimensions from \citet[Theorem~47]{zhang2026exact}.
The dimension-three boundary input is proved in Appendix~\ref{app:proofs}.
The exact lower bound is in Appendix~\ref{app:five-lower}.